\documentclass[]{interact}

\usepackage[natbibapa,nodoi]{apacite}
\setlength\bibhang{12pt}

\bibliographystyle{apacite}

\theoremstyle{plain}
\newtheorem{theorem}{Theorem}[section]
\newtheorem{lemma}[theorem]{Lemma}

\newtheorem{proposition}[theorem]{Proposition}
\theoremstyle{definition}

\theoremstyle{remark}
\newtheorem{remark}{Remark}

\usepackage{graphicx}
\graphicspath{{./Figures_1/}}
\usepackage{subfigure}
\usepackage{psfrag}
\usepackage{float}
\usepackage{xcolor}
\usepackage[normalem]{ulem}
\usepackage{amsmath,amsfonts}
\newtheorem{Assumption}{Assumption}

\begin{document}

\title{Kinematic control design for wheeled mobile robots with
  longitudinal and lateral slip}

\author{
\name{Thiago~B. Burghi\textsuperscript{a}
	\thanks{\textsuperscript{a}Email: tbb29@cam.ac.uk
							 (corresponding author)}, 
	Juliano~G. Iossaqui\textsuperscript{b}
	\thanks{\textsuperscript{b}Email:
	julianoiossaqui@utfpr.edu.br} and 
	Juan~F. Camino\textsuperscript{c}
	\thanks{\textsuperscript{c}Email:
	camino@fem.unicamp.br}}
\affil{\textsuperscript{a}Department of Engineering, University
of Cambridge, Cambridge, UK; \textsuperscript{b}Technological
Federal University of Paran\'a --- UTFPR, Londrina, PR, Brazil;
\textsuperscript{c}School of Mechanical Engineering, University
of Campinas ---  UNICAMP, Campinas, SP, Brazil.}
}

\maketitle

\begin{abstract}
The motion control of wheeled mobile robots at high speeds under
adverse ground conditions is a difficult task, since the robots'
wheels may be subject to different kinds of slip. This work
introduces an adaptive kinematic controller that is capable of
solving the trajectory tracking problem of a nonholonomic mobile
robot under longitudinal and lateral slip. While the controller 
can effectively compensate for the longitudinal slip, the 
lateral slip is a more involved problem to deal with, since
nonholonomic robots cannot directly produce movement in the
lateral direction. To show that the proposed controller is still
able to make the mobile robot follow a reference trajectory
under lateral and longitudinal time-varying slip, the solutions
of the robot's position and orientation error dynamics are shown
to be uniformly ultimately bounded. Numerical simulations are
presented to illustrate the robot's performance using the
proposed adaptive control law.
\end{abstract}

\begin{keywords}
Mobile robots; wheel slip; kinematics; trajectory tracking
problem ; nonlinear control
\end{keywords}


\section{Introduction}
In recent years, the development of control design techniques
for mobile robots has received considerable attention. Research
in this
area is justified by the increasing relevance of mobile robotics in
socioeconomic activities such as forestry, mining, agriculture, search
and rescue, medicine, housework, industry and space exploration
\citep{Nourbakhsh:2004:IAM,Klancar:2017:WMR}.  The efficient
deployment of mobile robots in these activities requires a solution
for the robot motion control problem, which in itself presents
interesting theoretical challenges
\citep{Morin:2006:CTT,Wu:2009:RAC,Urakubo:2015:FSN,Ibrahim:2019:PFA}.
Mobile robot controllers must deal not only with fundamental
nonholonomic constraints in the robot model, but also with practical
issues such as actuator saturation, sensor measurement noise, and the
wheel slip phenomenon.

In this context, reference tracking for nonholonomic
differential-drive mobile robots subject to wheel slip is an important
control problem.  In the ideal scenario in which the slip does not
occur, this is a well-known problem
\citep{Morin:2006:CTT,Wu:2009:RAC,Zhai:2019:ASM}.  In particular,
controller designs for reference tracking based on Lyapunov theory
have been successful in tackling the motion problem in the absence of
wheel slip
\citep{Fukao:2000:ATC,Jarzebowska:2008:APM,Panahandeh:2019:KLB}.
However, the performance of such controllers significantly
deteriorates whenever wheel slippage occurs, leading in many cases to
system instability. One solution to this issue is to explicitly
incorporate the slip effects into the controller design.

Motion control design methods \citep{Fierro:1997:CNM,Fukao:2000:ATC}
for wheeled mobile robots can be based on either kinetic or kinematic
models \citep{Wang:2008:MAS,Sidek:2008:DMC}. Wheel slip, which can be
longitudinal and lateral, is treated differently in each of these
modeling paradigms. In kinetic models, which relate robot pose
accelerations to wheel torques, the modeling of the slip phenomenon is
in general complex, as it must deal with factors such as wheel
temperature, thread pattern, and camber angle. One approach to 
overcome these issues is to employ empirical or semi-empirical methods
to model traction forces. A simpler approach that avoids the
complexities of robot kinetics is to work directly with kinematic
models, which relate robot pose speeds to wheel angular velocities.
In this case, the slip is taken into account by modeling it as a
perturbation in the wheel velocities
\cite{Wang:2008:MAS}.

When approaching the kinematic reference tracking control problem, one
is faced with the question of how to precisely measure the wheel slip
so that it can be incorporated into the control law.  A popular design
strategy consists in uncoupling the problem of reference tracking from
that of measuring or estimating the slip
\citep{Zhou:2007:UBE,Ward:2008:DMB,Moosavian:2008:ESE,Michalek:2009:TTM,Cui:2014:ATC,Chen:2019:DAT,Gao:2014:AMC}. 
In this strategy, a reference tracking controller that depends
explicitly on the slip perturbation is first designed.  Then, an
estimator or observer is independently designed to provide an estimate
of the slip.
Since the tracking and estimation problems are
posed separately, a closed-loop stability analysis is not in general
pursued; instead, performance and stability are assessed
experimentally.
This was done by
\cite{Zhou:2007:UBE,Ward:2008:DMB,Michalek:2009:TTM} using
 variants of the Kalman filter, and by \cite{Cui:2014:ATC} using a
sliding mode observer. In \cite{Chen:2019:DAT}, a
disturbance-observer-based control law is proposed, but it is built on
an auxiliary variable that depends on the slip, which cannot be
directly measured.

In contrast with this uncoupled design strategy, the papers
\cite{Iossaqui:2011:NCD,Iossaqui:2013:WRS,Li:2016:ACU} 
introduce adaptive Lyapunov-based  kinematic
control laws which are proven to asymptotically stabilize the 
closed-loop robot pose error dynamics for the particular case of
constant longitudinal slip and no lateral slip. 
While these works provide a stability analysis of the
problem, they neglect the possibility that slip perturbations are
time-varying. For the particular case of time-varying lateral slip,
but neglecting longitudinal wheel slip, this issue has recently been
given attention by \cite{Ryu:2010:DFB}, who designed a differential
flatness-based controller based on the kinetic model, 
\cite{Thomas:2019:FTP}, who designed a kinematic sliding mode
controller, and \cite{Shafiei:2019:DRT}, who designed a kinematic
$H_\infty$ controller. In these works, the controllers are proven to
drive the tracking error to a neighborhood of the origin.

This work now significantly extends the previous conference paper
\citep{Iossaqui:2013:WRS} by considering both longitudinal and lateral
slip, by allowing the slip to be time-varying, and by proving
ultimate boundedness of slip estimates for a large class of time-varying
reference trajectories. To our knowledge, this is the first time the
closed-loop stability properties of a smooth kinematic controller for
differential-drive robots is analyzed in all this generality. Our main
result proves that, as long as the reference trajectory and
longitudinal slip are slowly varying, and the lateral slip is small,
the proposed adaptive kinematic control law, based only on the robot
pose, ensures that the tracking error and the unknown longitudinal
slip estimation error are ultimately bounded by a small bound.
Important advantages of the proposed control law are its smoothness
and simplicity; the former avoids issues such as chattering while the
latter ensures it is easily implemented in practice. Our approach
relies on the analysis of the stability properties of the closed-loop
system with respect to nonvanishing perturbations caused by time
variations in the longitudinal and lateral slip, and can be readily
extended to other controller designs found in the literature.

This paper is organized as follows. Section~\ref{sec:kinematic_model}
presents the kinematic differential equations that describe the motion
of a wheeled mobile robot with longitudinal and lateral wheel slip.
Section~\ref{sec:adaptive_control_strategy} introduces the adaptive
tracking controller that is able to compensate for the slip with a
proof of closed-loop stability. Section~\ref{sec:numerical_results}
presents numerical results using the proposed adaptive controllers.

\section{Kinematic model of a wheeled mobile robot}\label{sec:kinematic_model}
This section presents the kinematic model of the wheeled mobile
robot, shown in Figure~\ref{fig:frames_robot}, with longitudinal
and lateral slip
\citep{Zhou:2007:UBE,Gonzales:2009:ACM,Iossaqui:2011:NCD}.
It is assumed that the robot can be represented by a rigid body with
two independently actuated wheels. The pose of the robot is described
by its position $(x_{\rm{p}},y_{\rm{p}})$ and its orientation
$\theta_{\rm{p}}$ in an inertial coordinate frame $F_0(x_0,y_0)$. The
robot position is given by the coordinate of its geometric center $O$,
which is also the origin of the local coordinate frame
$F_1(x_1,y_1)$. The robot orientation is the angle between the
$x_0$-axis and the $x_1$-axis. The spacing between the centerlines of
the two wheels is $b$. The motion of the robot is described by the
rotational velocity $\omega=\dot{\theta}_{\rm{p}}$ and the
translational velocity $v$. The translational velocity can be
decomposed into the forward velocity $v_x$ in the $x_1$-axis and the
lateral velocity $v_y$ in the $y_1$-axis. The angle between $v$ and
$v_x$ is $\alpha$, which is zero in the absence of lateral slip.

\begin{figure}[ht]
  \centering
  \begin{psfrags}
    \psfrag{xw}[b][b]{$x_0$}
    \psfrag{yw}[b][b]{$y_0$}
    \psfrag{xm}[b][b]{$x_1$}
    \psfrag{ym}[b][b]{$y_1$}
    \psfrag{x}[b][b]{$x_{\rm{p}}$}
    \psfrag{y}[b][b]{$y_{\rm{p}}$}
    \psfrag{z}[b][b]{$\theta_{\rm{p}}$}
    \psfrag{a}[b][b]{\footnotesize $\alpha$}
    \psfrag{b}[b][b]{$b$}
    \psfrag{w}[b][b]{$\omega$}
    \psfrag{v}[b][b]{$v$}
    \psfrag{vx}[b][b]{\footnotesize $v_x$}
    \psfrag{vy}[b][b]{\footnotesize $v_y$}
    \psfrag{o}[b][b]{$O$}
    \includegraphics[width=9cm]{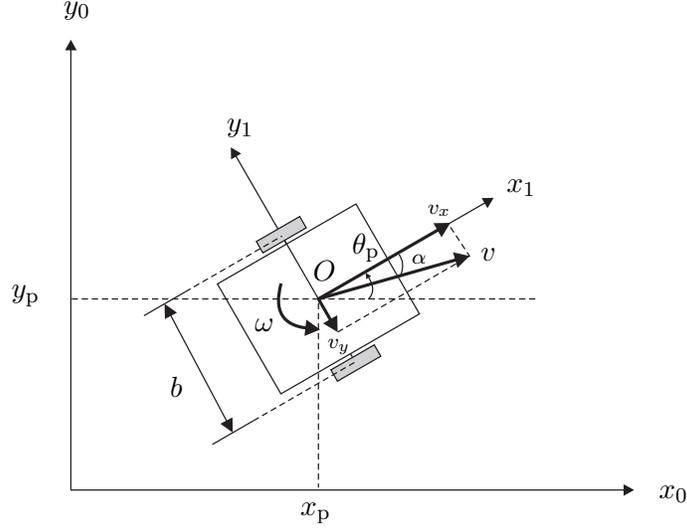}
    \caption{Schematic model of a wheeled mobile robot.\label{fig:frames_robot}}
  \end{psfrags}
\end{figure}

In the absence of any slip, the motion of the wheeled mobile robot can
be described by the kinematic model
\begin{equation}\label{eq:kin_matriz_robot_wslip}
  \dot{q}_{\rm{p}}(t) = S(q_{\rm{p}}(t)) \eta(t)
\end{equation}
where the robot pose $q_{\rm{p}}(t)$, the velocity vector $\eta(t)$,
and the matrix $S(q_{\rm{p}}(t))$, are respectively given by
\begin{equation}\label{eq:matrix_robot}
  q_{\rm{p}}(t) =
  \begin{pmatrix}
    x_{\rm{p}}(t) \\
    y_{\rm{p}}(t) \\
    \theta_{\rm{p}}(t)
  \end{pmatrix}
  , \qquad
  S(q_{\rm{p}}(t))  =
  \begin{pmatrix}
    \cos \theta_{\rm{p}}(t) & 0\\
    \sin \theta_{\rm{p}}(t) & 0\\
    0 & 1
  \end{pmatrix}
  , \qquad
  \eta(t) =
  \begin{pmatrix}
    v_x(t) \\
    \omega(t)
  \end{pmatrix}
\end{equation}

Considering now the lateral slip, the kinematic model becomes
\begin{equation}\label{eq:kin_matriz_robot_sideslip}
  \dot{q}_{\rm{p}}(t) = S_a(q_{\rm{p}}(t)) \eta(t)
\end{equation}
where the matrix $S_a(q_{\rm{p}}(t))$ is given by
\begin{equation*}
  S_a(q_{\rm{p}}(t)) =
  \begin{pmatrix}
    \left( \cos \theta_{\rm{p}}(t)
    + \sigma(t) \sin \theta_{\rm{p}}(t)\right) & \quad 0 \\
    \left( \sin \theta_{\rm{p}}(t)
    - \sigma(t) \cos \theta_{\rm{p}}(t) \right) & \quad 0 \\
    0 & \quad 1
  \end{pmatrix}
\end{equation*}
and $\sigma(t):=\tan \alpha(t) = v_y(t)/v_x(t)$ denotes the lateral
slip.

Although frequently exploited for control design, the robot velocities
$v_x(t)$ and $\omega(t)$ cannot be directly used to control a real
robot. It is more realistic to define the control input in terms of
the angular velocities $\omega_l(t)$ and $\omega_r(t)$ of the left and
right wheels, respectively. To relate these two different types of
velocities, $v_x(t)$ and $\omega(t)$ are first rewritten as
\begin{equation*}
  v_x(t) = \displaystyle \frac{v_l(t)+v_r(t)}{2}
  \quad \mbox{and} \quad
  \omega(t) = \displaystyle \frac{v_r(t)-v_l(t)}{b}
\end{equation*}
where $v_l(t)$ and $v_r(t)$ are, respectively, the linear velocities
of the left and right wheels.

When longitudinal slip is taken into consideration, the linear
velocities $v_l(t)$ and $v_r(t)$ are given by
\begin{equation*}
  \begin{split}
    v_l(t) & = r\omega_l(t)(1-i_l(t)), \qquad 0 \le i_l(t) < 1
    \\
    v_r(t) & = r\omega_r(t)(1-i_r(t)), \qquad 0 \le i_r(t) < 1
  \end{split}
\end{equation*}
where $r$ is the radius of the wheel and $i_l(t)$ and $i_r(t)$ are,
respectively, the longitudinal slip factors of the left and right
wheels.  When the slip factor is zero, the wheel rolls without slipping
(pure rolling). On the other hand, when the slip factor is unity, a
complete slipping of the wheel occurs. Defining the longitudinal slip
parameters as
\begin{equation*}
  a_l(t) = \frac{1}{1-i_l(t)}
  \quad \mbox{and} \quad
  a_r(t)= \frac{1}{1-i_r(t)}
\end{equation*}
the following relation between the robot velocities is obtained:
\begin{equation}\label{eq:velocity_robot_center_with_slip}
  \begin{split}
    v_x(t) &= 
    \frac{r}{2} \left( \frac{\omega_l(t)}{a_l(t)} +
    \frac{\omega_r(t)}{a_r(t)} \right)    
    \\
    \omega(t) &= 
    \frac{r}{b} \left( \frac{\omega_r(t)}{a_r(t)} -
    \frac{\omega_l(t)}{a_l(t)} \right)
  \end{split}
\end{equation}
Collecting the wheel angular velocities in the vector $\xi(t)
={(\omega_l(t),\omega_r(t))}^T$, relation
\eqref{eq:velocity_robot_center_with_slip} can also be expressed as
\begin{equation}
  \label{eq:eta_xi_relation}
  \eta(t)=\Phi(t)\xi(t)
\end{equation}
with matrix $\Phi(t)$ given by
\begin{equation}
  \label{eq:slip_matrix} 
  \Phi(t) = \frac{r}{2b}
  \begin{pmatrix}
    b a_l^{-1}(t)& b a_r^{-1}(t)
    \\
    -2a_l^{-1}(t) & 2a_r^{-1}(t)
  \end{pmatrix}
\end{equation}

Finally, substituting \eqref{eq:eta_xi_relation} into
\eqref{eq:kin_matriz_robot_wslip} yields the model
\begin{equation}\label{eq:matricial_kinematic_logitudinal_slip}
  \dot{q}_{\rm{p}}(t)=S(q_{\rm{p}}(t))\Phi(t)\xi(t)
\end{equation}
which takes into account how longitudinal slip affects the robot
kinematics when the lateral slip is neglected.  Analogously,
substituting
\eqref{eq:eta_xi_relation} into \eqref{eq:kin_matriz_robot_sideslip}
yields the model
\begin{equation}\label{eq:matricial_kinematic_with_slip}
  \dot{q}_{\rm{p}}(t)=S_a(q_{\rm{p}}(t))\Phi(t)\xi(t)
\end{equation}
which takes into account how both lateral and longitudinal slip
affect the robot kinematics.

\begin{remark}\label{rem:effective_control_input}
  Notice that the matrix $\Phi(t)$ in \eqref{eq:slip_matrix} is
  always nonsingular. Thus, relation \eqref{eq:eta_xi_relation}
  can be inverted, giving
  \begin{equation*}
    \xi(t) = \Phi^{-1}(t)\eta(t)  
  \end{equation*}
  or more explicitly
  \begin{equation*}
    \begin{pmatrix}
      \omega_l(t) \\
      \omega_r(t)
    \end{pmatrix} = \displaystyle \frac{1}{2r}
    \begin{pmatrix}
      2a_l(t) & -b a_l(t) \\
      2a_r(t) & b a_r(t)
    \end{pmatrix}
    \begin{pmatrix}
      v_x(t) \\
      \omega(t)
    \end{pmatrix}
  \end{equation*}
  Consequently, if the longitudinal slip parameters $a_l(t)$ and
  $a_r(t)$ are precisely known, the wheel angular velocity vector
  $\xi(t) = (\omega_l(t),\omega_r(t))^T$ can be precisely obtained
  from any desired velocity vector $\eta(t)={(v_x(t),\omega(t))}^T
  $. For this reason, $\eta(t)$ will be called the kinematic control
  input, while $\xi(t)$ will be called the effective control input.
  
\end{remark}

\section{Adaptive kinematic control law}\label{sec:adaptive_control_strategy}
This section introduces a trajectory tracking kinematic control law
for the robot shown in Figure~\ref{fig:frames_robot}.  Section
\ref{sec:preliminaries} poses the trajectory tracking control problem,
and recalls some results from the literature concerning the case in
which the slip is fully neglected.  Section
\ref{sec:adaptive_control_law} introduces our control law and shows
that it drives the robot to asymptotically track a broad class of
reference trajectories whenever the longitudinal slip is constant and
the lateral slip is neglected.  The remaining sections are dedicated
to showing that this same control law can still solve the trajectory
tracking problem when the longitudinal slip and the (non-negligible)
lateral slip are slowly time-varying. More precisely, our main result
states that the tracking error is uniformly ultimately bounded, in
other words, the reference trajectory can be tracked closely (although
not asymptotically) by the robot. Section \ref{sec:time_varying_slip}
presents the main result and its underlying assumptions, and Sections
3.3 and 3.4 provide the details involved in its proof.
						
\subsection{Preliminaries}\label{sec:preliminaries} 

The goal of the control design problem is to provide an input to the
robot such that the robot pose $q_{\rm{p}}(t)$ tracks a given
reference trajectory $q_{\rm{ref}}(t)$. This can be done by enforcing
that
\begin{equation*}
  \lim_{t\to \infty}(q_{\rm{ref}}(t) - q_{\rm{p}}(t)) = 0
\end{equation*}
where the reference trajectory
$q_{\rm{ref}}(t)={(x_{\rm{ref}}(t),y_{\rm{ref}}(t),\theta_{\rm{ref}}(t))}^T$
is generated using the kinematic model
\begin{equation}\label{eq:matrix_robot_reference}
  \dot{q}_{\rm{ref}}(t)=S(q_{\rm{ref}}(t)) \eta_{\rm{ref}}(t)
\end{equation}
with $\eta_{\rm{ref}}(t)={(v_{\rm{ref}}(t),\omega_{\rm{ref}}(t))}^T$
 the reference input and $S(\cdot)$ the map given
 by \eqref{eq:matrix_robot}.

To ensure that the robot pose $q_{\rm{p}}(t)$ will track the reference
trajectory $q_{\rm{ref}}(t)$, the pose error
$e(t)={(e_1(t),e_2(t),e_3(t))}^T$ is defined as follows:
\begin{equation}\label{eq:posture_error}
  \begin{pmatrix}
    e_1(t) \\
    e_2(t) \\
    e_3(t)
  \end{pmatrix} =
  \begin{pmatrix}
    \cos \theta_{\rm{p}}(t) & \sin \theta_{\rm{p}}(t) & 0 \\
    -\sin \theta_{\rm{p}}(t) & \cos \theta_{\rm{p}}(t) & 0 \\
    0 & 0 & 1
  \end{pmatrix}
  \begin{pmatrix}
    x_{\rm{ref}}(t)-x_{\rm{p}}(t) \\
    y_{\rm{ref}}(t)-y_{\rm{p}}(t) \\
    \theta_{\rm{ref}}(t)-\theta_{\rm{p}}(t)
  \end{pmatrix}
\end{equation}
Clearly, the robot pose $q_{\rm{p}}(t)$ converges to the reference
trajectory $q_{\rm{ref}}(t)$ whenever the pose error $e(t)$ converges
to zero. Thus, it is possible to solve the trajectory tracking problem
by showing that the origin of the pose error dynamics is an
asymptotically stable equilibrium point.

Neglecting the longitudinal and the lateral slip, the error dynamics
in terms of the kinematic control input $\eta(t)=(v_x(t),\omega(t))^T$
is directly obtained from \eqref{eq:kin_matriz_robot_wslip},
\eqref{eq:matrix_robot_reference}, and \eqref{eq:posture_error}, as
follows:
\begin{equation}\label{eq:dynamics_posture_error}
  \begin{pmatrix}
    \dot{e}_1(t) \\
    \dot{e}_2(t) \\
    \dot{e}_3(t)
  \end{pmatrix} =
  \begin{pmatrix}
  v_{\rm{ref}}(t) \cos e_3(t) \\
  v_{\rm{ref}}(t) \sin e_3(t) \\
  \omega_{\rm{ref}}(t)
  \end{pmatrix}
  +
  \begin{pmatrix}
    -1 & e_2(t)  \\
    0 & -e_1(t)  \\
    0 & -1
  \end{pmatrix}
  \eta(t)
\end{equation}

Under the hypothesis that the reference inputs $\omega_{\rm{ref}}$ and
$v_{\rm{ref}}$ are constants, the authors in \cite{Kim:1998:GAS}
showed that the kinematic control law
$\eta(t)=\eta_c(t)={(v_c(t),\omega_c(t))}^T$, with
\begin{equation}\label{eq:auxiliary_control_input_without_slip}
  \begin{split}
    v_c(t) &= v_{\rm{ref}}(t) \cos e_3(t) - k_3 e_3(t) 
    \omega_c(t) + k_1 e_1(t)
    \\
    \omega_c(t) &=  \omega_{\rm{ref}}(t)
    + \displaystyle
    \frac{v_{\rm{ref}}(t)}{2}\left[k_2\left(e_2(t)+k_3e_3(t)\right)
      +\frac{1}{k_3}\sin e_3(t)\right]
      , \qquad k_i>0
  \end{split}
\end{equation}
asymptotically stabilizes the origin of the ideal slip-free
error dynamics \eqref{eq:dynamics_posture_error}, thus ensuring that
the pose error $e(t)$ converges to zero. This fact can be shown using
the following Lyapunov function:
\begin{equation}\label{eq:lyap_fun_noslip}
  V(e(t)) = \frac{1}{2}e_1^2(t) + \frac{1}{2} 
  	{\left(e_2(t)+k_3e_3(t)
    \right)}^2 + \frac{(1-\cos e_3(t))}{k_2}
\end{equation}

Notice that it is straightforward to take into account the
longitudinal slip, while neglecting the lateral slip, by
substituting \eqref{eq:eta_xi_relation} into
\eqref{eq:dynamics_posture_error}, that is
\begin{equation}
	\label{eq:din_erro_postura_cin}
  \dot{e}(t) =
  \begin{pmatrix}
    v_{\rm{ref}}(t) \cos e_3(t) \\
    v_{\rm{ref}}(t) \sin e_3(t) \\
    \omega_{\rm{ref}}(t)    
  \end{pmatrix}
  +
  \begin{pmatrix}
    -1 & e_2(t)  \\
    0 & -e_1(t)  \\
    0 & -1
  \end{pmatrix}
  \Phi(t) \xi(t)
\end{equation}
Thus, an obvious consequence of
Remark~\ref{rem:effective_control_input}, which has not been
emphasized in the literature, is that any kinematic control law
$\eta_c(t)$ such as \eqref{eq:auxiliary_control_input_without_slip},
which stabilizes the origin of the ideal system
\eqref{eq:dynamics_posture_error}, can also be used to stabilize the
origin of the error dynamics \eqref{eq:din_erro_postura_cin}, as long
as the longitudinal slip is precisely known. For this to be
accomplished, it suffices to choose the effective control input
$\xi(t) = \Phi^{-1}(t) \eta_c(t)$.  However, when the longitudinal
slip is unknown, one can no longer make use of the matrix $\Phi(t)$ in
the control input $\xi(t)$, and this simple approach cannot be
applied.

In the next section, the kinematic control law 
\eqref{eq:auxiliary_control_input_without_slip} will be exploited in order to
derive an effective control law to stabilize the origin of \eqref{eq:din_erro_postura_cin} when the longitudinal slip is constant.

\subsection{Constant longitudinal slip without lateral slip}\label{sec:adaptive_control_law}

For the particular case in which the reference input
$\eta_{\rm{ref}}(t)={(v_{\rm{ref}}(t),\omega_{\rm{ref}}(t))}^T$ is a
piecewise continuous function of time and the longitudinal slip
parameters $a_l$ and $a_r$ are unknown constants, it is possible to
provide a controller to compensate for the slip by combining the
kinematic control law \eqref{eq:auxiliary_control_input_without_slip}
with an adaptive update rule. The key insight is to choose the
effective control input as $\xi(t) = \hat{\Phi}^{-1}(t) \eta_c(t)$,
where $\hat{\Phi}(t)$ is a matrix constructed by replacing the
parameters $a_l$ and $a_r$ in $\Phi(t)$ by their estimates
$\hat{a}_l(t)$ and $\hat{a}_r(t)$, respectively, given by
\begin{equation}\label{eq:slipconstantestimation}
  \begin{split}
    \hat{a}_l(t) &= a_l + \tilde{a}_l(t)
    \\
    \hat{a}_r(t) &= a_r + \tilde{a}_r(t)
  \end{split}
\end{equation}
with $\tilde{a}_l(t)$ and $\tilde{a}_r(t)$ the estimation errors.
This results in the effective control input
\begin{equation}\label{eq:new_effective_control_input}
  \begin{pmatrix}
    \omega_l(t) \\
    \omega_r(t)
  \end{pmatrix} = \frac{1}{2r}
  \begin{pmatrix}
    2\hat{a}_l(t) & -b\hat{a}_l(t) \\
    2\hat{a}_r(t) & b\hat{a}_r(t)
  \end{pmatrix}
  \begin{pmatrix}
    v_c(t) \\
    \omega_c(t)
  \end{pmatrix} \\
\end{equation}
with $\eta_c(t) = {(v_c(t),\omega_c(t))}^T$ given by
\eqref{eq:auxiliary_control_input_without_slip}, and the corresponding
kinematic control law $\eta(t) = \Phi(t) \hat{\Phi}^{-1}(t)
\eta_c(t)$.

It will now be shown that the adaptive control law
\eqref{eq:new_effective_control_input} asymptotically stabilizes the
origin of \eqref{eq:din_erro_postura_cin}, for constant longitudinal
slip parameters, i.e., with $\Phi(t)=\Phi$ a constant matrix. By
substituting \eqref{eq:slipconstantestimation} and
\eqref{eq:new_effective_control_input} into
\eqref{eq:din_erro_postura_cin}, the pose error dynamics after some
manipulations is given by
\begin{align}
  \dot{e}_1(t) &=\left(1+\frac{\tilde{a}_r(t)}{a_r}\right)
  \left(\frac{e_2(t)}{b}-\frac{1}{2}\right)\left(v_c(t)+
  \frac{b}{2}\omega_c(t) \right)+v_{\rm{ref}}(t)
  \cos e_3(t)
  \nonumber \\
  &\quad-\left(1+\frac{\tilde{a}_l(t)}{a_l}\right)
  \left(\frac{e_2(t)}{b}+\frac{1}{2}\right)\left(v_c(t)
  -\frac{b}{2}\omega_c(t)\right)
  \label{eq:error_derivative_e1}
  \\
  \dot{e}_2(t) &= \left(1+\frac{\tilde{a}_l(t)}{a_l}\right)
  \left(v_c(t)-\frac{b}{2}\omega_c(t)\right)\frac{e_1(t)}{b}+v_{\rm{ref}}(t)
  \sin e_3(t)
  \nonumber \\
  &\quad-\left(1+\frac{\tilde{a}_r(t)}{a_r}\right)
  \left(v_c(t)+\frac{b}{2}\omega_c(t)\right)\frac{e_1(t)}{b}
  \label{eq:error_derivative_e2}
  \\
  \dot{e}_3(t) &= \omega_{\rm{ref}}(t)-\frac{1}{b}
  \left(1+\frac{\tilde{a}_r(t)}{a_r}\right)
  \left(v_c(t)+\frac{b}{2}\omega_c(t)\right)+\frac{1}{b}\left(1
  +\frac{\tilde{a}_l(t)}{a_l}\right)
  \left(v_c(t) -\frac{b}{2}\omega_c(t)\right)
  \label{eq:error_derivative_e3}
\end{align}

To obtain an update rule for the slip estimates, the following
Lyapunov function candidate is considered
\begin{equation}\label{eq:lyap_constant_slip}
  V_a(e_a(t)) = V(e(t))
  +\frac{\tilde{a}_l^2(t)}{2 \gamma_1 a_l}
  +\frac{\tilde{a}_r^2(t)}{2 \gamma_2 a_r}
  , \qquad \gamma_i > 0
\end{equation}
with $V(e(t))$ given by \eqref{eq:lyap_fun_noslip} and the augmented
error defined as $e_a(t)={(e_1, e_2,e_3,\tilde{a}_l,
\tilde{a}_r)}^T$.
Using \eqref{eq:error_derivative_e1}, \eqref{eq:error_derivative_e2},
and \eqref{eq:error_derivative_e3} and noticing that
$\dot{\tilde{a}}_l(t) = \dot{\hat{a}}_l(t)$ and $\dot{\tilde{a}}_r(t)
= \dot{\hat{a}}_r(t)$, since $a_l$ and $a_r$ are constant, the
derivative of $V_a(e_a(t))$ can be expressed, after some manipulations, as
\begin{equation*}
  \begin{split}
    \dot{V}_a(e_a(t)) & = -k_1e_1^2(t) -
    \frac{v_{\rm{ref}}(t)}{2}k_2k_3{(e_2(t)+k_3e_3(t))}^2
    -\frac{v_{\rm{ref}}(t)}{2k_2k_3}\sin^2 e_3(t)
    \\
    & \qquad + \frac{\tilde{a}_l(t)}{a_l}
    \bigg\{\frac{\dot{\hat{a}}_l(t)}{\gamma_1}
    - \left(v_c(t)-\frac{b}{2}\omega_c(t)\right)
    \bigg[ \left(\frac{e_2(t)}{b} + \frac{1}{2} \right) e_1(t)
      \\  &\qquad
      -\left(\frac{e_1(t)}{b} +
      \frac{k_3}{b}\right)(e_2(t)+k_3e_3(t))
      -\frac{1}{b k_2}\sin e_3(t)\bigg]\bigg\}
    \\
    & \qquad + \frac{\tilde{a}_r(t)}{a_r}
    \bigg\{\frac{\dot{\hat{a}}_r(t)}{\gamma_2}
    - \left(v_c(t) + \frac{b}{2}\omega_c(t)\right)
    \bigg[-\left(\frac{e_2(t)}{b}
      - \frac{1}{2}\right) e_1(t)
      \\  &\qquad      
      + \left(\frac{e_1(t)}{b} + \frac{k_3}{b}\right) (e_2(t)+k_3e_3(t))+
      \frac{1}{b k_2}\sin e_3(t)\bigg]\bigg\}
  \end{split}
\end{equation*}
with $v_c(t)$ and $\omega_c(t)$ given by
\eqref{eq:auxiliary_control_input_without_slip}. Now, choosing
the update rule for $\hat{a}_l(t)$ and $\hat{a}_r(t)$ as
\begin{equation}\label{eq:update_rule_constant_slip}
  \begin{split}
    \dot{\hat{a}}_l(t) &=  \gamma_1
    \left(v_c(t)-\frac{b}{2}\omega_c(t)\right)\bigg[\left(\frac{e_2(t)}{b}
      +\frac{1}{2}\right)e_1(t)
      \\
      & \qquad
      - \left(\frac{e_1(t)}{b}+
      \frac{k_3}{b}\right) (e_2(t)+k_3e_3(t))
      -\frac{1}{b k_2}\sin e_3(t)\bigg]
    \\
    \dot{\hat{a}}_r(t) &=  \gamma_2
    \left(v_c(t)+\frac{b}{2}\omega_c(t)\right)\bigg[-\left(\frac{e_2(t)}{b}
      -\frac{1}{2}\right)e_1(t)
      \\
      & \qquad
      +\left(\frac{e_1(t)}{b}+\frac{k_3}{b}\right)(e_2(t)+k_3e_3(t))+
      \frac{1}{b k_2}\sin e_3(t)\bigg]
  \end{split}
\end{equation}
it finally follows that
\begin{equation}\label{eq:lyapunov_derivative_proof}
  \begin{split}
    \dot{V}_a(e_a(t))& =-k_1e_1^2(t) -
    \frac{v_{\rm{ref}}(t)}{2}k_2k_3{(e_2(t) + k_3e_3(t))}^2
    - \frac{v_{\rm{ref}}(t)}{2k_2k_3}\sin^2 e_3(t)
  \end{split}
\end{equation}

With this choice of update rule, the dynamics of the augmented error
$e_a(t)$ is thus given by
\begin{equation}\label{eq:nonautonomous_function_error_aug}
  \dot{e}_a=f_a(t,e_a(t))
\end{equation}
where the rows of $f_a$ are the expressions on the right-hand sides of
\eqref{eq:error_derivative_e1}-\eqref{eq:error_derivative_e3} and
\eqref{eq:update_rule_constant_slip}, with $\dot{\tilde{a}}_l(t) =
\dot{\hat{a}}_l(t)$ and $\dot{\tilde{a}}_r(t) = \dot{\hat{a}}_r(t)$,
and the control input $v_c(t)$ and $\omega_c(t)$ given by
\eqref{eq:auxiliary_control_input_without_slip}. At this point,
Theorem~8.4 from \citep[p. 323]{Khalil:2002:NS} can be applied to show
that the origin of \eqref{eq:nonautonomous_function_error_aug} is
uniformly asymptotically stable. To see this fact, observe that the
function $V_a(e_a(t))$ is positive definite over the domain
$D=\{e_a(t) \in \mathbb{R}^5\ | -\pi < e_3(t) <\pi\}$.  In addition,
the function $\dot{V}_a(t,e_a(t))$ is bounded above by $-W(e_a(t))$,
with $W(e_a(t))$ the positive semidefinite function
\begin{equation*}
  W(e_a(t)) = k_1e_1^2(t) + \frac{k_2k_3\mu}{2}{(e_2(t) + k_3e_3(t))}^2
  + \frac{\mu}{2k_2k_3}\sin^2 e_3(t), \qquad 0<\mu\le v_{\rm{ref}}(t)
\end{equation*}
By noticing that $f_a(t,0)=0$, all conditions of Theorem 8.4 are
satisfied, which implies that all solutions of
\eqref{eq:nonautonomous_function_error_aug} are bounded and satisfy
$W(e_a(t)) \to 0$ as $t \to \infty$. Consequently, $e(t) \to 0$ as $t
\to \infty$.
This conclusion extends a previous result shown
in \cite{Iossaqui:2011:NCD}, that dealt with the particular case in
which the reference input
$\eta_{\rm{ref}}={(v_{\rm{ref}},\omega_{\rm{ref}})}^T$ is restricted
to be constant.
  
Figure~\ref{fig:adaptive_kin} shows the schematic representation of
the closed-loop system composed of the reference trajectory, the
adaptive kinematic control law, and the robot. The numbering inside
the blocks indicates the corresponding equation number.  The adaptive
control law is composed of the kinematic control law
$\eta_c={(v_c,\omega_c)}^T$ given by
\eqref{eq:auxiliary_control_input_without_slip} together with the
update rule given by \eqref{eq:update_rule_constant_slip}.

\begin{figure}[ht]
  \footnotesize
  \centering
  \begin{psfrags}
    \psfrag{1}[b][b]{\eqref{eq:matricial_kinematic_logitudinal_slip}}
    \psfrag{2}[b][b]{\eqref{eq:matrix_robot_reference}}
    \psfrag{3}[b][b]{\eqref{eq:posture_error}}
    \psfrag{4}[b][b]{\eqref{eq:auxiliary_control_input_without_slip}}
    \psfrag{5}[b][b]{\eqref{eq:new_effective_control_input}}
    \psfrag{6}[b][b]{\eqref{eq:update_rule_constant_slip}}
    \psfrag{a}[b][b]{$\eta_{\rm{ref}}(t)$}
    \psfrag{c}[b][b]{$q_{\rm{ref}}$}
    \psfrag{d}[b][b]{$e$}
    \psfrag{e}[b][b]{$\eta_c$}
    \psfrag{f}[b][b]{$q_{\rm{p}}$}
    \psfrag{v}[b][b]{$\xi$}
    \psfrag{s}[b][b]{$\hat{a}_l$}
    \psfrag{t}[b][b]{$\hat{a}_r$}
    \psfrag{aL}[b][b]{$a_l$}
    \psfrag{aR}[b][b]{$a_r$}
    \psfrag{ref}[c][c]{\parbox{39pt}{\centering
        Reference\\Trajectory}}
    \psfrag{erro}[c][c]{\parbox{20pt}{\centering
        Pose\\Error}}
    \psfrag{entaux}[c][c]{\parbox{50pt}{\centering
      Kinematic\\Control Law}}
    \psfrag{entefe}[c][c]{\parbox{60pt}{\centering
        Effective\\Control Input}}
    \psfrag{ad}[c][c]{\parbox{27pt}{\centering
        Update\\Rule}}
    \psfrag{robo}[b][b]{Robot}
    \includegraphics[width=0.8\linewidth]{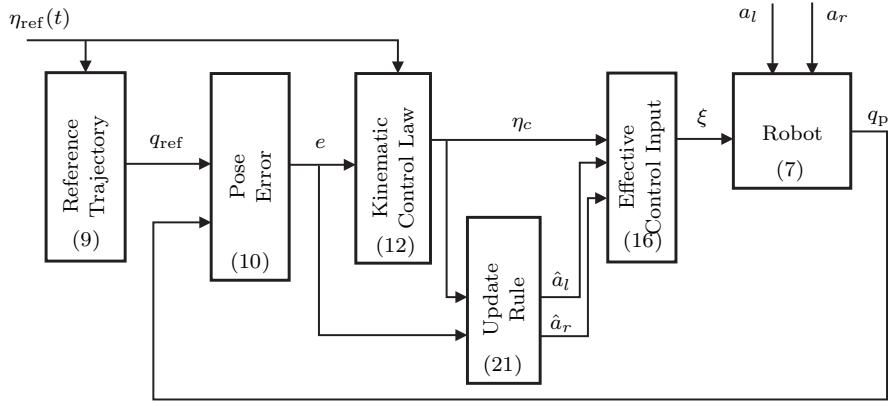}
    \caption{Adaptive kinematic control strategy.\label{fig:adaptive_kin}}
  \end{psfrags}
\end{figure}

The adaptive kinematic control law described in this section was
obtained neglecting the lateral slip and assuming constant
longitudinal slip parameters. In the next section, using a different
approach, it is shown that this same adaptive kinematic control law
can also guarantee that the augmented error signal $e_a(t)$ is
ultimately bounded for a slow time-varying reference input
$\eta_{\rm{ref}}(t)$, a sufficiently small lateral slip $\sigma(t)$,
and slow time-varying longitudinal slip parameters $a_l(t)$ and
$a_r(t)$.

\subsection{Time-varying longitudinal and lateral slip}
\label{sec:time_varying_slip} 

This section extends the stability analysis of the adaptive 
kinematic control law introduced in
Section~\ref{sec:adaptive_control_law} to the case where the
longitudinal slip and the (non-negligible) lateral slip are 
time-varying. More specifically, assuming that the time-varying  
parameters of the system vary slowly, it will be shown that the
proposed control law ensures that the augmented error 
$e_a(t)={(e_1, e_2,e_3,\tilde{a}_l, \tilde{a}_r)}^T$ is
ultimately bounded, i.e., it reaches a neighborhood of the origin
in finite time, and stays in that neighborhood thereafter.

When the longitudinal slip is time-varying, the estimates are 
given by
\begin{equation}\label{eq:sliptvestimation}
  \begin{split}
    \hat{a}_l(t) &= a_l(t) + \tilde{a}_l(t)
    \\
    \hat{a}_r(t) &= a_r(t) + \tilde{a}_r(t)
  \end{split}
\end{equation}
It will be assumed throughout that the slip parameters
$a_l(t)$ and $a_r(t)$ are differentiable functions of time.
In that case, considering time-varying slip parameters, the
dynamics of the augmented error $e_a(t)$, derived using
\eqref{eq:matricial_kinematic_with_slip},
\eqref{eq:matrix_robot_reference}, \eqref{eq:posture_error},
\eqref{eq:new_effective_control_input},  
\eqref{eq:update_rule_constant_slip},
and \eqref{eq:sliptvestimation},
can be written (after some manipulations) as the
following perturbed system:
\begin{equation}\label{eq:nonautonomous_perturbed_system}
  \dot{e}_a(t) = f_a(t,e_a(t)) + g(t,e_a(t)) + g_n(t,e_a(t))
\end{equation}
with a nominal term $f_a(t,e_a(t))$ given by
\begin{equation*}
  \hspace*{-50pt}
  f_a(t,e_a(t)) =
  \begin{pmatrix}
    \Delta_r \Omega_3 \Omega_1
    + v_{\rm{ref}}(t) \cos e_3(t)
    -\Delta_l \Omega_4 \Omega_2
    \\
    \Delta_l  \Omega_2
    \displaystyle\frac{e_1(t)}{b}+v_{\rm{ref}}(t) \sin e_3(t)
    -\Delta_r \Omega_1
    \displaystyle\frac{e_1(t)}{b}
    \\
    \omega_{\rm{ref}}(t)-\displaystyle\frac{1}{b}
    \Delta_r \Omega_1
    +\displaystyle\frac{1}{b}
    \Delta_l \Omega_2
    \\
    \gamma_1 \Omega_2    \bigg[
      \Omega_4 e_1(t)
      -\left(\displaystyle\frac{e_1(t)}{b}+
      \displaystyle\frac{k_3}{b}\right)(e_2(t)+k_3e_3(t))
      -\displaystyle\frac{1}{b k_2}\sin e_3(t)\bigg]
    \\
    \gamma_2 \Omega_1 \bigg[
      -
      \Omega_3
      e_1(t)
      +\left(\displaystyle\frac{e_1(t)}{b}
      +\displaystyle\frac{k_3}{b}\right)(e_2(t)+k_3e_3(t))+
      \displaystyle\frac{1}{b k_2}\sin e_3(t)\bigg]
  \end{pmatrix}
\end{equation*}
where
\begin{alignat*}{3}
  \Delta_r & = \left(1 + \displaystyle
  \frac{\tilde{a}_r(t)}{a_r(t)} \right),
  & \qquad
  \Omega_1 & = \left(v_c(t) + \displaystyle \frac{b}{2} \omega_c(t) \right),
  & \qquad
  \Omega_3 & =
  \left(\displaystyle\frac{e_2(t)}{b}-\displaystyle\frac{1}{2}\right)
  \\
  \Delta_l & = \left(1 + \displaystyle \frac{\tilde{a}_l(t)}{a_l(t)}
  \right),
  &
  \Omega_2 &= \left(v_c(t) - \displaystyle \frac{b}{2} \omega_c(t) \right),
  &
  \Omega_4 &=
  \left(\displaystyle\frac{e_2(t)}{b}+\displaystyle\frac{1}{2}\right)
\end{alignat*}
a vanishing perturbation term $g(t,e_a(t))$ given by
\begin{equation*}
  g(t,e_a(t)) =
  \begin{pmatrix}
    0 \\
    \sigma(t) \left[\displaystyle\frac{1}{2}
      \displaystyle\frac{\tilde{a}_l(t)}{a_l(t)}
      \Omega_2
      + \displaystyle\frac{1}{2}
      \displaystyle\frac{\tilde{a}_r(t)}{a_r(t)}
      \Omega_1
      -k_3 e_3(t) \omega_c(t) + k_1 e_1(t) \right] \\
    0 \\
    0\\
    0
  \end{pmatrix}
\end{equation*}
and a nonvanishing perturbation term $g_n(t,e_a(t))$ given by
\begin{equation*}
  g_n(t,e_a(t)) =
  \begin{pmatrix}
    0 \\
    \sigma(t) v_{\rm{ref}}(t) \cos e_3(t) \\
    0 \\
    -\dot{a}_l(t) \\
    -\dot{a}_r(t)
  \end{pmatrix}
\end{equation*}
with $v_c(t)$ and $\omega_c(t)$ given by
\eqref{eq:auxiliary_control_input_without_slip}, and $e_1(t)$,
$e_2(t)$, and $e_3(t)$ given by \eqref{eq:posture_error}.

Note that the perturbation term $g(t,e_a(t))$ vanishes at the origin $e_a(t)=0$. However, since the parameters $\sigma(t)$, 
$\dot{a}_l(t)$, and $\dot{a}_r(t)$ can be nonzero, the
perturbation term $g_n(t,0)$ in general does not vanish at the
origin. This implies that the origin $e_a(t)=0$ might not be an
equilibrium point of the system
\eqref{eq:nonautonomous_perturbed_system}. Consequently, it is no
longer possible to analyze the stability of the origin $e_a(t)=0$ 
as an equilibrium point of the system
\eqref{eq:nonautonomous_perturbed_system}, nor should it be 
expected that $e_a(t)$ will converge to zero with $t\to \infty$. 
However, it is still possible to study the ultimate boundedness
of the solution $e_a(t)$ of the perturbed system
\eqref{eq:nonautonomous_perturbed_system}. 
For this purpose, and to make the analysis 
tractable, the following set of assumptions will be used:
\begin{Assumption}\label{ass:01}
  The parameters $b$, $k_1$, $k_2$, $k_3$, $\gamma_1$, and $\gamma_2$
  are all finite positive constants. Furthermore:
  \begin{enumerate}
  \item $v_{\rm{ref}}(t)$, $\omega_{\rm{ref}}(t)$,
    $a_l(t)$ and $a_r(t)$ are continuously differentiable,
    and $\sigma(t)$ is piecewise continuous in time.
    
  \item $a_l(t) \ge 1$, $a_r(t) \ge 1$, $v_{\rm{ref}}(t) \ge \mu_1
    >0$, and $|2v_{\rm{ref}}(t) \pm b \omega_{\rm{ref}}(t)| \ge
    \mu_2>0$ for some $\mu_1>0$ and $\mu_2>0$.
	
  \item $v_{\rm{ref}}(t)$, $\omega_{\rm{ref}}(t)$, $a_l(t)$, $a_r(t)$,
    and $\sigma(t)$ are bounded on $[0,\infty)$.

  \item $\dot{v}_{\rm{ref}}(t)$, $\dot{\omega}_{\rm{ref}}(t)$,
  $\dot{a}_l(t)$, $\dot{a}_r(t)$, and $\sigma(t)$ can be made arbitrarily
  small. 
  \end{enumerate} 
\end{Assumption}

The next theorem presents our main result.

\begin{theorem}\label{thm:main_result}
  Under Assumption \ref{ass:01},
  the solution $e_a(t)$ of \eqref{eq:nonautonomous_perturbed_system}
  is uniformly ultimately bounded, for sufficiently small positive
  scalars $\gamma_1$ and $\gamma_2$. Thus,
  there exist positive
  constants $\alpha$, $u_b$, $T$ such that
  \begin{equation*}
    \|e_a(t)\| \leq \alpha \exp\left[-\gamma(t-t_0)\right] \|
    e_a(t_0)\|, \qquad \forall t_0 \leq t<t_0+T
  \end{equation*}
  and
  \begin{equation*}
    \|e_a(t)\| \leq u_b
    , \qquad \forall  
    t\geq t_0+T
  \end{equation*}
\end{theorem}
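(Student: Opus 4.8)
The plan is to read \eqref{eq:nonautonomous_perturbed_system} as a nominal system $\dot e_a=f_a(t,e_a)$ (whose origin \emph{is} an equilibrium, since one checks $f_a(t,0)=0$) perturbed by the vanishing term $g$ and the nonvanishing term $g_n$, and to apply the uniform ultimate boundedness theorem (Theorem~4.18 in \citep{Khalil:2002:NS}) to a Lyapunov function for the nominal system. The work splits into: (i) showing the origin of $\dot e_a=f_a(t,e_a)$ is uniformly (locally) \emph{exponentially} stable, (ii) extracting a quadratic Lyapunov function from a converse theorem, and (iii) bounding the perturbations and invoking the UUB theorem.

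The hard part will be step (i), because the constant-slip analysis of Section~\ref{sec:adaptive_control_law} only gives a negative \emph{semi}definite estimate: the Lyapunov function $V_a$ of \eqref{eq:lyap_constant_slip} (now $t$-dependent through $a_l(t),a_r(t)$ in the denominators) yields $\dot V_a\le -W(e)$, which does not damp the parameter errors $\tilde a_l,\tilde a_r$. The missing ingredient is a persistent-excitation argument supplied by Assumption~\ref{ass:01}(2): the bound $|2v_{\rm ref}\pm b\omega_{\rm ref}|\ge\mu_2>0$ forces $|\Omega_1|,|\Omega_2|\ge\mu_2/2>0$ near $e_a=0$, so the linearization of $f_a$ at the origin is a constant-regressor adaptive system whose state matrix, by a LaSalle/observability argument, is Hurwitz (the regressor directions $\Omega_1,\Omega_2$ being nonzero make $\tilde a$ observable from the $e$-dynamics). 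To make this quantitative I would build a strict Lyapunov function $V_s=V_a+\varepsilon\,\beta(t,e_a)$ with $\varepsilon>0$ small and $\beta$ a cross term pairing $(e_1,\,e_2+k_3e_3,\,\sin e_3)$ with $(\tilde a_l,\tilde a_r)$, chosen so that $\dot V_s\le -c_3\|e_a\|^2$ on a ball $B_r=\{\|e_a\|<r\}$; the lower bound $v_{\rm ref}\ge\mu_1>0$ and the boundedness in Assumption~\ref{ass:01}(3) make the cross-term coefficients and $\dot\beta$ uniformly bounded. Finally, since $\dot v_{\rm ref},\dot\omega_{\rm ref},\dot a_l,\dot a_r$ are small by Assumption~\ref{ass:01}(4) — equivalently $f_a(t,\cdot)$ is slowly time-varying, which is also where the requirement that $\gamma_1,\gamma_2$ be small enters, keeping $\hat a_l,\hat a_r$ slowly varying — the frozen-time exponential stability propagates to the time-varying nominal system via the slowly-varying-systems results in \citep[Ch.~9]{Khalil:2002:NS}.

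With (i) in hand, a converse theorem (Theorem~4.14 in \citep{Khalil:2002:NS}) gives $V_0(t,e_a)$ on $B_r$ with $c_1\|e_a\|^2\le V_0\le c_2\|e_a\|^2$, $\partial_tV_0+\partial_{e_a}V_0\,f_a\le -c_3\|e_a\|^2$, and $\|\partial_{e_a}V_0\|\le c_4\|e_a\|$. Along the full system \eqref{eq:nonautonomous_perturbed_system} this yields $\dot V_0\le -c_3\|e_a\|^2+c_4\|e_a\|(\|g\|+\|g_n\|)$. On $B_r$ the vanishing term satisfies $\|g(t,e_a)\|\le L_g\sup_t|\sigma(t)|\,\|e_a\|$ (each entry of $g$ is $\sigma(t)$ times a quantity that is $O(\|e_a\|)$ near the origin, using that $\Omega_1,\Omega_2,\omega_c$ are bounded there), while $\|g_n(t,e_a)\|\le\delta:=\sup_t|v_{\rm ref}|\sup_t|\sigma(t)|+\sup_t|\dot a_l(t)|+\sup_t|\dot a_r(t)|$. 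Choosing $\sup_t|\sigma(t)|$ small enough that $c_4L_g\sup_t|\sigma|\le c_3/2$ gives $\dot V_0\le -\tfrac{c_3}{2}\|e_a\|^2+c_4\delta\|e_a\|\le -\tfrac{c_3}{4}\|e_a\|^2$ whenever $\|e_a\|\ge\mu:=4c_4\delta/c_3$. By Assumption~\ref{ass:01}(4), $\delta$ and hence $\mu$ can be made arbitrarily small, in particular smaller than $\sqrt{c_1/c_2}\,r$, so Theorem~4.18 of \citep{Khalil:2002:NS} applies: solutions starting in $\{V_0(t_0,\cdot)\le c_1r^2\}$ decay exponentially, $\|e_a(t)\|\le\sqrt{c_2/c_1}\,e^{-(c_3/8c_2)(t-t_0)}\|e_a(t_0)\|$, until entering $\{\|e_a\|\le\mu\}$ at some time $t_0+T$, and thereafter $\|e_a(t)\|\le u_b:=\sqrt{c_2/c_1}\,\mu$. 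This is the claimed statement with $\alpha=\sqrt{c_2/c_1}$ and decay rate $\gamma=c_3/(8c_2)$.

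In summary, the only nonroutine step is (i): establishing uniform (local) exponential stability of the nominal adaptive system from the persistent-excitation condition in Assumption~\ref{ass:01}(2) and carrying it over to the slowly-time-varying setting; steps (ii)--(iii) are the standard converse-Lyapunov and nonvanishing-perturbation arguments.
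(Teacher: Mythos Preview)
Your overall architecture --- decompose into nominal plus perturbations, establish exponential stability of $\dot e_a=f_a(t,e_a)$, extract a quadratic Lyapunov function via a converse theorem, then absorb $g$ and $g_n$ by the standard vanishing/nonvanishing perturbation estimates --- matches the paper's (the paper invokes Lemmas~9.1--9.2 of \cite{Khalil:2002:NS} where you cite Theorem~4.18, but these amount to the same thing). Steps~(ii)--(iii) are fine.

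The gap is in step~(i), specifically in where the smallness of $\gamma_1,\gamma_2$ enters. You attribute it to ``keeping $\hat a_l,\hat a_r$ slowly varying'' so that a slowly-varying-systems argument applies; but $\hat a_l,\hat a_r$ are state components, not exogenous parameters of $f_a$ --- the explicit time dependence of $f_a$ comes only through $v_{\rm ref},\omega_{\rm ref},a_l,a_r$, whose slow variation is already supplied by Assumption~\ref{ass:01}(4) independently of the gains. In the paper the small-gain requirement enters one level earlier: it is needed for the \emph{frozen-time} linearization $A(t)=\partial f_a/\partial e_a|_{e_a=0}$ to be uniformly Hurwitz. This is Proposition~\ref{thm:positividade}, proved by writing out the degree-five characteristic polynomial of $A(t)$ explicitly and checking the Li\'enard--Chipart conditions; the delicate one, $c_3(t)>0$, holds only when certain terms linear in $\gamma_1,\gamma_2$ dominate higher-order ones, i.e.\ only for $\gamma_1,\gamma_2$ sufficiently small. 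Your observability/PE heuristic (``the regressor directions $\Omega_1,\Omega_2$ being nonzero make $\tilde a$ observable from the $e$-dynamics'') does not by itself give a Hurwitz $A(t)$ for all gains, and the paper's computation shows the condition can genuinely fail for large $\gamma$. Once $A(t)$ is uniformly Hurwitz, the paper applies Rosenbrock's slowly-varying-linear-system theorem to get exponential stability of $\dot e_a=A(t)e_a$, then Khalil's linearization Theorem~4.15 to transfer this to the nonlinear nominal system. A strict-Lyapunov cross-term construction $V_a+\varepsilon\beta$ might be a legitimate alternative, but you would still have to locate the small-$\gamma$ requirement inside that construction; as written, the sketch conflates two distinct issues (frozen-time Hurwitzness versus slow time variation) and leaves the actual mechanism unverified.
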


The proof of Theorem~\ref{thm:main_result} will be pursued in two steps. First,
it is shown in Section~\ref{sec:stabilitynominalsystem} that the origin 
$e_a(t)=0$ of the nominal system
\begin{equation}\label{eq:nominal_system}
  \dot{e}_a=f_a(t,e_a(t))
\end{equation} is exponentially stable 
if the derivatives of $v_{\rm{ref}}(t)$, $\omega_{\rm{ref}}(t)$, $a_l(t)$ and
$a_r(t)$ are sufficiently small, that is, as long as those parameters vary 
slowly. Second, it is shown in Section~\ref{sec:ultimateboundedness} that the
origin $e_a(t)=0$ of the system
\begin{equation}\label{eq:nominal_system_vanishing}
  \dot{e}_a(t)=f_a(t,e_a(t))+g(t,e_a(t))
\end{equation} is also exponentially stable,
which allows us to conclude that the solution $e_a(t)$ of the perturbed
system \eqref{eq:nonautonomous_perturbed_system} is ultimately bounded as
long as the lateral slip $\sigma(t)$ remains small.

\subsection{Exponential stability of the nominal system}\label{sec:stabilitynominalsystem}

It will now be shown that the origin $e_a=0$ of the nonlinear nominal
system \eqref{eq:nominal_system} is exponentially stable using
Theorem~\ref{theo:khalil4133ed}, given in
Appendix~\ref{appx:stabilityperturbed}. This theorem allows us to
assert exponential stability of the origin of
\eqref{eq:nominal_system} based on the exponential stability of the
origin of the linear time-varying
\begin{equation}\label{eq:ltv_system}
  \dot{e}(t) = A(t) e(t)
\end{equation}
with
$A(t)$ the Jacobian of $f_a(t,e_a)$ evaluated at $e_a=0$. The matrix
$A(t)$ is given by
\begin{equation}\label{eq:linearization_matrix}
  A(t)=
  \begin{pmatrix}
    A_{11}(t) & A_{12}(t) \\
    A_{21}(t) & A_{22}(t)
  \end{pmatrix}
\end{equation}
where
\begin{equation*}
  \begin{split}
    A_{11}(t) &=
    \begin{pmatrix}
      -k_1 & \enskip \omega_{\rm{ref}}(t) & \enskip k_3\omega_{\rm{ref}}(t)
      \\
      -\omega_{\rm{ref}}(t) & \enskip 0 & \enskip v_{\rm{ref}}(t)
      \\
      0 & \enskip \displaystyle -\frac{k_2}{2}v_{\rm{ref}}(t)
      & \enskip \displaystyle -\frac{k_4}{2k_3}v_{\rm{ref}}(t)
    \end{pmatrix}
    \\
    A_{12}(t) &=
    \displaystyle\frac{1}{4 b a_l(t) a_r(t)}
    \begin{pmatrix}
      ba_r(t) v_2(t) & -b a_l(t) v_1(t)
      \\
      0 & 0 \\[3pt]
      -2a_r(t) v_2(t) & -2 a_l(t) v_1(t)
    \end{pmatrix}
    \\
    A_{21}(t) &=
    \displaystyle\frac{1}{4b k_2}
    \begin{pmatrix}
      -b k_2 \gamma_1 v_2(t) &
      2k_2k_3 \gamma_1 v_2(t)&
      2 \gamma_1 k_4v_2(t)
      \\
      b k_2\gamma_2v_1(t)&
      2k_2k_3\gamma_2v_1(t)
      &2\gamma_2k_4v_1(t)
    \end{pmatrix}
    \\
    A_{22}(t) &=
    \begin{pmatrix}
      0 & 0
      \\
      0 & 0
    \end{pmatrix}
  \end{split}
\end{equation*}
with $v_1(t) = b \omega_{\rm{ref}}(t) + 2 v_{\rm{ref}}(t)$, $v_2(t) =
b \omega_{\rm{ref}}(t) - 2 v_{\rm{ref}}(t)$, and $k_4 = 1 + k_2
k_3^2$.

The next Remark gathers all conditions on $f_a(t,e_a)$ and its
Jacobian matrix required by Theorems~\ref{theo:khalil4133ed}
and~\ref{theo:rosenbrock63}, given in
Appendix~\ref{appx:stabilityperturbed}.

\begin{remark}\label{rem:boundedness}
  When Assumption~\ref{ass:01} holds, $f_a(t,e_a)$ defined
  in \eqref{eq:nonautonomous_perturbed_system} is continuously
  differentiable, and $f_a(t,0)$ is uniformly bounded for all $t \ge
  0$. Moreover, the Jacobian matrix $J(t,e_a) = \partial f_a(t,e_a)
  / \partial e_a$ is bounded and Lipschitz in $e_a$, uniformly in $t$,
  matrix $A(t)=J(t,0)$, given by \eqref{eq:linearization_matrix}, is
  elementwise bounded and differentiable, and the derivative
  $\dot{A}(t)$ can be made arbitrarily small.
\end{remark}

It now remains to show that the origin of the linear time-varying
system \eqref{eq:ltv_system} is exponentially stable. This can be
accomplished using Theorem~\ref{theo:rosenbrock63}, which asserts the
stability of ``slowly varying'' systems from the stability properties
of the frozen-time systems, i.e., from the eigenvalues of $A(t)$.  The
proof that the eigenvalues of $A(t)$ satisfy the requirement of
Theorem~\ref{theo:rosenbrock63} is the context of the next result.

\begin{proposition}\label{thm:positividade}
  Let $A(t)$ be given by \eqref{eq:linearization_matrix}. Then,
  under Assumption~\ref{ass:01}, there exists a scalar
  $\epsilon > 0$ such that
  \begin{equation*}
    \mbox{Re}[\lambda(A(t))] \leq -\epsilon <0
  \end{equation*}
  for sufficiently small positive scalars $\gamma_1$  and $\gamma_2$.
\end{proposition}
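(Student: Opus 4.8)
The plan is to exploit the block structure of $A(t)$ in \eqref{eq:linearization_matrix}: $A_{22}(t)\equiv 0$; $A_{21}(t)$ is linear in the adaptation gains and vanishes when $\gamma_1=\gamma_2=0$; and $A_{11}(t)$, $A_{12}(t)$ do not depend on $\gamma_1,\gamma_2$. Hence at $\gamma_1=\gamma_2=0$ the matrix $A_0(t):=A(t)\big|_{\gamma_1=\gamma_2=0}$ is block upper triangular, with spectrum $\operatorname{spec}(A_{11}(t))\cup\{0,0\}$. I would treat the three ``fast'' eigenvalues near $\operatorname{spec}(A_{11}(t))$ and the two ``slow'' eigenvalues bifurcating from $0$ separately, and then show that for $\gamma_1,\gamma_2$ small both clusters stay in a common left half-plane $\{\operatorname{Re}\lambda\le-\epsilon\}$, uniformly in $t$. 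Uniformity follows because, by Assumption~\ref{ass:01}, $v_{\rm ref}(t),\omega_{\rm ref}(t),a_l(t),a_r(t)$ range over compact sets, $v_{\rm ref}(t)\ge\mu_1>0$, $a_l(t),a_r(t)\ge1$, and $|2v_{\rm ref}(t)\pm b\omega_{\rm ref}(t)|\ge\mu_2>0$, so $v_1(t),v_2(t)$ are bounded and bounded away from $0$.

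First I would show $A_{11}(t)$ is uniformly Hurwitz. Expanding $\det(\lambda I_3-A_{11}(t))$ and using $k_4=1+k_2k_3^2$ to collect terms yields the monic cubic $\lambda^3+a_2(t)\lambda^2+a_1(t)\lambda+a_0(t)$ with $a_2=k_1+\beta$, $a_1=\tfrac12 k_2v_{\rm ref}^2+k_1\beta+\omega_{\rm ref}^2$, $a_0=\tfrac12 k_1k_2v_{\rm ref}^2+\beta\omega_{\rm ref}^2/k_4$, where $\beta(t)=k_4 v_{\rm ref}(t)/(2k_3)$. All coefficients are strictly positive, and the Routh--Hurwitz quantity reduces, after the $k_4$-cancellation, to $a_2a_1-a_0=k_1\beta(k_1+\beta)+k_1\omega_{\rm ref}^2+\tfrac12 k_2v_{\rm ref}^2\,\beta+\beta\omega_{\rm ref}^2(1-1/k_4)$, a sum of nonnegative terms whose first term alone is $\ge k_1^2 k_4\mu_1/(2k_3)>0$. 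Since the coefficients lie in a compact set bounded away from the Routh--Hurwitz boundary and the roots depend continuously on them, $\operatorname{Re}[\lambda(A_{11}(t))]\le-\epsilon_1<0$ for all $t$; robustness of this margin then keeps the corresponding three eigenvalues of $A(t)$ in $\{\operatorname{Re}\lambda\le-\epsilon_1/2\}$ whenever $\gamma_1,\gamma_2$ are small enough.

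Next I would locate the two eigenvalues near $0$ by first-order eigenvalue perturbation. The eigenvalue $0$ of $A_0(t)$ is semisimple with algebraic and geometric multiplicity $2$: its algebraic multiplicity is exactly $2$ because $A_{11}(t)$ is invertible, and right/left null bases are $R(t)=\big(\begin{smallmatrix}-A_{11}^{-1}A_{12}\\ I_2\end{smallmatrix}\big)$ and $L(t)=(\,0\ \ I_2\,)$, with $L(t)R(t)=I_2$. Writing $A(t)=A_0(t)+\Delta A(t)$ with $\Delta A(t)=\big(\begin{smallmatrix}0&0\\ A_{21}&0\end{smallmatrix}\big)$, the two bifurcating eigenvalues are, to first order, the eigenvalues of $L(t)\Delta A(t)R(t)=-A_{21}(t)A_{11}^{-1}(t)A_{12}(t)=:M(t)$. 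Factoring the gains out, $A_{21}(t)=\operatorname{diag}(\gamma_1,\gamma_2)\bar A_{21}(t)$, gives $M(t)=\operatorname{diag}(\gamma_1,\gamma_2)N(t)$ with $N(t)=-\bar A_{21}(t)A_{11}^{-1}(t)A_{12}(t)$ a $2\times2$ matrix independent of $\gamma_1,\gamma_2$ and uniformly bounded (note $|\det A_{11}(t)|=a_0(t)\ge\tfrac12 k_1k_2\mu_1^2$, so $\|A_{11}^{-1}(t)\|$ is uniformly bounded). The main obstacle, as I see it, is to compute $N(t)$ explicitly --- using the special structure of $A_{12}$ (columns proportional to $(b,0,-2)^{T}$ and $(-b,0,-2)^{T}$) and of $\bar A_{21}$ (rows proportional to $(-bk_2,2k_2k_3,2k_4)$ and $(bk_2,2k_2k_3,2k_4)$) --- and to verify that its diagonal entries are uniformly negative and its determinant uniformly positive; this is where the lower bound $|2v_{\rm ref}\pm b\omega_{\rm ref}|\ge\mu_2$ in Assumption~\ref{ass:01} enters. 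Granting this, $\operatorname{diag}(\gamma_1,\gamma_2)N(t)$ has trace $\le-c_1(\gamma_1+\gamma_2)<0$ and determinant $\gamma_1\gamma_2\det N(t)>0$, so its eigenvalues have real part bounded above by a negative quantity proportional to the gains, and the $O((\gamma_1+\gamma_2)^2)$ higher-order term in the perturbation expansion is dominated for $\gamma_1,\gamma_2$ small. Equivalently and fully elementarily, one may factor the characteristic polynomial as $\det(\lambda I_3-A_{11})\cdot\det(\lambda I_2-A_{21}(\lambda I_3-A_{11})^{-1}A_{12})$, localise its five roots using the cubic's Routh--Hurwitz bounds together with $(\lambda I_3-A_{11})^{-1}=-A_{11}^{-1}+O(\lambda)$ near $\lambda=0$, and read off the uniform margin. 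Taking $\epsilon$ to be the minimum of $\epsilon_1/2$ and the negative bound obtained for the slow eigenvalues then yields $\operatorname{Re}[\lambda(A(t))]\le-\epsilon<0$ for all $t$, for $\gamma_1,\gamma_2$ sufficiently small.
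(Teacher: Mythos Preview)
Your approach is correct but genuinely different from the paper's. The paper works directly with the full degree-five characteristic polynomial of $A(t)$, whose coefficients $\alpha_i(t)$ it computes explicitly, and applies the Li\'enard--Chipart criterion: it checks $\alpha_i(t)\ge\delta_1>0$, then $c_2(t)=\alpha_1\alpha_2-\alpha_3>0$ by an explicit expansion (their lower bound $k_1^2 k_4 v_{\rm ref}/(2k_3)$ is exactly your cubic Routh--Hurwitz term $k_1^2\beta$), and finally $c_3(t)>0$ by a perturbative argument carried out at the level of the Hurwitz determinants: the dominant contributions to $c_2\alpha_3\alpha_4$ are linear in $\gamma_1,\gamma_2$, while the remainder $\rho(t)$ is $O(\gamma_i^2)$. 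Your block/Schur-complement decomposition instead separates three ``fast'' eigenvalues controlled by $A_{11}(t)$ from two ``slow'' ones governed to first order by $M(t)=-A_{21}A_{11}^{-1}A_{12}$. The paper's route is entirely explicit but algebra-heavy; yours is more structural and makes the mechanism transparent (a timescale separation driven by the adaptation gains), at the cost of leaving the verification that $N(t)=-\bar A_{21}A_{11}^{-1}A_{12}$ has uniformly negative diagonal entries and uniformly positive determinant as an unexecuted computation of comparable size. Both routes are valid; if you pursue yours, that computation of $N(t)$ is the load-bearing step and should be written out in full, since it is precisely there that the hypothesis $|2v_{\rm ref}\pm b\omega_{\rm ref}|\ge\mu_2$ is used.
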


\begin{proof}  
  Consider the characteristic polynomial of matrix $A(t)$, which
  is given by
  \begin{equation}\label{eq:poly_characteristic}
    p(s) = s^5 + \alpha_1(t) s^4 + \alpha_2(t) s^3 + \alpha_3(t) s^2 +
    \alpha_4(t) s + \alpha_5(t),
  \end{equation}
  whose coefficients $\alpha_i(t)$ are given in
  Appendix~\ref{appx:poly}.  
  Under Assumption~\ref{ass:01}, it is clear
  there exists a $\delta_1 > 0$ such that $\alpha_i(t) \ge \delta_1$, 
  for $i=1,\dotsc,5$.

  The second form of the stability criterion of Li\'{e}nard and
  Chipart \citep{Gantmacher:1959:TM}, given in
  Appendix~\ref{appx:LienardChipart}, establishes the necessary
  and sufficient conditions for all the roots of the real
  polynomial $p(s)$, given by \eqref{eq:poly_characteristic}, to
  have negative real parts. These conditions are given by
  \begin{enumerate}
  \item $\alpha_1(t) >0$,  $\alpha_3(t) >0$, and $\alpha_5(t) >0$;
  \item $0 <c_2(t) := \alpha_1(t) \alpha_2(t) - \alpha_3(t)$;
  \item $0 <c_3(t) :=
    \big(\alpha_1(t) \alpha_2(t) - \alpha_3(t)\big) \big(\alpha_3(t)
    \alpha_4(t) - \alpha_2(t) \alpha_5(t)\big) - \big(\alpha_1(t)
    \alpha_4(t) - \alpha_5(t) \big)^2$.
  \end{enumerate}

  Since it is already known that $0 < \delta_1 \le \alpha_i(t)$, for
  $i=1,\dotsc,5$, the first condition is satisfied, and the
  second condition holds true by noticing that the expression
  \begin{equation*}
    \begin{split}
      c_2(t) &=  \alpha_1(t) \alpha_2(t) - \alpha_3(t)
      \\
      & = \frac{1}{16 a_l(t) a_r(t) b^2 k_2 k_3^2}
      \Big\{
      a_l(t) \Big[4 a_r(t) b^2 k_2
        \Big(k_1 (k_4^2 v_{\rm{ref}}^2(t) + 4 k_3^2 \omega_{\rm{ref}}^2(t) )
        \\ & \qquad
        +2 k_1^2 k_3 k_4 v_{\rm{ref}}(t)
        + k_2 k_3 v_{\rm{ref}}(t) (k_4 v_{\rm{ref}}^2(t) + 2 k_3^2
        \omega_{\rm{ref}}^2(t) ) \Big)
        \\
        & \qquad
        +\gamma_2 k_3 v_1^2(t) \left(b^2 k_1 k_2 k_3+2 (k_2^2 k_3^4
        v_{\rm{ref}}(t) + v_{\rm{ref}}(t) ) \right) \Big]
      \\ & \qquad
      +a_r(t) \gamma_1 k_3 v_2^2(t)
      \left( b^2 k_1 k_2 k_3 + 2 (k_2^2 k_3^4
      v_{\rm{ref}}(t) + v_{\rm{ref}}(t) ) \right) \Big\}
      \\
      & 
      > \frac{k_1^2 k_4 v_{\rm{ref}}(t)}{2 k_3}
      \ge v_{\rm{ref}}(t) k_1^2 \sqrt{k_2} \ge
    	  \mu_1 k_1^2 \sqrt{k_2}>0
    \end{split}
  \end{equation*}
  is positive.

  \newcommand\bc{\mathcal{A}}  
  
    It now remains to show the third condition.  For that purpose,
    first notice that the expression for $c_3(t)$ can be rearranged as
    \begin{equation}\label{eq:c3_alternative} 
      c_3(t) = c_2(t) \alpha_3(t) \alpha_4(t) - \rho(t),
      \qquad
      \rho(t) = \alpha_1^2(t) \alpha_4^2(t) + \alpha_5(t) \big(
      c_2(t) \alpha_2(t) - 2 \alpha_1(t) \alpha_4(t) + \alpha_5(t) \big)
    \end{equation}
    and that $\alpha_3(t)$ is bounded below by
    \begin{equation*}
      \alpha_3(t) \ge  v_{\rm{ref}}^2(t) k_2 k_1 /2
    \end{equation*}
    This last inequality directly implies that
    \begin{equation*}
      c_2(t) \alpha_3(t) = \alpha_1(t) \alpha_2(t) \alpha_3(t) -
      \alpha_3^2(t) \ge v_{\rm{ref}}^3(t) k_1^3 \sqrt{k_2^3} /2 \ge 
      \delta_2 >0,
      \quad\text{with $\delta_2 = \mu_1^3 k_1^3 \sqrt{k_2^3}/2$}
    \end{equation*}  
    Notice also that $c_2(t)\alpha_3(t)\alpha_4(t)$ is bounded below
    as follows
    \begin{equation}\label{eq:c2a3a4}
      c_2(t)\alpha_3(t)\alpha_4(t) \ge 
        \frac{k_1 k_3 \mu_1 \mu_2^2 \delta_2}{4 b^2 \bc} \gamma_1
      + \frac{k_1 k_3 \mu_1 \mu_2^2 \delta_2}{4 b^2 \bc} \gamma_2
      + \frac{k_4 \mu_2^4 \delta_2}{16 b^2 k_2 \bc^2}    \gamma_1 \gamma_2 
      > 0
    \end{equation}  
    where $\bc = \sup \left\{a_l(t),a_r(t)\right\}$ is finite
    by Assumption~\ref{ass:01}.
    Likewise, the term $\rho(t)$ in \eqref{eq:c3_alternative} can also
    be written as a polynomial expression in $\gamma_1$ and $\gamma_2$
    as follows
    \begin{equation}\label{eq:rho_poly} 
      \rho(t) =  
      p_{11}(t)\gamma_1\gamma_2	
      + p_{20}(t) \gamma_1^2  
      + p_{02}(t)\gamma_2^2  + 	p_{21}(t) \gamma_1^2 \gamma_2  
      + p_{12}(t) \gamma_1 \gamma_2^2 + \dotsb
    \end{equation}
    where the coefficients $p_{ij}(t)$ are bounded by
    Assumption~\ref{ass:01}. Thus, for sufficiently small $\gamma_1$
    and $\gamma_2$, the first two terms on the right-hand side of
    \eqref{eq:c2a3a4} dominate all the remaining terms in the
    expression for $c_3(t)$ given by \eqref{eq:c3_alternative}, and
    hence there is a $\delta_3>0$ such that $c_3(t) \ge \delta_3$ for
    all $t\ge 0$.

  This concludes our proof and it has been shown (see
  Theorem~\ref{theo:lienard} and Remark~\ref{rem:strictly} in
  Appendix~\ref{appx:LienardChipart}) that $\mbox{Re}[\lambda_j(A(t))]
  \leq -\epsilon <0$.
  
\end{proof}

Therefore, from Proposition~\ref{thm:positividade}, all conditions of
Theorem~\ref{theo:rosenbrock63} have been satisfied, which means there
is some $\epsilon_d > 0$ such that if $|\dot{a}_{ij}|\leq \epsilon_d$,
the origin $e_a(t)=0$ of the linear time-varying system
\eqref{eq:ltv_system} is exponentially stable. From
\eqref{eq:linearization_matrix}, it is readily seen that, for
arbitrarily small $\epsilon_d > 0$, if $\dot{v}_{\rm{ref}}(t)$,
$\dot{\omega}_{\rm{ref}}(t)$, $\dot{a}_r(t)$, and $\dot{a}_l(t)$ are
bounded by sufficiently small values, it is possible to enforce
$|\dot{a}_{ij}(t)|\leq \epsilon_d$. In this case, the origin $e_a=0$
of the linear time-varying system \eqref{eq:ltv_system} is
exponentially stable and, from Theorem~\ref{theo:khalil4133ed}, the
origin $e_a(t)=0$ of the nonlinear system \eqref{eq:nominal_system} is
also exponentially stable.

\subsection{Ultimate boundedness of solutions of the perturbed
  system}\label{sec:ultimateboundedness}

Since the origin $e_a(t)=0$ of the nominal system \eqref{eq:nominal_system}
has been shown to be exponentially stable, under
Assumption \ref{ass:01}, Lemma~\ref{lem:khalil91} (given in
Appendix~\ref{appx:stabilityperturbed}) is applied to show that the
origin $e_a(t)=0$ of the perturbed system
\eqref{eq:nominal_system_vanishing} is also exponentially stable.
By exponential stability of the origin, the existence of a Lyapunov
function for the nominal system \eqref{eq:nominal_system} that
satisfies conditions
\eqref{appx:eq:contraint_01}-\eqref{appx:eq:contraint_03} is ensured by a 
Lyapunov converse theorem (see, for instance, Theorem~4.14 from 
\cite{Khalil:2002:NS}), provided that $\partial f/\partial e_a$ is bounded 
on a domain $D = \{e_a \in \mathbb{R}^5 \;|\; \|e_a(t)\|<d \}$,
uniformly in $t$. This is indeed the case, since $f_a(t,e_a)$ is
composed of polynomial and sinusoidal functions of $e_a$ and
continuous time-varying functions that are bounded for all $t \ge 0$.
To successfully conclude the application of Lemma~\ref{lem:khalil91},
it remains to show that conditions
\eqref{appx:eq:contraint_04_vanishing} and
\eqref{appx:eq:contraint_05_vanishing} are satisfied.  To verify this,
first write $g(t,e_a) = \sigma(t) \bar{g}(t,e_a)$. The vector function
$\bar{g}(t,e_a)$ is continuously differentiable in $e_a$, and thus
$\|\partial \bar{g}/\partial e_a\|\le \bar{b}$ on $[0,\infty)\times D$
for some $\bar{b}>0$ (the bound $b$ is independent of $t$, due to
Assumption \ref{ass:01}). Thus $\bar{g}(t,e_a)$ is Lipschitz in $e_a$
on the compact domain $D$, uniformly in $t$, with Lipschitz constant
$\bar{b}$ (see, for instance, \cite[Lemma 3.1]{Khalil:2002:NS}). Since
$\bar{g}(t,0)=0$ and $\bar{g}(t,e_a)$ is Lipschitz in $e_a$, it
follows that $\|g(t,e_a\| \le |\sigma(t)| \bar{b} \|e_a\|$, and
condition \eqref{appx:eq:contraint_04_vanishing} is verified; this
also implies that, for a sufficiently small bound on $|\sigma(t)|$,
condition \eqref{appx:eq:contraint_05_vanishing} is verified. Thus,
the origin of the nominal system perturbed by the vanishing term,
given by \eqref{eq:nominal_system_vanishing}, is also exponentially
stable, provided that the magnitude of the lateral slip is bounded by
a sufficiently small value.

Since the origin $e_a(t)=0$ is an exponentially stable equilibrium point
of the system \eqref{eq:nominal_system_vanishing}, 
Lemma~\ref{lem:khalil92} (given in Appendix~\ref{appx:stabilityperturbed}) 
can be applied to show
that the solution $e_a(t)$ of the perturbed system 
\eqref{eq:nonautonomous_perturbed_system} is ultimately bounded by a
small bound. For this purpose, the perturbed system
\eqref{eq:nonautonomous_perturbed_system} is rewritten as
\begin{equation*}
  \dot{e}_a = f(t,e_a) + g_n(t,e_a)
\end{equation*}
with $f(t,e_a) = f_a(t,e_a) + g(t,e_a)$.
To use Lemma~\ref{lem:khalil92}, it is necessary to find a Lyapunov
function $V(t,e_a(t))$ for the system
\eqref{eq:nominal_system_vanishing} that satisfies the conditions
\eqref{appx:eq:contraint_01}, \eqref{appx:eq:contraint_02}, and
\eqref{appx:eq:contraint_03}. This Lyapunov function indeed exists,
since the Lyapunov function for the system \eqref{eq:nominal_system}
is also a Lyapunov function for the system
\eqref{eq:nominal_system_vanishing}. This fact is shown in the proof
of Lemma~\ref{lem:khalil91} in \cite{Khalil:2002:NS}. To conclude the
application of Lemma~\ref{lem:khalil92}, it remains to show that
condition \eqref{appx:eq:contraint_04_nonvanishing} is satisfied. 
The proof of this fact uses exactly the same reasoning used to
prove that conditions \eqref{appx:eq:contraint_04_vanishing} and
\eqref{appx:eq:contraint_05_vanishing} are satisfied.  
This concludes the proof of Theorem \ref{thm:main_result}.

\section{Numerical results}\label{sec:numerical_results}
This section presents some numerical simulations of the proposed
adaptive control strategy given in
Section~\ref{sec:adaptive_control_strategy}. These simulations aim to
provide an illustration of our theoretical results, and give an idea
of the performance of the adaptive kinematic controller (AKC) whose
block diagram is shown in
Figure~\ref{fig:adaptive_kin}.
To illustrate how adaptation affects performance, the non-adaptive
kinematic controller (NKC) given
by \eqref{eq:auxiliary_control_input_without_slip} will also be
simulated, as it can be viewed as a version of the adaptive scheme
where adaptation is ``turned off''.

In all simulations, the physical
parameters of the wheeled robot model, taken from 
\cite{Ryu:2011:DFB}, are given by $b=0.1624$ m and
$r=0.0825$ m. The reference trajectory used in the simulations is also
fixed, and is described in Appendix~\ref{appx:reference}. Fixing the
reference trajectory in this numerical analysis is justified by the
fact that, in many applications, the robot must follow a specific
planned trajectory. Figure~\ref{fig:ref_input_prof01} shows the
reference trajectory $(x_{\rm{ref}}(t),y_{\rm{ref}}(t))$ generated
using the input $\eta_{\rm{ref}}(t)={(v_{\rm{ref}}(t),
\omega_{\rm{ref}}(t))}^T$. Note that the reference trajectory
can be divided in three parts. The first part consists of a
straight line path that occurs on the interval 
$0 \ \mbox{s} \le t < 25 \ \mbox{s}$, the second part consists
of a curved path with time-varying radius of curvature that
occurs on the interval $25 \ \mbox{s} \le t < 45 \ \mbox{s}$,
and the third part consists of a straight line path that
occurs on the interval $t \geq 45 \ \mbox{s}$.

\begin{figure}[ht]
  \centering
  \subfigure{%
    \begin{psfrags}
      \psfrag{t}[tc][c]{$t$ (s)}
      \psfrag{ref}[c][c]{$v_{\rm{ref}}$ (m/s) and $\omega_{\rm{ref}}$ (rad/s)}
      \psfrag{vref}[l][l]{$v_{\rm{ref}}$}
      \psfrag{wref}[l][l]{$\omega_{\rm{ref}}$}
      \includegraphics[width=0.5\linewidth]{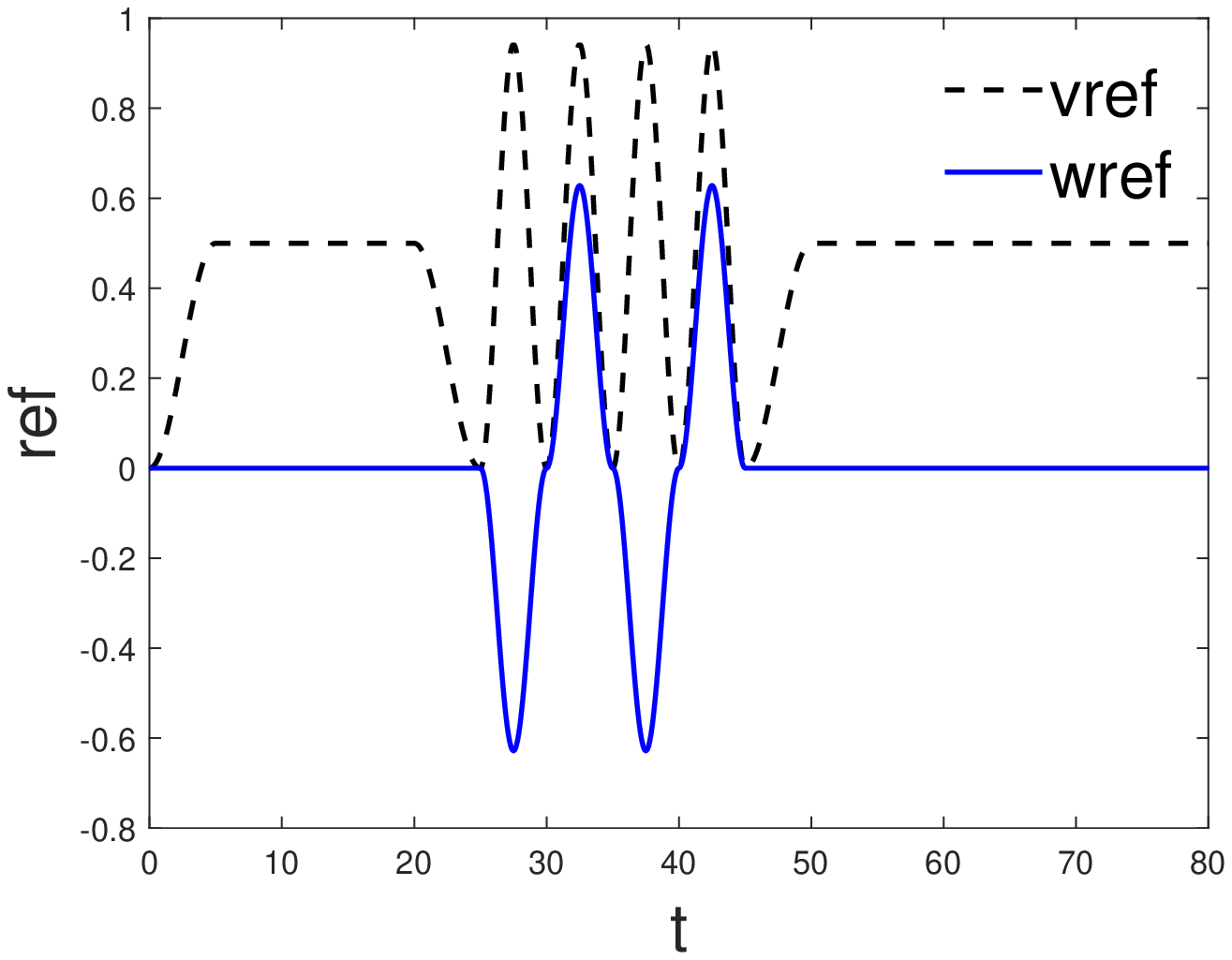}
    \end{psfrags}
  }%
  \subfigure{%
    \begin{psfrags}
      \psfrag{X}[tc][c]{$x_{\rm{ref}}(t)$ (m)}
      \psfrag{Y}[c][c]{$y_{\rm{ref}}(t)$ (m)}
      \includegraphics[width=0.5\linewidth]{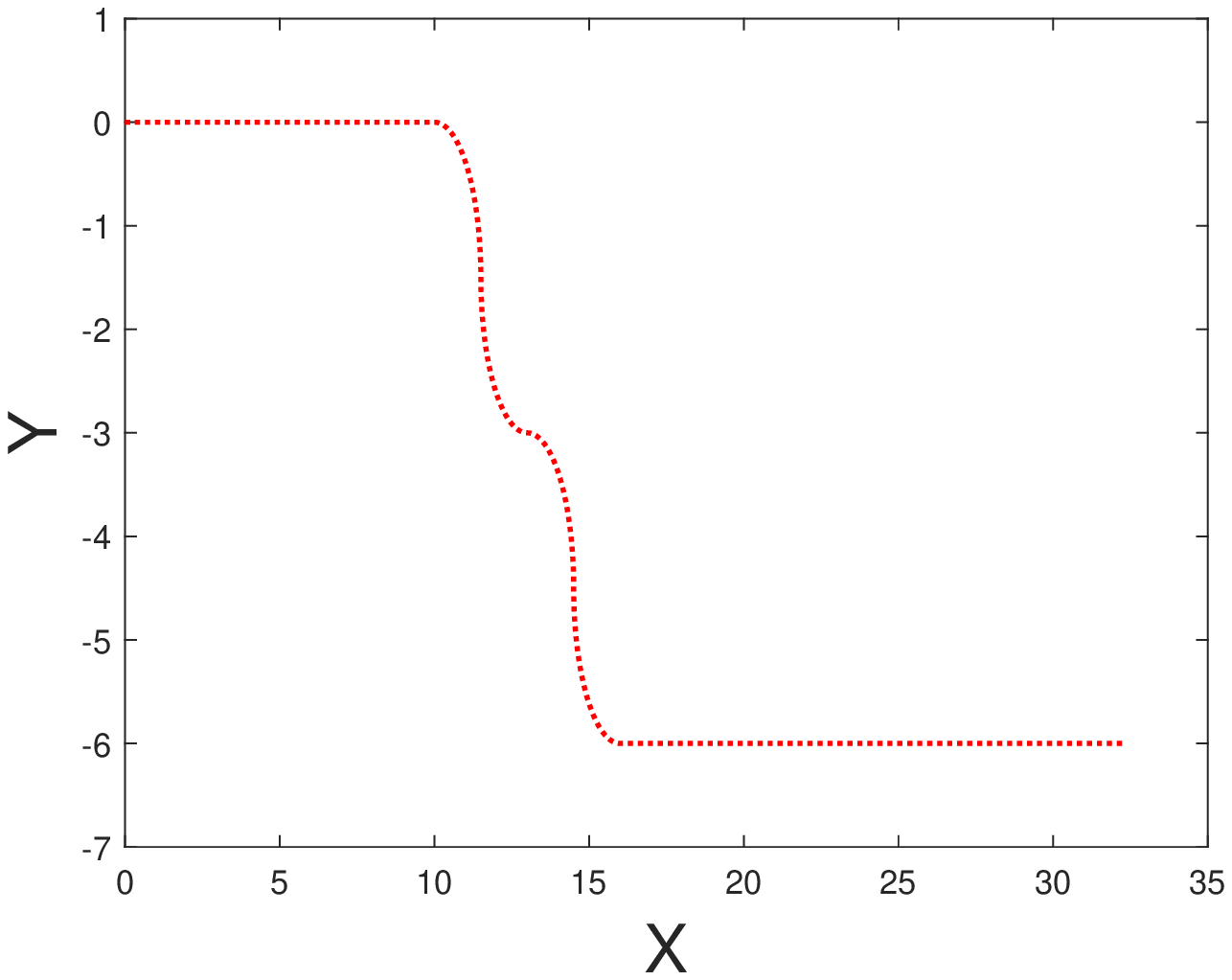}
    \end{psfrags}
  }
  \caption{Reference trajectory 
  $(x_{\rm{ref}}(t),y_{\rm{ref}}(t))$
    generated by the reference input
    $\eta_{\rm{ref}}(t)={(v_{\rm{ref}}(t),\omega_{\rm{ref}}(t))}^T$.\label{fig:ref_input_prof01}}
\end{figure}

To simulate the controlled robot, it is also necessary to select the
controller gains.
A standard criterion for choosing optimal controller
gains involves minimizing the cost function given by
\begin{equation*}
  \mathcal{F} = \int_0^{t_f}  e^T Q e  + 
  (\xi-\xi_{\rm{ref}})^T R (\xi-\xi_{\rm{ref}}) \,\rm{d}t
\end{equation*} 
with $Q>0$ and $R\ge 0$, which represents  weights energy of
states and control actions spent while the robot is moving. This
criterion is adopted to select gains for both the NKC and AKC schemes,
using $Q=I$ and $R = 0.05I$. Notice that the term $\xi-\xi_{\rm{ref}}$
represents the difference between the controlled wheel speeds and the
reference wheel speeds resulting from the reference trajectory
$\eta_{\rm{ref}}$. This difference depends on the longitudinal slip
parameters $a_l(t)$ and $a_r(t)$, which also need to be provided so that
$\mathcal{F}$ can be minimized. Since the NKC is not designed to take
into account the slip, for the purposes of gain selection, the robot
controlled by the NKC is subjected to zero slip. Thus, for the NKC,
$\xi-\xi_{\rm{ref}} =
\Phi_0^{-1} (\eta_c(t)-\eta_{\rm{ref}})$, where
$\Phi_0$ denotes the matrix \eqref{eq:slip_matrix} with $a_l = a_r =
1$. The AKC, on the other hand, has to be able to deal with the slip.
For this reason, to select the gains for the AKC, a ``training'' slip
profile given by $a_l(t) = 5/3 (H(t-15)-H(t-50)$ and $a_r(t) = 5/2
(H(t-30)-H(t-65))$ is used , where $H(\cdot)$ denotes the Heaviside
step function. Using this slip, the controlled and reference wheel
speeds for the AKC are given by $\xi = \hat{\Phi}(t)^{-1}\eta_c(t)$
and $\xi_{\rm{ref}} = \Phi(t)^{-1}\eta_{\rm{ref}}$, respectively.

To select controller gains for the NKC and AKC schemes, a grid search
over the gains $k_i$ ($i=1,2,3$) was performed, and those gains that
yielded the smallest $\mathcal{F}$ were selected. The grid was chosen
such that $k_1$, $k_2$ and $k_3$ range over $20$ equally spaced points
in logarithmic scale between $10^{-1}$ and $10$. For the AKC scheme,
the values $\gamma_1=\gamma_2=3$ were fixed a priori. The minimum
value of $\mathcal{F}$ using the AKC scheme is indicated in
Figure~\ref{fig:calculate_parameters_prof01_akc} by a circle, and was
obtained when $(k_1,k_2,k_3)=(1.44,10,1.83)$. Using the NKC scheme,
the gains $(k_1,k_2,k_3)=(0.26,10,0.1)$ provided the smallest value
for $\mathcal{F}$, as indicated in
Figure~\ref{fig:calculate_parameters_prof01_nac}. The set of gains
that provided a minimum $\mathcal{F}$ using each controller will be
called henceforth the best-known gains.

\begin{figure}[ht]
  \centering
  \subfigure[AKC design ($k_2=10$)]{%
    \begin{psfrags}
      \psfrag{k1}[c][c]{$k_1$}
      \psfrag{k3}[bc][tc]{$k_3$}
      \psfrag{F}[c][l]{\scriptsize $\log_{10}{(\mathcal{F})}$}
      \includegraphics[width=0.5\linewidth]{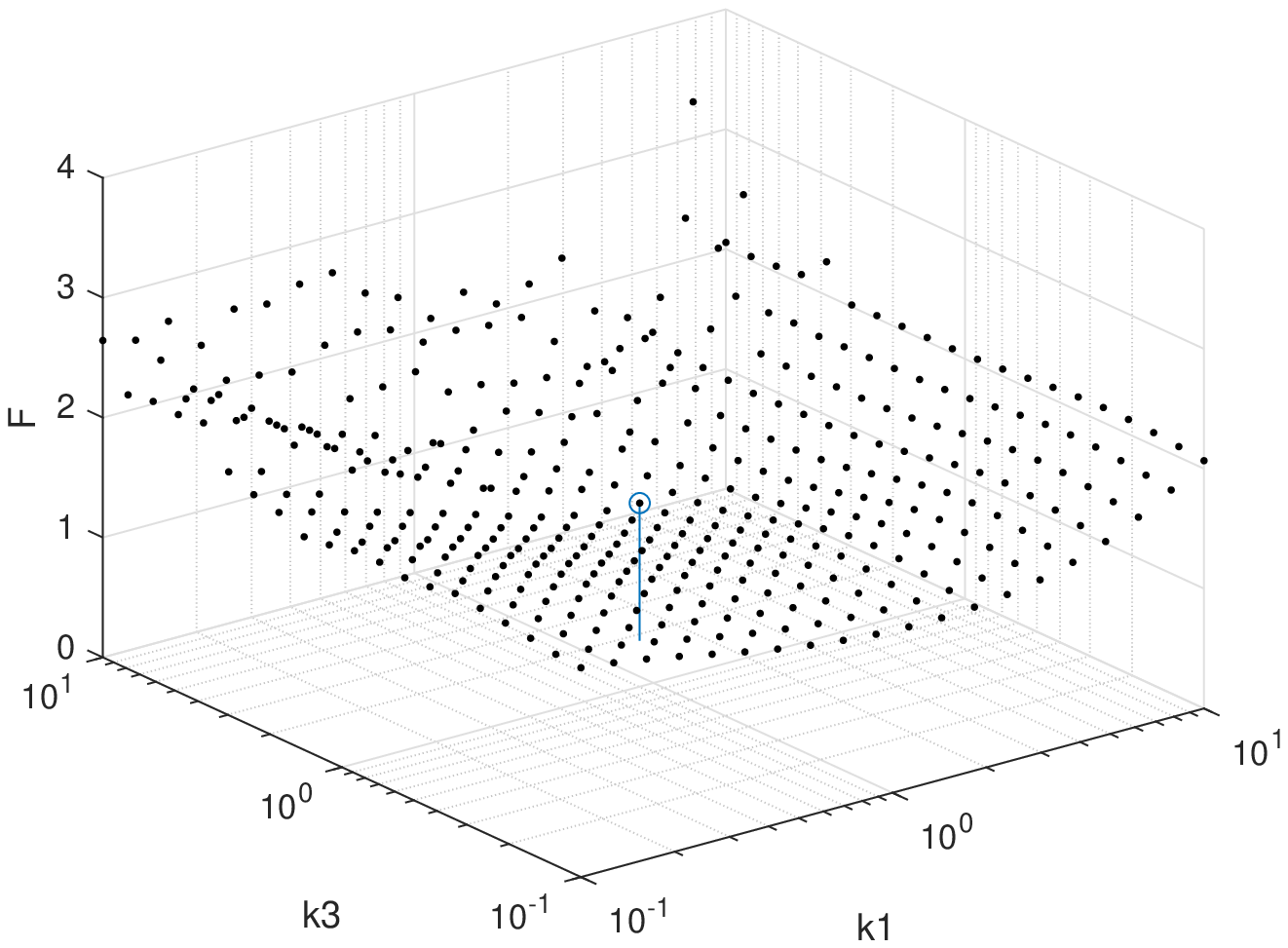}
    \end{psfrags}
    \label{fig:calculate_parameters_prof01_akc}
  }%
  \subfigure[NKC design ($k_2=10$)]{%
    \begin{psfrags}
      \psfrag{k1}[c][c]{$k_1$}
      \psfrag{k3}[bc][tc]{$k_3$}
      \psfrag{F}[c][l]{\scriptsize $\log_{10}{(\mathcal{F})}$}
      \includegraphics[width=0.5\linewidth]{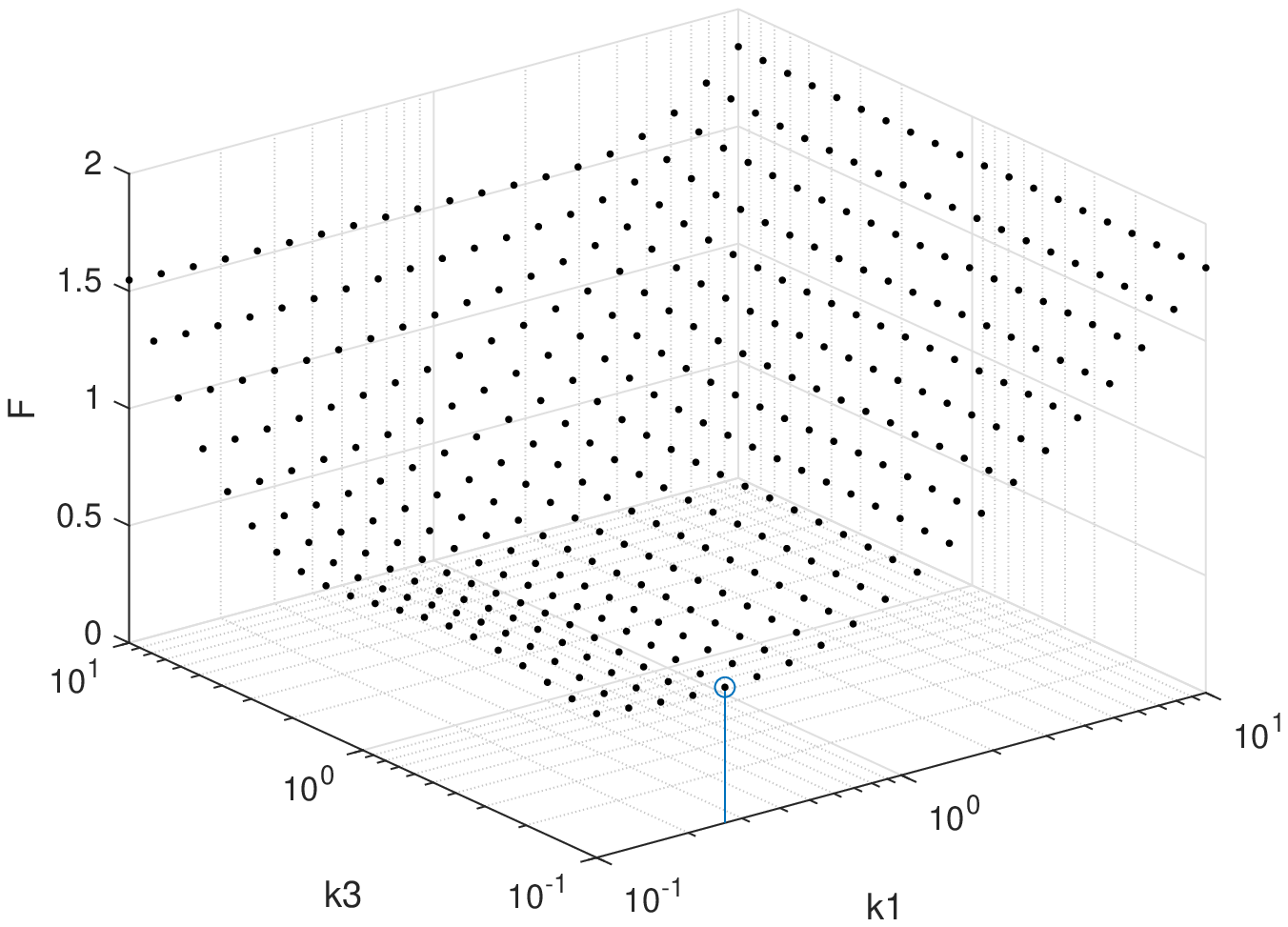}
    \end{psfrags}
    \label{fig:calculate_parameters_prof01_nac}
  }
  \caption{Value of $\mathcal{F}$ for $k_1$ and $k_3$ for the fixed value $k_2=10$.}
  \label{fig:calculate_parameters_prof01}
\end{figure}

Once the controller gains have been selected, to verify the performance of
the NKC and AKC schemes, the robot under both schemes is subjected to
a ``validation'' slip profile, which is a more challenging slip
profile, chosen so as to vary significantly in time. The longitudinal
slip parameters $a_l(t)$ and $a_r(t)$ and the lateral slip parameter
$\sigma(t)$ are chosen as the following nonlinear time-varying signals
\begin{equation*}
  \begin{split}
    a_l(t) &= 1/\left(0.7+ 0.3 \exp(-0.1 t) \cos(1.1t)\right) \\
    a_r(t) &= 1/\left(0.4+0.6 \exp(-0.08 t) \sin^2(0.02 t^2)
    \right) \\
    \sigma(t) &=3.0\exp(-0.03t)\text{sinc}(t-35.0)
  \end{split}
\end{equation*}

Figure~\ref{fig:slip_parameters_prof01} shows the longitudinal slip
parameters $a_l(t)$ and $a_r(t)$ and the lateral slip parameter
$\sigma(t)$. Physically, it is expected that a high variation of the
longitudinal slip will occur when the robot is accelerating. This
occurs, for example, when the robot starts its motion from rest. It is
also expected that a high variation of the lateral slip will occur
when the robot moves along a curve. Note that a high value of the
lateral slip occurs in the middle of the interval $25 \ \mbox{s} \le t
< 45 \ \mbox{s}$, where the reference trajectory is a curved path. It
is well known that the lateral slip is zero during straight line
motion. However, different values of longitudinal slip may cause small
lateral slip.

\begin{figure}[ht]
  \centering
  \subfigure{%
    \begin{psfrags}
      \psfrag{t}[tc][c]{$t$ (s)}
      \psfrag{slipf}[c][c]{Longitudinal slip factors}
      \psfrag{aL}[l][l]{$a_l$}
      \psfrag{aR}[l][l]{$a_r$}
      \includegraphics[width=0.5\linewidth]{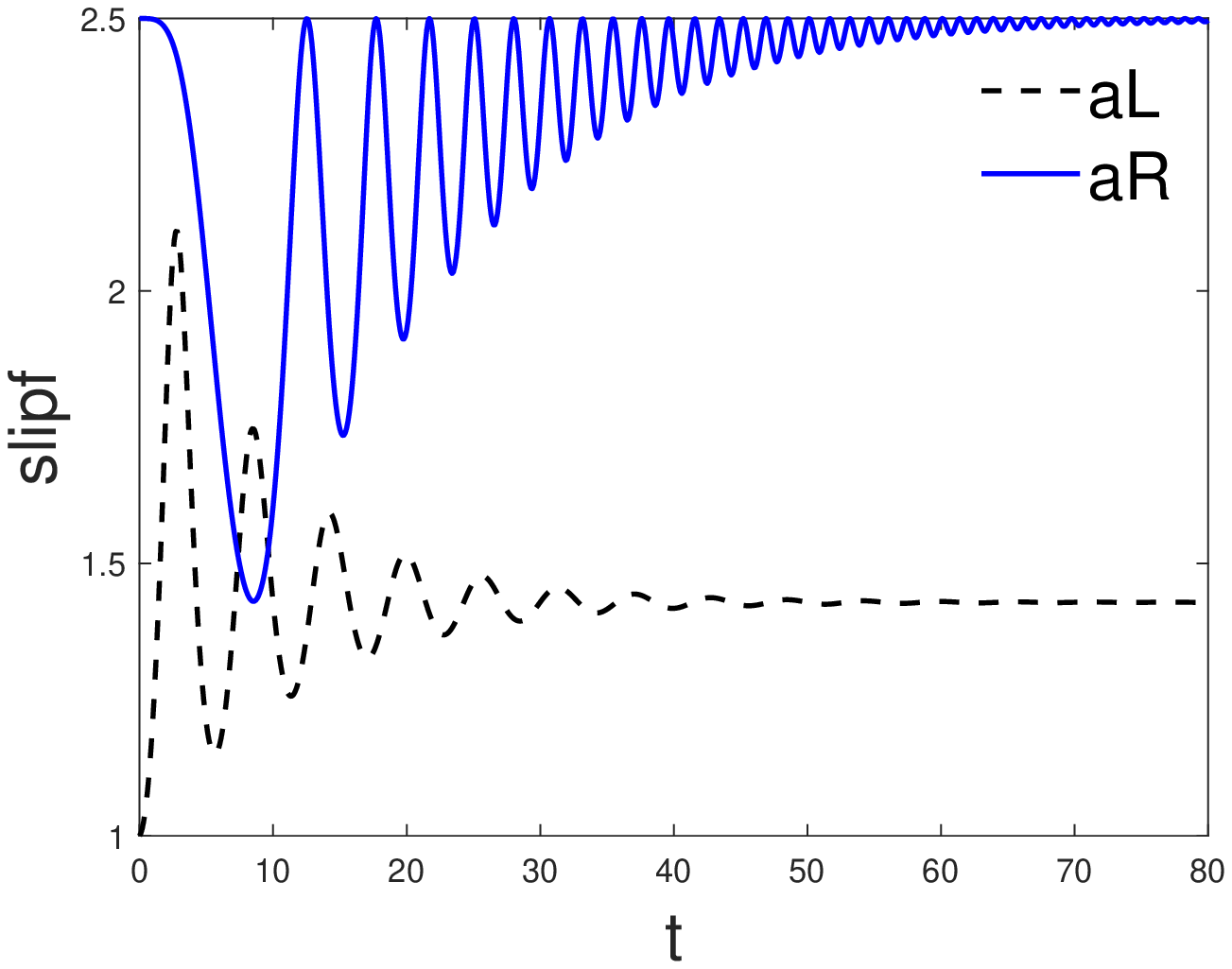}
    \end{psfrags}
  }%
  \subfigure{%
    \begin{psfrags}
      \psfrag{t}[tc][c]{$t$ (s)}
      \psfrag{slipp}[c][c]{Lateral slip parameter}
      \psfrag{sig}[l][l]{$\sigma$}
      \includegraphics[width=0.5\linewidth]{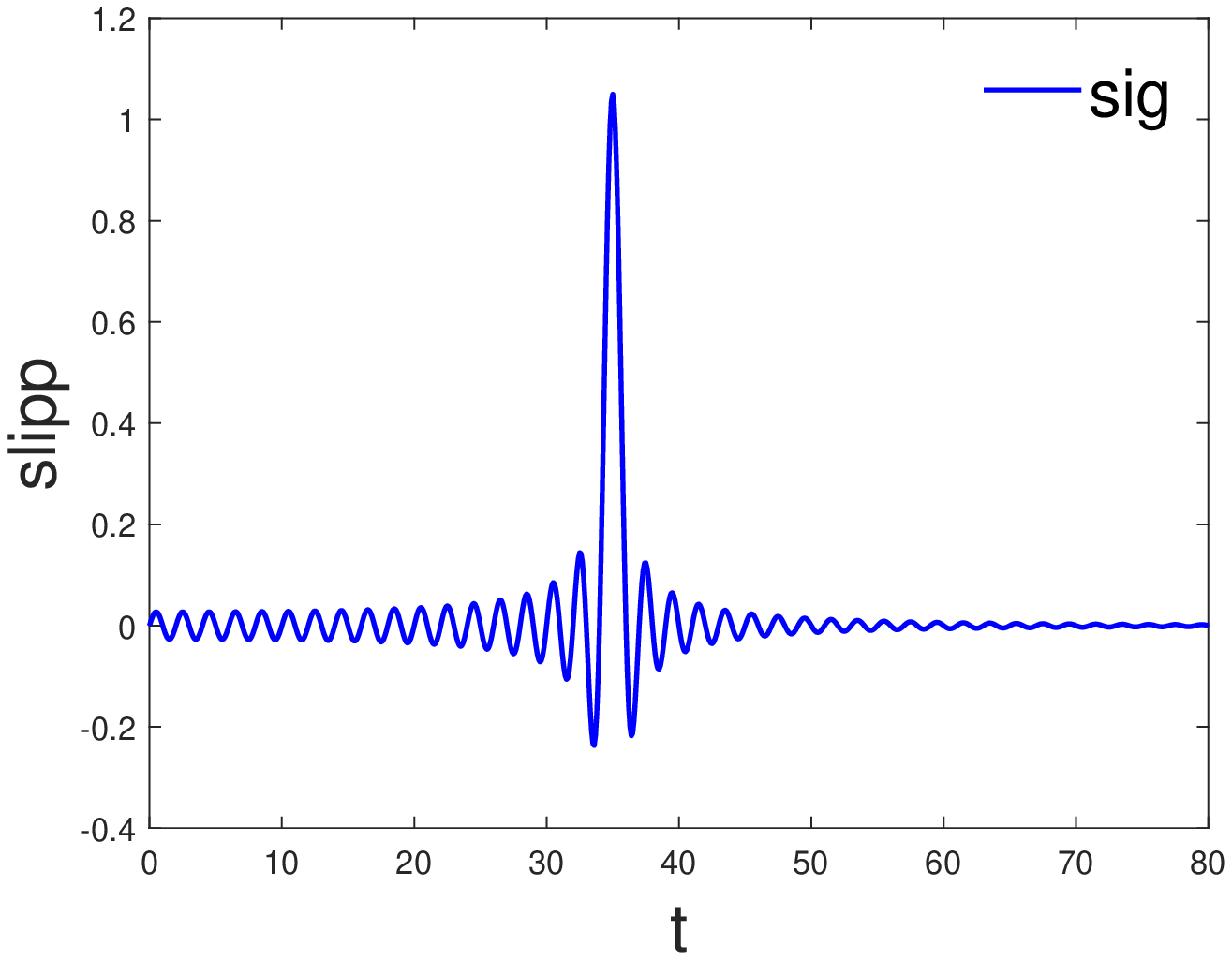}
    \end{psfrags}
  }
  \caption{Longitudinal slip parameters $a_l(t)$ and $a_r(t)$ and
    lateral slip parameter $\sigma(t)$.
    \label{fig:slip_parameters_prof01}}
\end{figure}

Figures~\ref{fig:tracking_ref_trajectory_prof01}
through~\ref{fig:effective_inputs_prof01} show the simulations using
the AKC and NKC schemes with the best-known gains.
Figure~\ref{fig:tracking_ref_trajectory_prof01} shows the robot
trajectory obtained using the NKC and AKC schemes. The dotted line,
the dashed line and the solid line stand, respectively, for the
reference trajectory, the robot trajectory obtained using the NKC
scheme, and the robot trajectory obtained using the AKC scheme. The
initial condition of the robot, indicated by a black circle, is
$q_{\rm{p}}(0)={(1/2,-3/4,-\pi/6)}^T$. Note that the robot with the
AKC scheme is able to follow the reference trajectory with a small
error. On the other hand, the NKC scheme is not able to compensate for
the slip, and consequently, the robot trajectory diverges with respect
to the reference trajectory.

\begin{figure}[ht]
  \parbox[t]{0.5\linewidth}{%
    \centering
    \begin{psfrags}
      \psfrag{X}[tc][c]{$x$ (m)}
      \psfrag{Y}[c][c]{$y$ (m)}
      \psfrag{TR}[l][l]{\footnotesize RT}
      \psfrag{CNC}[l][l]{\footnotesize NKC}
      \psfrag{CAC}[l][l]{\footnotesize AKC}
      \includegraphics[width=\linewidth]{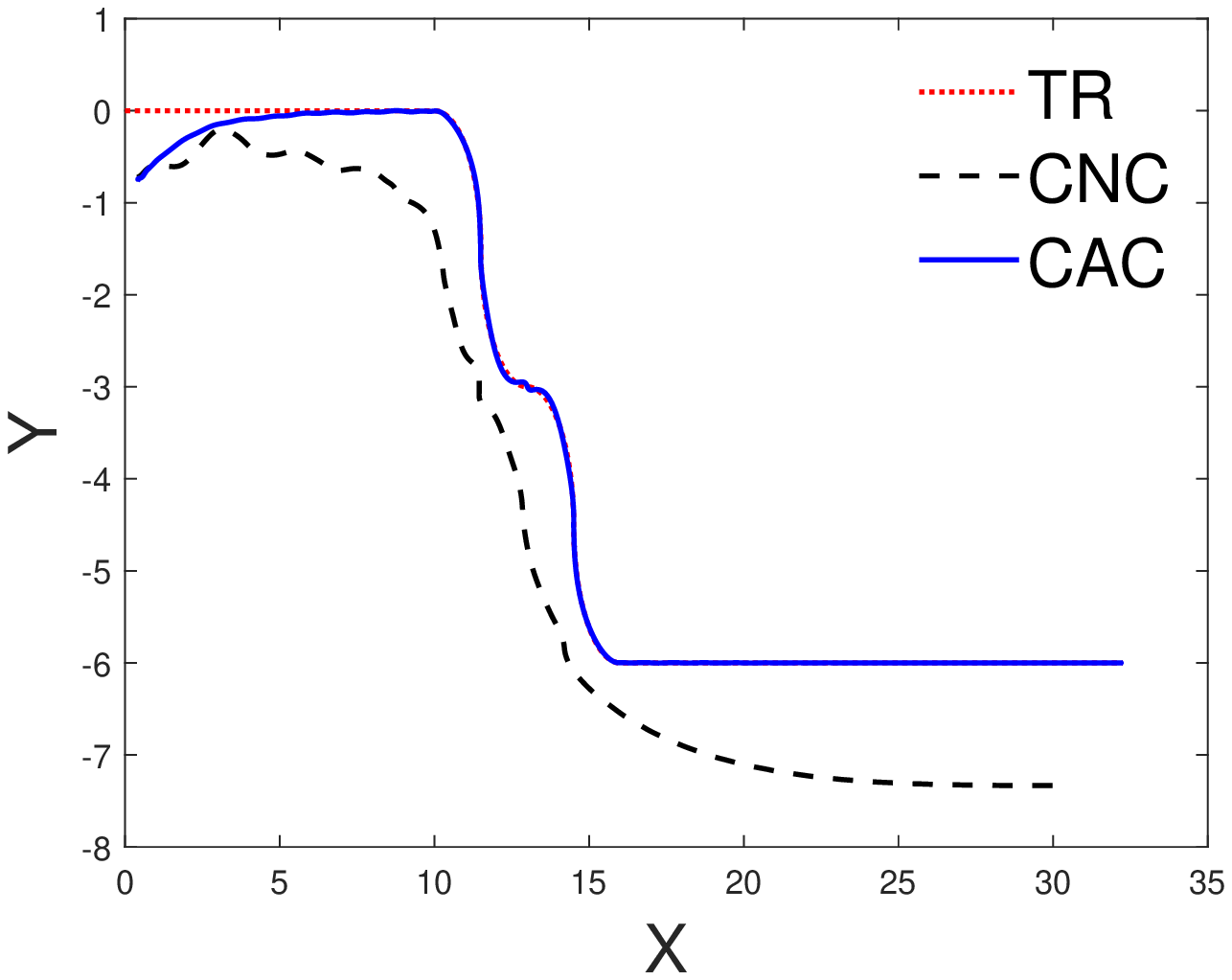}
    \end{psfrags}
    \caption{Reference trajectory (RT) and robot trajectory using
    the NKC and AKC schemes.}    
    \label{fig:tracking_ref_trajectory_prof01}
  }%
  \parbox[t]{0.5\linewidth}{%
    \centering
    \begin{psfrags}
      \psfrag{t}[tc][c]{$t$ (s)}
      \psfrag{slip}[c][c]{Estimation errors}
      \psfrag{aLt}[l][l]{\footnotesize $\tilde{a}_l$}
      \psfrag{aRt}[l][l]{\footnotesize $\tilde{a}_r$}
      \includegraphics[width=\linewidth]{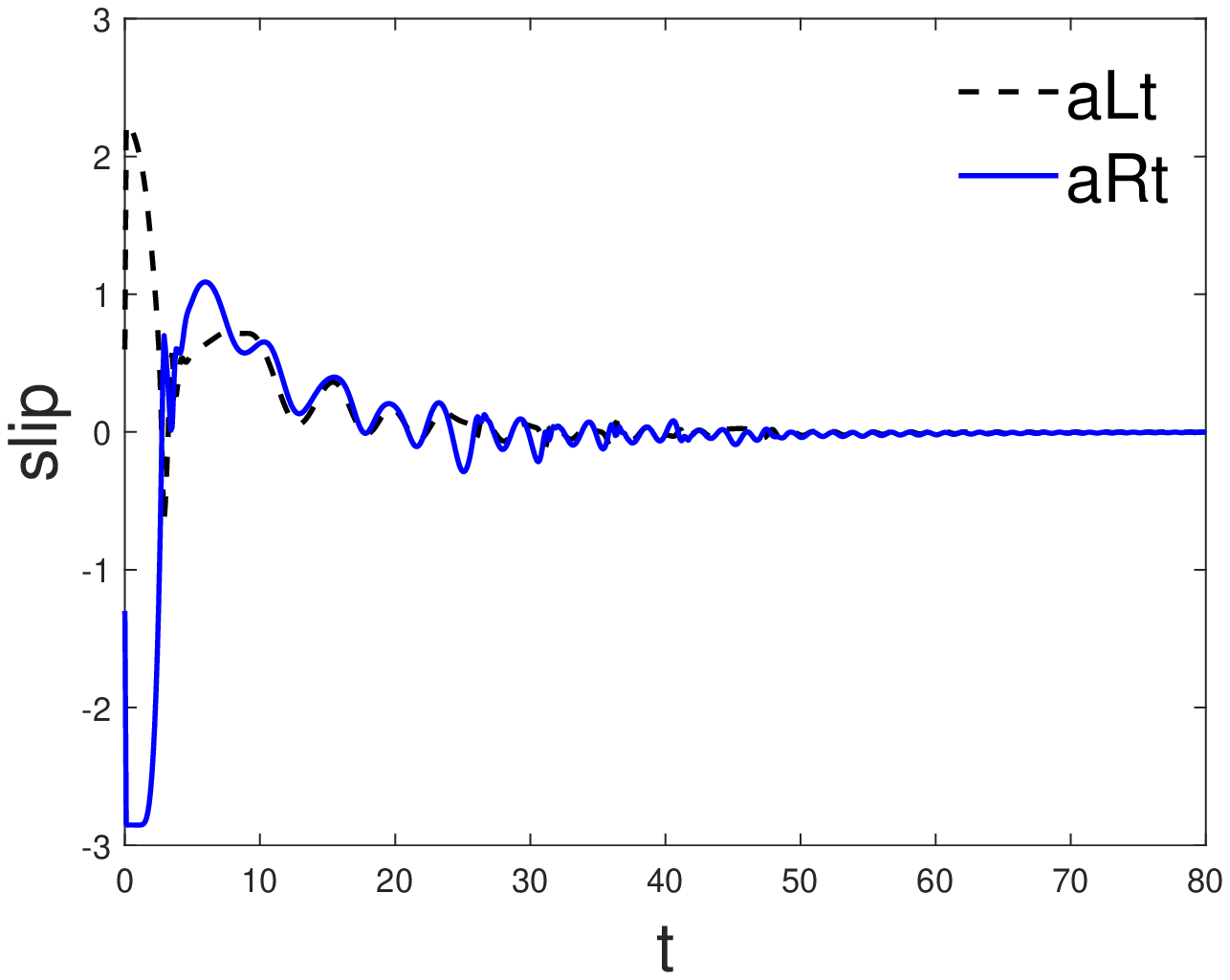}
    \end{psfrags}
    \caption{Estimation errors $\tilde{a}_l$ and
    $\tilde{a}_r$.}    
    \label{fig:estimation_error_aLR_prof01}
  }
\end{figure}

Figure~\ref{fig:estimation_error_aLR_prof01} shows the estimation
errors $\tilde{a}_l$ (dashed) and $\tilde{a}_r$ (solid). The initial
conditions of the update rule were taken as $\hat{a}_l(0)=1.6$ and
$\hat{a}_r(0)=1.2$, which differ from the true values $a_l(0)=1$ e
$a_r(0)=2.5$. Note that an estimation error occurs at the beginning of
the trajectory. However, this is not surprising, since convergence is
only guaranteed for constant slip parameters.

\begin{figure}[ht]
  \parbox{0.5\linewidth}{%
    \centering
    \begin{psfrags}
      \psfrag{t}[tc][c]{$t$ (s)}
      \psfrag{E1}[c][c]{$e_1$ (m)}
      \psfrag{CNC}[l][l]{\footnotesize NKC}
      \psfrag{CAC}[l][l]{\footnotesize AKC}
      \includegraphics[width=\linewidth]{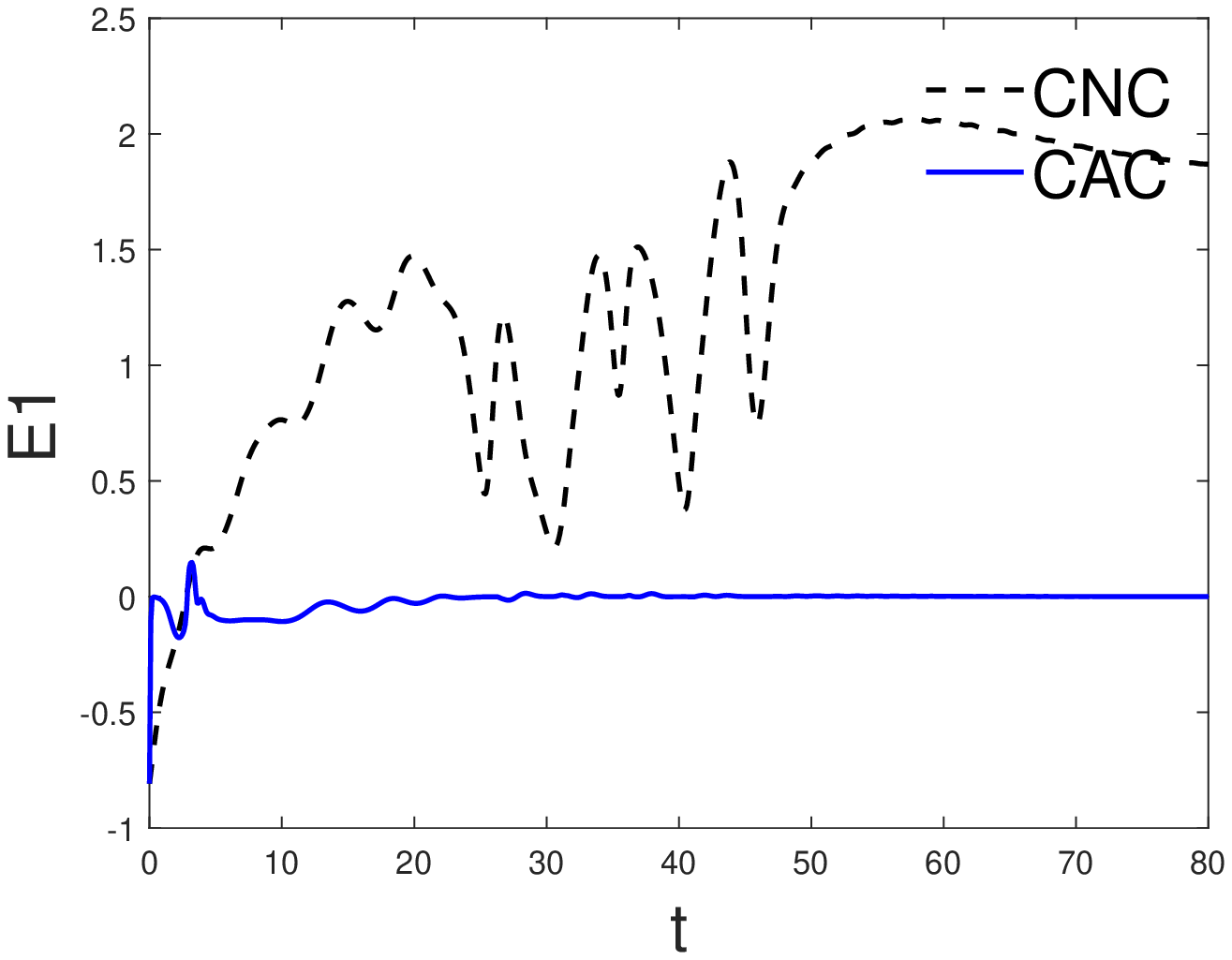}
    \end{psfrags}
  }%
  \parbox{0.5\linewidth}{%
    \centering  
    \begin{psfrags}
      \psfrag{t}[tc][c]{$t$ (s)}
      \psfrag{E2}[c][c]{$e_2$ (m)}
      \psfrag{CNC}[l][l]{\footnotesize NKC}
      \psfrag{CAC}[l][l]{\footnotesize AKC}
      \includegraphics[width=\linewidth]{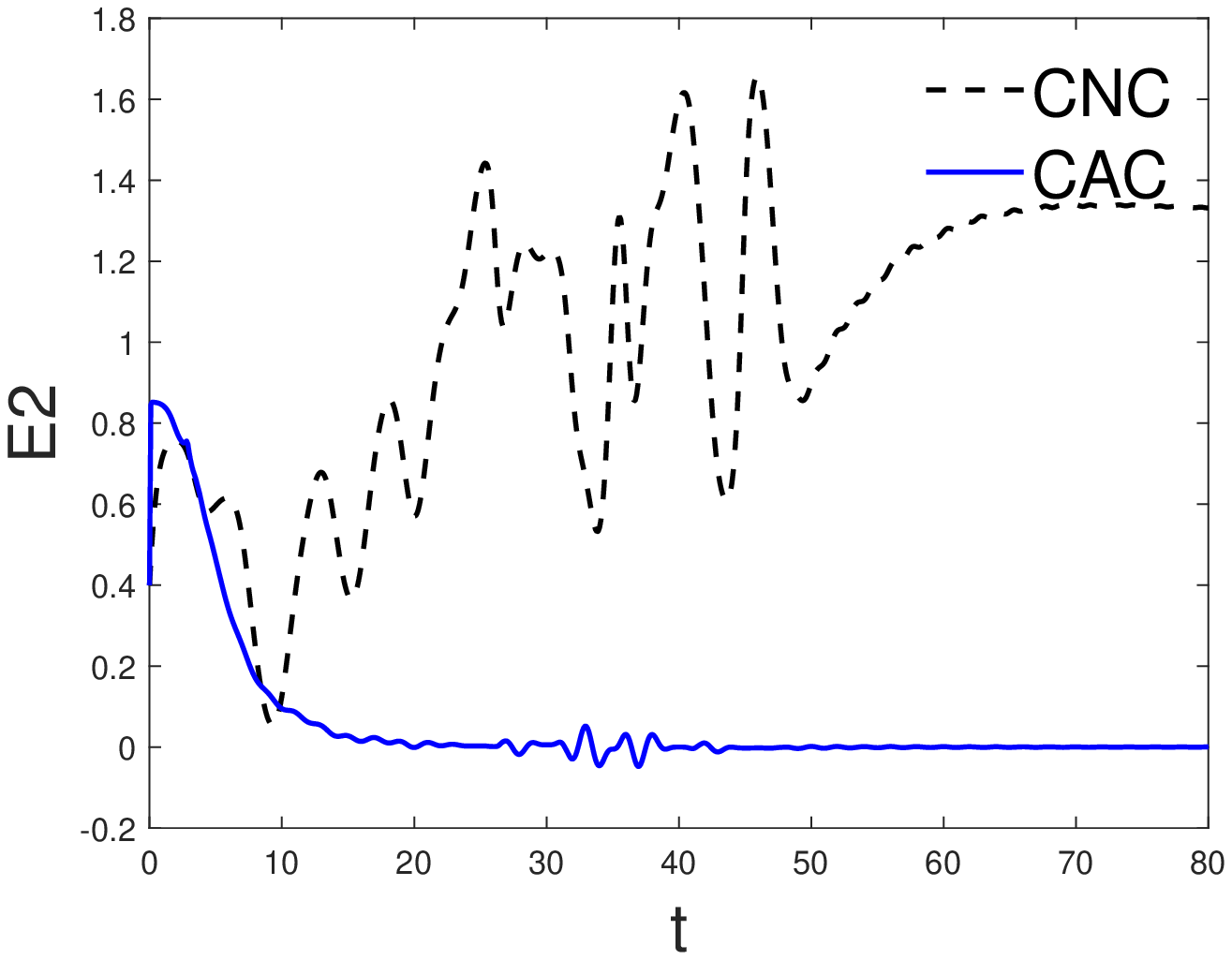}
    \end{psfrags}
  }
  \caption{Pose errors $e_1$ and $e_2$ for the NKC and AKC schemes.}
    \label{fig:posture_errors_prof01}
\end{figure}

Figure~\ref{fig:posture_errors_prof01} shows the pose error
$e={(e_1,e_2,e_3)}^T$. The dashed line stands for the NKC scheme, and
the solid line stands for the AKC scheme. As expected, the AKC scheme
achieves a significantly better performance compared to the NKC
scheme. Since the lateral slip $\sigma(t)$ and the longitudinal slip
derivatives $\dot{a}_l(t)$ and $\dot{a}_r(t)$ tend to zero on the
interval $t \geq 45 \ \mbox{s}$, the errors $e_1$, $e_2$ and $e_3$
tend to zero.

\begin{figure}[ht]
  \parbox{0.5\linewidth}{%
    \centering
    \begin{psfrags}
      \psfrag{t}[tc][c]{$t$ (s)}
      \psfrag{wLR}[c][c]{$\omega_l$ (rad/s)}
      \psfrag{CNC}[l][l]{\footnotesize NKC}
      \psfrag{CAC}[l][l]{\footnotesize AKC}
      \includegraphics[width=\linewidth]{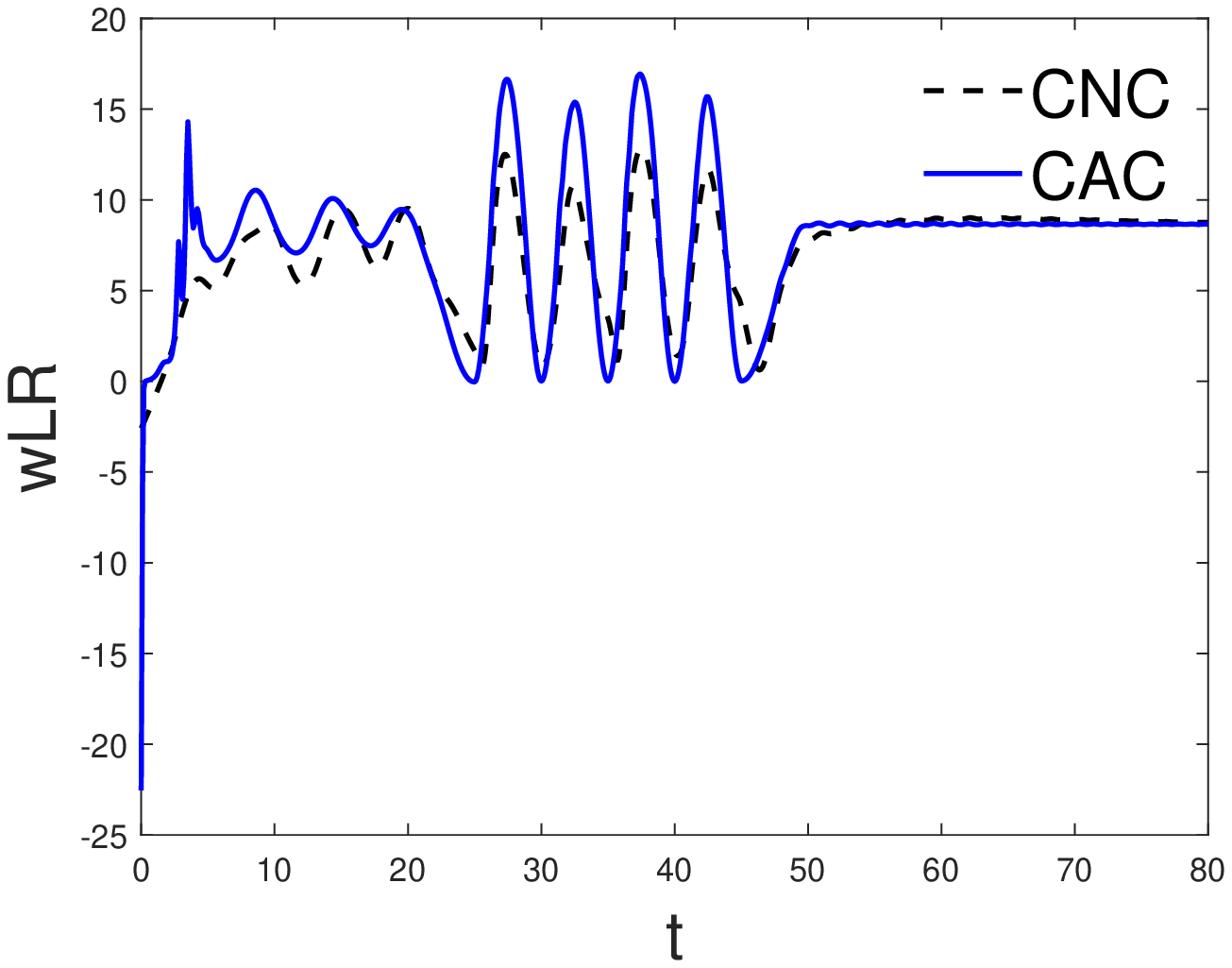}
    \end{psfrags}
  }%
  \parbox{0.5\linewidth}{%
    \centering
    \begin{psfrags}
      \psfrag{t}[tc][c]{$t$ (s)}
      \psfrag{wLR}[c][c]{$\omega_r$ (rad/s)}
      \psfrag{CNC}[l][l]{\footnotesize NKC}
      \psfrag{CAC}[l][l]{\footnotesize AKC}
      \includegraphics[width=\linewidth]{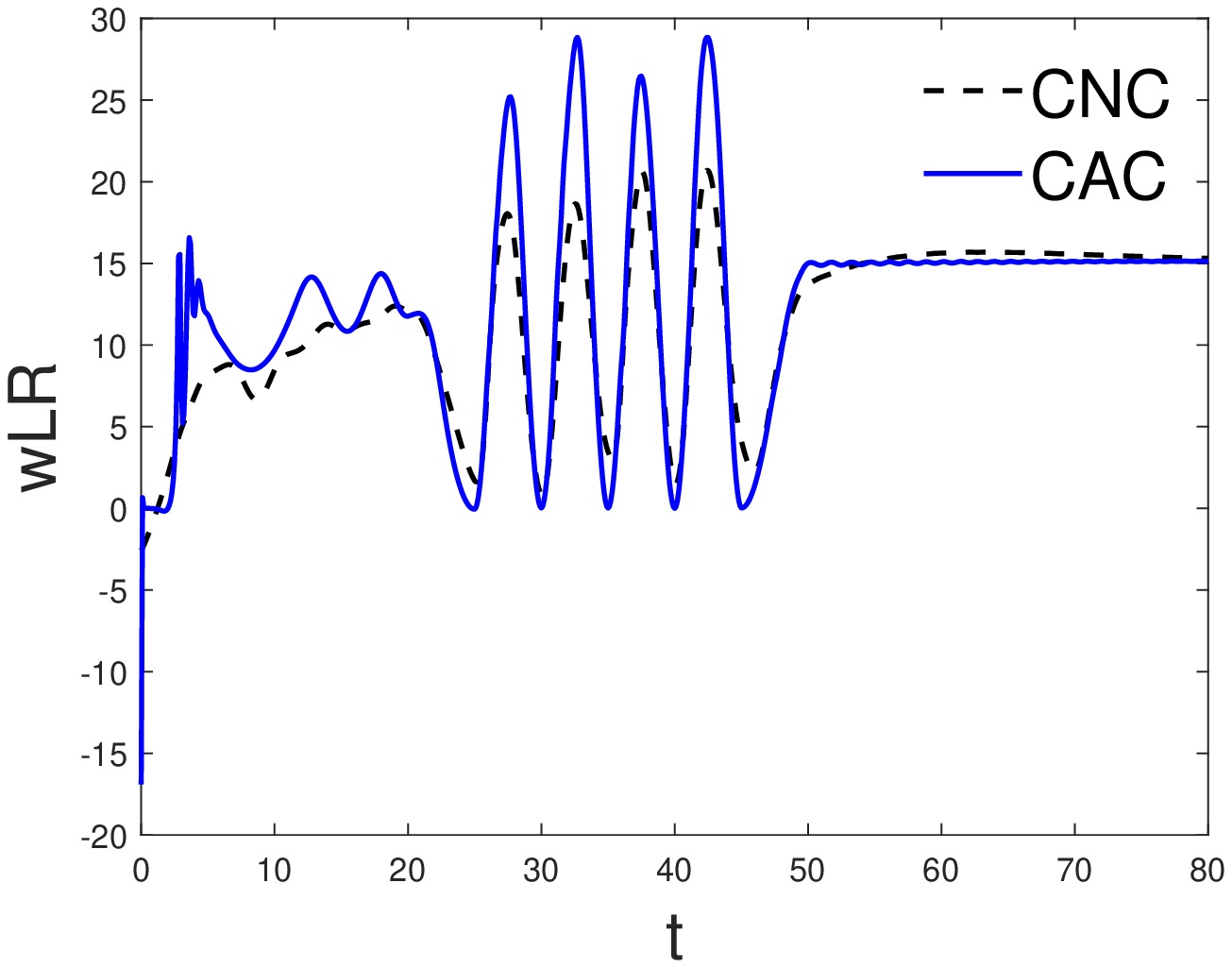}
    \end{psfrags}
  }
  \caption{The angular velocity of the left wheel $\omega_l(t)$
  and the angular velocity of the right wheel $\omega_r(t)$ using
  the NKC and AKC schemes.}
  \label{fig:effective_inputs_prof01}
\end{figure}

Figure~\ref{fig:effective_inputs_prof01} shows the effective control
input $\xi(t)={(\omega_l(t),\omega_r(t))}^T$ generated by the NKC and
AKC schemes. The dashed line stands for the NKC scheme, and the solid
line stands for the AKC scheme. Since, for the AKC scheme, the
auxiliary control input $\eta_c(t)$ converges to the reference input
$\eta_{\rm{ref}}(t)$, the amplitude of the effective control input
$\xi(t)$ is ultimately determined by the amplitude of the reference
input $\eta_{\rm{ref}}(t)$.

\section{Conclusions}
This paper provides a nonlinear kinematic controller that guarantees
reference tracking for a nonholonomic wheeled mobile robot subject to
wheel slip, using only the robot pose. The paper introduces the design
by defining the dynamics of the pose error including both longitudinal
and lateral slip. The proposed design builds on a kinematic control
law which is known to provide global trajectory tracking for wheeled
mobile robots in the absence of slip. Furthermore, this design is
extended to include an adaptive control law that compensates for the
slip. Our controller is guaranteed to provide asymptotic convergence
of the tracking error under time-varying references, when the robot is
only subject to constant longitudinal slip.

Moreover, it is rigorously shown that the tracking error and the
longitudinal slip estimation error are ultimately bounded provided
that the reference trajectory and longitudinal slip are slowly
varying, and the lateral slip is sufficiently small. To prove
convergence of the tracking error, the error dynamics is decomposed as
the sum of a nonlinear nominal term plus vanishing and nonvanishing
perturbation terms. Exponential stability of the nonlinear nominal
system is proved by applying stability criteria for slow time-varying
linear systems to the linearized model. Subsequently, the stability
under the presence of the perturbation terms are analyzed. Numerical
simulations show the effectiveness of the proposed approach.

\section*{Disclosure statement}
The authors declare that they have no 
conflict of interest.

\section*{Funding}
This work was supported by the Brazilian
funding agencies CAPES (Finance Code
001) and CNPq (Grant Numbers
503343/2012-9 and 311842/2013-5).

\bibliography{bibexport}

\begin{thebibliography}{}

\bibitem [\protect \citeauthoryear {%
Chen%
}{%
Chen%
}{%
{\protect \APACyear {2017}}%
}]{%
Chen:2019:DAT}
\APACinsertmetastar {%
Chen:2019:DAT}%
\begin{APACrefauthors}%
Chen, M.%
\end{APACrefauthors}%
\unskip\
\newblock
\APACrefYearMonthDay{2017}{}{}.
\newblock
{\BBOQ}\APACrefatitle {Disturbance Attenuation Tracking Control for Wheeled
  Mobile Robots With Skidding and Slipping} {Disturbance attenuation tracking
  control for wheeled mobile robots with skidding and slipping}.{\BBCQ}
\newblock
\APACjournalVolNumPages{IEEE Transactions on Industrial
  Electronics}{64}{4}{3359-3368}.
\PrintBackRefs{\CurrentBib}

\bibitem [\protect \citeauthoryear {%
Cui%
, Huang%
, Liu%
, Liu%
\BCBL {}\ \BBA {} Sun%
}{%
Cui%
\ \protect \BOthers {.}}{%
{\protect \APACyear {2014}}%
}]{%
Cui:2014:ATC}
\APACinsertmetastar {%
Cui:2014:ATC}%
\begin{APACrefauthors}%
Cui, M.%
, Huang, R.%
, Liu, H.%
, Liu, X.%
\BCBL {}\ \BBA {} Sun, D.%
\end{APACrefauthors}%
\unskip\
\newblock
\APACrefYearMonthDay{2014}{}{}.
\newblock
{\BBOQ}\APACrefatitle {Adaptive tracking control of wheeled mobile robots with
  unknown longitudinal and lateral slipping parameters} {Adaptive tracking
  control of wheeled mobile robots with unknown longitudinal and lateral
  slipping parameters}.{\BBCQ}
\newblock
\APACjournalVolNumPages{Nonlinear Dynamics}{78}{}{1811--1826}.
\PrintBackRefs{\CurrentBib}

\bibitem [\protect \citeauthoryear {%
Fierro%
\ \BBA {} Lewis%
}{%
Fierro%
\ \BBA {} Lewis%
}{%
{\protect \APACyear {1997}}%
}]{%
Fierro:1997:CNM}
\APACinsertmetastar {%
Fierro:1997:CNM}%
\begin{APACrefauthors}%
Fierro, R.%
\BCBT {}\ \BBA {} Lewis, F\BPBI L.%
\end{APACrefauthors}%
\unskip\
\newblock
\APACrefYearMonthDay{1997}{}{}.
\newblock
{\BBOQ}\APACrefatitle {Control of a Nonholonomic Mobile Robot: Backstepping
  Kinematics into Dynamics} {Control of a nonholonomic mobile robot:
  Backstepping kinematics into dynamics}.{\BBCQ}
\newblock
\APACjournalVolNumPages{Journal of Robotic Systems}{14}{3}{149--163}.
\PrintBackRefs{\CurrentBib}

\bibitem [\protect \citeauthoryear {%
Fukao%
, Nakagawa%
\BCBL {}\ \BBA {} Adachi%
}{%
Fukao%
\ \protect \BOthers {.}}{%
{\protect \APACyear {2000}}%
}]{%
Fukao:2000:ATC}
\APACinsertmetastar {%
Fukao:2000:ATC}%
\begin{APACrefauthors}%
Fukao, T.%
, Nakagawa, H.%
\BCBL {}\ \BBA {} Adachi, N.%
\end{APACrefauthors}%
\unskip\
\newblock
\APACrefYearMonthDay{2000}{}{}.
\newblock
{\BBOQ}\APACrefatitle {Adaptive Tracking Control of a Nonholonomic Mobile
  Robot} {Adaptive tracking control of a nonholonomic mobile robot}.{\BBCQ}
\newblock
\APACjournalVolNumPages{{IEEE} Transactions on Robotics and
  Automation}{16}{5}{609--615}.
\PrintBackRefs{\CurrentBib}

\bibitem [\protect \citeauthoryear {%
Gantmacher%
}{%
Gantmacher%
}{%
{\protect \APACyear {1959}}%
}]{%
Gantmacher:1959:TM}
\APACinsertmetastar {%
Gantmacher:1959:TM}%
\begin{APACrefauthors}%
Gantmacher, F\BPBI R.%
\end{APACrefauthors}%
\unskip\
\newblock
\APACrefYear{1959}.
\newblock
\APACrefbtitle {The Theory of Matrices} {The theory of matrices}\ (\BVOL~II).
\newblock
\APACaddressPublisher{New Yor, N.Y.}{Chelsea Publishing Company}.
\PrintBackRefs{\CurrentBib}

\bibitem [\protect \citeauthoryear {%
Gao%
\ \protect \BOthers {.}}{%
Gao%
\ \protect \BOthers {.}}{%
{\protect \APACyear {2014}}%
}]{%
Gao:2014:AMC}
\APACinsertmetastar {%
Gao:2014:AMC}%
\begin{APACrefauthors}%
Gao, H.%
, Song, X.%
, Ding, L.%
, Xia, K.%
, Li, N.%
\BCBL {}\ \BBA {} Deng, Z.%
\end{APACrefauthors}%
\unskip\
\newblock
\APACrefYearMonthDay{2014}{}{}.
\newblock
{\BBOQ}\APACrefatitle {Adaptive motion control of wheeled mobile robot with
  unknown slippage} {Adaptive motion control of wheeled mobile robot with
  unknown slippage}.{\BBCQ}
\newblock
\APACjournalVolNumPages{International Journal of Control}{87}{8}{1513--1522}.
\PrintBackRefs{\CurrentBib}

\bibitem [\protect \citeauthoryear {%
Gonzales%
, Fiacchini%
, Alamo%
, Guzm\'{a}n%
\BCBL {}\ \BBA {} Rodriguez%
}{%
Gonzales%
\ \protect \BOthers {.}}{%
{\protect \APACyear {2009}}%
}]{%
Gonzales:2009:ACM}
\APACinsertmetastar {%
Gonzales:2009:ACM}%
\begin{APACrefauthors}%
Gonzales, R.%
, Fiacchini, M.%
, Alamo, T.%
, Guzm\'{a}n, J\BPBI L.%
\BCBL {}\ \BBA {} Rodriguez, F.%
\end{APACrefauthors}%
\unskip\
\newblock
\APACrefYearMonthDay{2009}{}{}.
\newblock
{\BBOQ}\APACrefatitle {Adaptive Control for a Mobile Robot under Slip
  Conditions Using {LMI}-Based Approach} {Adaptive control for a mobile robot
  under slip conditions using {LMI}-based approach}.{\BBCQ}
\newblock
\BIn{} \APACrefbtitle {Proceedings of the European Control Conference}
  {Proceedings of the european control conference}\ (\BPGS\ 1251--1256).
\newblock
\APACaddressPublisher{Budapest, Hungary}{}.
\PrintBackRefs{\CurrentBib}

\bibitem [\protect \citeauthoryear {%
Ibrahim%
, Abouelsoud%
, Elbab%
\BCBL {}\ \BBA {} Ogata%
}{%
Ibrahim%
\ \protect \BOthers {.}}{%
{\protect \APACyear {2019}}%
}]{%
Ibrahim:2019:PFA}
\APACinsertmetastar {%
Ibrahim:2019:PFA}%
\begin{APACrefauthors}%
Ibrahim, F.%
, Abouelsoud, A\BPBI A.%
, Elbab, A\BPBI M\BPBI R\BPBI F.%
\BCBL {}\ \BBA {} Ogata, T.%
\end{APACrefauthors}%
\unskip\
\newblock
\APACrefYearMonthDay{2019}{}{}.
\newblock
{\BBOQ}\APACrefatitle {Path following algorithm for skid-steering mobile robot
  based on adaptive discontinuous posture control} {Path following algorithm
  for skid-steering mobile robot based on adaptive discontinuous posture
  control}.{\BBCQ}
\newblock
\APACjournalVolNumPages{Autonomous Robot}{33}{9}{439--453}.
\PrintBackRefs{\CurrentBib}

\bibitem [\protect \citeauthoryear {%
Iossaqui%
\ \BBA {} Camino%
}{%
Iossaqui%
\ \BBA {} Camino%
}{%
{\protect \APACyear {2013}}%
}]{%
Iossaqui:2013:WRS}
\APACinsertmetastar {%
Iossaqui:2013:WRS}%
\begin{APACrefauthors}%
Iossaqui, J\BPBI G.%
\BCBT {}\ \BBA {} Camino, J\BPBI F.%
\end{APACrefauthors}%
\unskip\
\newblock
\APACrefYearMonthDay{2013}{{\APACmonth{07}}}{}.
\newblock
{\BBOQ}\APACrefatitle {Wheeled Robot Slip Compensation for Trajectory Tracking
  Control Problem with Time-Varying Reference Input} {Wheeled robot slip
  compensation for trajectory tracking control problem with time-varying
  reference input}.{\BBCQ}
\newblock
\BIn{} \APACrefbtitle {Proceedings of the 9th International Workshop on Robot
  Motion and Control} {Proceedings of the 9th international workshop on robot
  motion and control}\ (\BPGS\ 167--173).
\newblock
\APACaddressPublisher{Wasowo, Poland}{}.
\PrintBackRefs{\CurrentBib}

\bibitem [\protect \citeauthoryear {%
Iossaqui%
, Camino%
\BCBL {}\ \BBA {} Zampieri%
}{%
Iossaqui%
\ \protect \BOthers {.}}{%
{\protect \APACyear {2011}}%
}]{%
Iossaqui:2011:NCD}
\APACinsertmetastar {%
Iossaqui:2011:NCD}%
\begin{APACrefauthors}%
Iossaqui, J\BPBI G.%
, Camino, J\BPBI F.%
\BCBL {}\ \BBA {} Zampieri, D\BPBI E.%
\end{APACrefauthors}%
\unskip\
\newblock
\APACrefYearMonthDay{2011}{{\APACmonth{09}}}{}.
\newblock
{\BBOQ}\APACrefatitle {A Nonlinear Control Design for Tracked Robots with
  Longitudinal Slip} {A nonlinear control design for tracked robots with
  longitudinal slip}.{\BBCQ}
\newblock
\BIn{} \APACrefbtitle {Proceedings of the 18th IFAC World Congress}
  {Proceedings of the 18th ifac world congress}\ (\BPGS\ 5932--5937).
\newblock
\APACaddressPublisher{Milano, Italy}{}.
\PrintBackRefs{\CurrentBib}

\bibitem [\protect \citeauthoryear {%
Jarzebowska%
}{%
Jarzebowska%
}{%
{\protect \APACyear {2008}}%
}]{%
Jarzebowska:2008:APM}
\APACinsertmetastar {%
Jarzebowska:2008:APM}%
\begin{APACrefauthors}%
Jarzebowska, E\BPBI M.%
\end{APACrefauthors}%
\unskip\
\newblock
\APACrefYearMonthDay{2008}{}{}.
\newblock
{\BBOQ}\APACrefatitle {Advanced Programmed Motion Tracking Control of
  Nonholonomic Mechanical Systems} {Advanced programmed motion tracking control
  of nonholonomic mechanical systems}.{\BBCQ}
\newblock
\APACjournalVolNumPages{{IEEE} Transactions on Robotics}{24}{6}{1315--1328}.
\PrintBackRefs{\CurrentBib}

\bibitem [\protect \citeauthoryear {%
Kanayama%
, Kimura%
, Miyazaki%
\BCBL {}\ \BBA {} Noguchi%
}{%
Kanayama%
\ \protect \BOthers {.}}{%
{\protect \APACyear {1990}}%
}]{%
Kanayama:1990:STC}
\APACinsertmetastar {%
Kanayama:1990:STC}%
\begin{APACrefauthors}%
Kanayama, Y.%
, Kimura, Y.%
, Miyazaki, F.%
\BCBL {}\ \BBA {} Noguchi, T.%
\end{APACrefauthors}%
\unskip\
\newblock
\APACrefYearMonthDay{1990}{}{}.
\newblock
{\BBOQ}\APACrefatitle {A Stable Tracking Control Method for an Autonomous
  Mobile Robot} {A stable tracking control method for an autonomous mobile
  robot}.{\BBCQ}
\newblock
\BIn{} \APACrefbtitle {{IEEE} International Conference on Robotics and
  Automation} {{IEEE} international conference on robotics and automation}\
  (\BPGS\ 384--389).
\newblock
\APACaddressPublisher{Cincinnati, USA}{}.
\PrintBackRefs{\CurrentBib}

\bibitem [\protect \citeauthoryear {%
Khalil%
}{%
Khalil%
}{%
{\protect \APACyear {2002}}%
}]{%
Khalil:2002:NS}
\APACinsertmetastar {%
Khalil:2002:NS}%
\begin{APACrefauthors}%
Khalil, H\BPBI K.%
\end{APACrefauthors}%
\unskip\
\newblock
\APACrefYear{2002}.
\newblock
\APACrefbtitle {Nonlinear Systems} {Nonlinear systems}.
\newblock
\APACaddressPublisher{Upper Saddle River, USA}{Prentice-Hall}.
\PrintBackRefs{\CurrentBib}

\bibitem [\protect \citeauthoryear {%
Kim%
\ \BBA {} Oh%
}{%
Kim%
\ \BBA {} Oh%
}{%
{\protect \APACyear {1998}}%
}]{%
Kim:1998:GAS}
\APACinsertmetastar {%
Kim:1998:GAS}%
\begin{APACrefauthors}%
Kim, D\BHBI H.%
\BCBT {}\ \BBA {} Oh, J\BHBI H.%
\end{APACrefauthors}%
\unskip\
\newblock
\APACrefYearMonthDay{1998}{}{}.
\newblock
{\BBOQ}\APACrefatitle {Globally Asymptotically Stable Tracking Control of
  Mobile Robots} {Globally asymptotically stable tracking control of mobile
  robots}.{\BBCQ}
\newblock
\BIn{} \APACrefbtitle {Proceedings of the {IEEE} International Conference on
  Control Applications} {Proceedings of the {IEEE} international conference on
  control applications}\ (\BPGS\ 1297--1301).
\newblock
\APACaddressPublisher{Trieste, Italy}{}.
\PrintBackRefs{\CurrentBib}

\bibitem [\protect \citeauthoryear {%
Klancar%
, Zdesar%
, Blazic%
\BCBL {}\ \BBA {} Skrjranc%
}{%
Klancar%
\ \protect \BOthers {.}}{%
{\protect \APACyear {2017}}%
}]{%
Klancar:2017:WMR}
\APACinsertmetastar {%
Klancar:2017:WMR}%
\begin{APACrefauthors}%
Klancar, G.%
, Zdesar, A.%
, Blazic, S.%
\BCBL {}\ \BBA {} Skrjranc, I.%
\end{APACrefauthors}%
\unskip\
\newblock
\APACrefYear{2017}.
\newblock
\APACrefbtitle {Wheeled mobile robotics: from fundamentals towards autonomous
  systems} {Wheeled mobile robotics: from fundamentals towards autonomous
  systems}.
\newblock
\APACaddressPublisher{Oxford, UK}{Butterworth-Heinemann}.
\PrintBackRefs{\CurrentBib}

\bibitem [\protect \citeauthoryear {%
Li%
, Wang%
, Chen%
, Zhu%
\BCBL {}\ \BBA {} Chen%
}{%
Li%
\ \protect \BOthers {.}}{%
{\protect \APACyear {2016}}%
}]{%
Li:2016:ACU}
\APACinsertmetastar {%
Li:2016:ACU}%
\begin{APACrefauthors}%
Li, X.%
, Wang, Z.%
, Chen, X.%
, Zhu, J.%
\BCBL {}\ \BBA {} Chen, Q.%
\end{APACrefauthors}%
\unskip\
\newblock
\APACrefYearMonthDay{2016}{}{}.
\newblock
{\BBOQ}\APACrefatitle {Adaptive Control of Unicycle-Type Mobile Robots with
  Longitudinal Slippage} {Adaptive control of unicycle-type mobile robots with
  longitudinal slippage}.{\BBCQ}
\newblock
\BIn{} \APACrefbtitle {Proceedings of the American Control Conference.}
  {Proceedings of the american control conference.}
\newblock
\APACaddressPublisher{Boston, MA, United States}{}.
\PrintBackRefs{\CurrentBib}

\bibitem [\protect \citeauthoryear {%
Michalek%
, Dutkiewicz%
, Kielczewski%
\BCBL {}\ \BBA {} Pazderski%
}{%
Michalek%
\ \protect \BOthers {.}}{%
{\protect \APACyear {2009}}%
}]{%
Michalek:2009:TTM}
\APACinsertmetastar {%
Michalek:2009:TTM}%
\begin{APACrefauthors}%
Michalek, M.%
, Dutkiewicz, P.%
, Kielczewski, M.%
\BCBL {}\ \BBA {} Pazderski, D.%
\end{APACrefauthors}%
\unskip\
\newblock
\APACrefYearMonthDay{2009}{}{}.
\newblock
{\BBOQ}\APACrefatitle {Trajectory Tracking for a Mobile Robot with Skid-Slip
  Compensation in the Vector-Field-Orientation Control System} {Trajectory
  tracking for a mobile robot with skid-slip compensation in the
  vector-field-orientation control system}.{\BBCQ}
\newblock
\APACjournalVolNumPages{International Journal of Applied Mathematics and
  Computer Science}{19}{4}{547--559}.
\PrintBackRefs{\CurrentBib}

\bibitem [\protect \citeauthoryear {%
Moosavian%
\ \BBA {} Kalantari%
}{%
Moosavian%
\ \BBA {} Kalantari%
}{%
{\protect \APACyear {2008}}%
}]{%
Moosavian:2008:ESE}
\APACinsertmetastar {%
Moosavian:2008:ESE}%
\begin{APACrefauthors}%
Moosavian, S\BPBI A\BPBI A.%
\BCBT {}\ \BBA {} Kalantari, A.%
\end{APACrefauthors}%
\unskip\
\newblock
\APACrefYearMonthDay{2008}{}{}.
\newblock
{\BBOQ}\APACrefatitle {Experimental Slip Estimation for Exact Kinematics
  Modeling and Control of a Tracked Mobile Robot} {Experimental slip estimation
  for exact kinematics modeling and control of a tracked mobile robot}.{\BBCQ}
\newblock
\BIn{} \APACrefbtitle {Proceedings of the {IEEE/RSJ} International Conference
  on Intelligent Robots and Systems} {Proceedings of the {IEEE/RSJ}
  international conference on intelligent robots and systems}\ (\BPGS\
  95--100).
\newblock
\APACaddressPublisher{Nice, France}{}.
\PrintBackRefs{\CurrentBib}

\bibitem [\protect \citeauthoryear {%
Morin%
\ \BBA {} Samson%
}{%
Morin%
\ \BBA {} Samson%
}{%
{\protect \APACyear {2006}}%
}]{%
Morin:2006:CTT}
\APACinsertmetastar {%
Morin:2006:CTT}%
\begin{APACrefauthors}%
Morin, P.%
\BCBT {}\ \BBA {} Samson, C.%
\end{APACrefauthors}%
\unskip\
\newblock
\APACrefYear{2006}.
\newblock
\APACrefbtitle {Chapter Trajectory Tracking for Non-holonomic Vehicles, in
  Lecture Notes in Control and Information Sciences} {Chapter trajectory
  tracking for non-holonomic vehicles, in lecture notes in control and
  information sciences}.
\newblock
\APACaddressPublisher{New York, USA}{Springer-Verlag}.
\PrintBackRefs{\CurrentBib}

\bibitem [\protect \citeauthoryear {%
Nourbakhsh%
\ \BBA {} Siegwart%
}{%
Nourbakhsh%
\ \BBA {} Siegwart%
}{%
{\protect \APACyear {2004}}%
}]{%
Nourbakhsh:2004:IAM}
\APACinsertmetastar {%
Nourbakhsh:2004:IAM}%
\begin{APACrefauthors}%
Nourbakhsh, I\BPBI R.%
\BCBT {}\ \BBA {} Siegwart, R.%
\end{APACrefauthors}%
\unskip\
\newblock
\APACrefYear{2004}.
\newblock
\APACrefbtitle {Introduction to Autonomous Mobile Robots} {Introduction to
  autonomous mobile robots}.
\newblock
\APACaddressPublisher{London, UK}{The MIT Press}.
\PrintBackRefs{\CurrentBib}

\bibitem [\protect \citeauthoryear {%
Panahandeh%
, Alipour%
, Tarvirdizadeh%
\BCBL {}\ \BBA {} Hadi%
}{%
Panahandeh%
\ \protect \BOthers {.}}{%
{\protect \APACyear {2019}}%
}]{%
Panahandeh:2019:KLB}
\APACinsertmetastar {%
Panahandeh:2019:KLB}%
\begin{APACrefauthors}%
Panahandeh, P.%
, Alipour, K.%
, Tarvirdizadeh, B.%
\BCBL {}\ \BBA {} Hadi, A.%
\end{APACrefauthors}%
\unskip\
\newblock
\APACrefYearMonthDay{2019}{}{}.
\newblock
{\BBOQ}\APACrefatitle {A kinematic Lyapunov-based controller to posture
  stabilization of wheeled mobile robots} {A kinematic lyapunov-based
  controller to posture stabilization of wheeled mobile robots}.{\BBCQ}
\newblock
\APACjournalVolNumPages{Mechanical Systems and Signal
  Processing}{134}{}{1--19}.
\PrintBackRefs{\CurrentBib}

\bibitem [\protect \citeauthoryear {%
Rosenbrock%
}{%
Rosenbrock%
}{%
{\protect \APACyear {1963}}%
}]{%
Rosenbrock:1963:SLT}
\APACinsertmetastar {%
Rosenbrock:1963:SLT}%
\begin{APACrefauthors}%
Rosenbrock, H\BPBI H.%
\end{APACrefauthors}%
\unskip\
\newblock
\APACrefYearMonthDay{1963}{}{}.
\newblock
{\BBOQ}\APACrefatitle {The Stability of Linear Time-Dependent Control Systems}
  {The stability of linear time-dependent control systems}.{\BBCQ}
\newblock
\APACjournalVolNumPages{Journal of Electronics and Control}{15}{1}{73--80}.
\PrintBackRefs{\CurrentBib}

\bibitem [\protect \citeauthoryear {%
Ryu%
\ \BBA {} Agrawal%
}{%
Ryu%
\ \BBA {} Agrawal%
}{%
{\protect \APACyear {2010}}%
}]{%
Ryu:2010:DFB}
\APACinsertmetastar {%
Ryu:2010:DFB}%
\begin{APACrefauthors}%
Ryu, J\BHBI C.%
\BCBT {}\ \BBA {} Agrawal, S\BPBI K.%
\end{APACrefauthors}%
\unskip\
\newblock
\APACrefYearMonthDay{2010}{{\APACmonth{12}}}{}.
\newblock
{\BBOQ}\APACrefatitle {Differential flatness-based robust control of mobile
  robots in the presence of slip:} {Differential flatness-based robust control
  of mobile robots in the presence of slip:}.{\BBCQ}
\newblock
\APACjournalVolNumPages{The International Journal of Robotics Research}{}{}{}.
\PrintBackRefs{\CurrentBib}

\bibitem [\protect \citeauthoryear {%
Ryu%
\ \BBA {} Agrawal%
}{%
Ryu%
\ \BBA {} Agrawal%
}{%
{\protect \APACyear {2011}}%
}]{%
Ryu:2011:DFB}
\APACinsertmetastar {%
Ryu:2011:DFB}%
\begin{APACrefauthors}%
Ryu, J\BHBI C.%
\BCBT {}\ \BBA {} Agrawal, S\BPBI K.%
\end{APACrefauthors}%
\unskip\
\newblock
\APACrefYearMonthDay{2011}{}{}.
\newblock
{\BBOQ}\APACrefatitle {Differential Flatness-Based Robust Control of Mobile
  Robots in the Presence of Slip} {Differential flatness-based robust control
  of mobile robots in the presence of slip}.{\BBCQ}
\newblock
\APACjournalVolNumPages{The International Journal of Robotics
  Research}{30}{4}{463--475}.
\PrintBackRefs{\CurrentBib}

\bibitem [\protect \citeauthoryear {%
Shafiei%
\ \BBA {} Monfared%
}{%
Shafiei%
\ \BBA {} Monfared%
}{%
{\protect \APACyear {2019}}%
}]{%
Shafiei:2019:DRT}
\APACinsertmetastar {%
Shafiei:2019:DRT}%
\begin{APACrefauthors}%
Shafiei, M\BPBI H.%
\BCBT {}\ \BBA {} Monfared, F.%
\end{APACrefauthors}%
\unskip\
\newblock
\APACrefYearMonthDay{2019}{}{}.
\newblock
{\BBOQ}\APACrefatitle {Design of a robust tracking controller for a
  nonholonomic mobile robot with side slipping based on Lyapunov Redesign and
  nonlinear H-infinity methods} {Design of a robust tracking controller for a
  nonholonomic mobile robot with side slipping based on lyapunov redesign and
  nonlinear h-infinity methods}.{\BBCQ}
\newblock
\APACjournalVolNumPages{Systems Science \& Control Engineering}{7}{1}{1-11}.
\PrintBackRefs{\CurrentBib}

\bibitem [\protect \citeauthoryear {%
Sidek%
\ \BBA {} Sarkar%
}{%
Sidek%
\ \BBA {} Sarkar%
}{%
{\protect \APACyear {2008}}%
}]{%
Sidek:2008:DMC}
\APACinsertmetastar {%
Sidek:2008:DMC}%
\begin{APACrefauthors}%
Sidek, N.%
\BCBT {}\ \BBA {} Sarkar, N.%
\end{APACrefauthors}%
\unskip\
\newblock
\APACrefYearMonthDay{2008}{}{}.
\newblock
{\BBOQ}\APACrefatitle {Dynamic Modeling and Control of Nonholonomic Mobile
  Robot with Lateral Slip} {Dynamic modeling and control of nonholonomic mobile
  robot with lateral slip}.{\BBCQ}
\newblock
\BIn{} \APACrefbtitle {Proceedings of the 7th WSEAS International Conference on
  Signal Processing, Robotics and Automation} {Proceedings of the 7th wseas
  international conference on signal processing, robotics and automation}\
  (\BPGS\ 66--74).
\newblock
\APACaddressPublisher{Cambridge, UK}{}.
\PrintBackRefs{\CurrentBib}

\bibitem [\protect \citeauthoryear {%
Thomas%
, Bandyopadhyay%
\BCBL {}\ \BBA {} Vachhani%
}{%
Thomas%
\ \protect \BOthers {.}}{%
{\protect \APACyear {2019}}%
}]{%
Thomas:2019:FTP}
\APACinsertmetastar {%
Thomas:2019:FTP}%
\begin{APACrefauthors}%
Thomas, M.%
, Bandyopadhyay, B.%
\BCBL {}\ \BBA {} Vachhani, L.%
\end{APACrefauthors}%
\unskip\
\newblock
\APACrefYearMonthDay{2019}{}{}.
\newblock
{\BBOQ}\APACrefatitle {Finite-time posture stabilization of the unicycle mobile
  robot using only position information: A discrete-time sliding mode approach}
  {Finite-time posture stabilization of the unicycle mobile robot using only
  position information: A discrete-time sliding mode approach}.{\BBCQ}
\newblock
\APACjournalVolNumPages{International Journal of Robust and Nonlinear
  Control}{29}{6}{1990-2006}.
\PrintBackRefs{\CurrentBib}

\bibitem [\protect \citeauthoryear {%
Urakubo%
}{%
Urakubo%
}{%
{\protect \APACyear {2015}}%
}]{%
Urakubo:2015:FSN}
\APACinsertmetastar {%
Urakubo:2015:FSN}%
\begin{APACrefauthors}%
Urakubo, T.%
\end{APACrefauthors}%
\unskip\
\newblock
\APACrefYearMonthDay{2015}{}{}.
\newblock
{\BBOQ}\APACrefatitle {Feedback stabilization of a nonholonomic system with
  potential fields: application to a two-wheeled mobile robot among obstacles}
  {Feedback stabilization of a nonholonomic system with potential fields:
  application to a two-wheeled mobile robot among obstacles}.{\BBCQ}
\newblock
\APACjournalVolNumPages{Nonlinear Dynamics}{81}{}{1475--1487}.
\PrintBackRefs{\CurrentBib}

\bibitem [\protect \citeauthoryear {%
Wang%
\ \BBA {} Low%
}{%
Wang%
\ \BBA {} Low%
}{%
{\protect \APACyear {2008}}%
}]{%
Wang:2008:MAS}
\APACinsertmetastar {%
Wang:2008:MAS}%
\begin{APACrefauthors}%
Wang, D.%
\BCBT {}\ \BBA {} Low, C\BPBI B.%
\end{APACrefauthors}%
\unskip\
\newblock
\APACrefYearMonthDay{2008}{}{}.
\newblock
{\BBOQ}\APACrefatitle {Modeling and Analysis of Skidding and Slipping in
  Wheeled Mobile Robots: Control Design Perspective} {Modeling and analysis of
  skidding and slipping in wheeled mobile robots: Control design
  perspective}.{\BBCQ}
\newblock
\APACjournalVolNumPages{{IEEE} Transactions on Robotics}{24}{3}{676--687}.
\PrintBackRefs{\CurrentBib}

\bibitem [\protect \citeauthoryear {%
Ward%
\ \BBA {} Iagnemma%
}{%
Ward%
\ \BBA {} Iagnemma%
}{%
{\protect \APACyear {2008}}%
}]{%
Ward:2008:DMB}
\APACinsertmetastar {%
Ward:2008:DMB}%
\begin{APACrefauthors}%
Ward, C\BPBI C.%
\BCBT {}\ \BBA {} Iagnemma, K.%
\end{APACrefauthors}%
\unskip\
\newblock
\APACrefYearMonthDay{2008}{}{}.
\newblock
{\BBOQ}\APACrefatitle {A Dynamic-Model-Based Wheel Slip Detector for Mobile
  Robots on Outdoor Terrain} {A dynamic-model-based wheel slip detector for
  mobile robots on outdoor terrain}.{\BBCQ}
\newblock
\APACjournalVolNumPages{{IEEE} Transactions on Robotics}{24}{4}{821--831}.
\PrintBackRefs{\CurrentBib}

\bibitem [\protect \citeauthoryear {%
Wu%
, Xu%
\BCBL {}\ \BBA {} Yin%
}{%
Wu%
\ \protect \BOthers {.}}{%
{\protect \APACyear {2009}}%
}]{%
Wu:2009:RAC}
\APACinsertmetastar {%
Wu:2009:RAC}%
\begin{APACrefauthors}%
Wu, J.%
, Xu, G.%
\BCBL {}\ \BBA {} Yin, Z.%
\end{APACrefauthors}%
\unskip\
\newblock
\APACrefYearMonthDay{2009}{}{}.
\newblock
{\BBOQ}\APACrefatitle {Robust Adaptive Control for a Nonholonomic Mobile Robot
  with Unknown Parameters} {Robust adaptive control for a nonholonomic mobile
  robot with unknown parameters}.{\BBCQ}
\newblock
\APACjournalVolNumPages{Journal of Control Theory and
  Applications}{7}{2}{212--218}.
\PrintBackRefs{\CurrentBib}

\bibitem [\protect \citeauthoryear {%
Zhai%
\ \BBA {} Song%
}{%
Zhai%
\ \BBA {} Song%
}{%
{\protect \APACyear {2019}}%
}]{%
Zhai:2019:ASM}
\APACinsertmetastar {%
Zhai:2019:ASM}%
\begin{APACrefauthors}%
Zhai, J\BHBI Y.%
\BCBT {}\ \BBA {} Song, Z\BHBI B.%
\end{APACrefauthors}%
\unskip\
\newblock
\APACrefYearMonthDay{2019}{}{}.
\newblock
{\BBOQ}\APACrefatitle {Adaptive sliding mode trajectory tracking control for
  wheeled mobile robots} {Adaptive sliding mode trajectory tracking control for
  wheeled mobile robots}.{\BBCQ}
\newblock
\APACjournalVolNumPages{International Journal of Control}{92}{10}{2255--2262}.
\PrintBackRefs{\CurrentBib}

\bibitem [\protect \citeauthoryear {%
Zhou%
, Peng%
\BCBL {}\ \BBA {} Han%
}{%
Zhou%
\ \protect \BOthers {.}}{%
{\protect \APACyear {2007}}%
}]{%
Zhou:2007:UBE}
\APACinsertmetastar {%
Zhou:2007:UBE}%
\begin{APACrefauthors}%
Zhou, B.%
, Peng, Y.%
\BCBL {}\ \BBA {} Han, J.%
\end{APACrefauthors}%
\unskip\
\newblock
\APACrefYearMonthDay{2007}{}{}.
\newblock
{\BBOQ}\APACrefatitle {{UKF} Based Estimation and Tracking Control of
  Nonholonomic Mobile Robots with Slipping} {{UKF} based estimation and
  tracking control of nonholonomic mobile robots with slipping}.{\BBCQ}
\newblock
\BIn{} \APACrefbtitle {{IEEE} International Conference on Robotics and
  Biomimetics} {{IEEE} international conference on robotics and biomimetics}\
  (\BPGS\ 2058--2063).
\newblock
\APACaddressPublisher{Sanya, China}{}.
\PrintBackRefs{\CurrentBib}

\end{thebibliography}

\appendix
\clearpage
\section{Supporting results}
\label{appx:stabilityperturbed}

\subsection{Ultimate boundedness}

Consider the system
\begin{equation}\label{appx:eq:perturbed_system_vanishing}
  \dot{x}(t) = f(t,x(t)) + g(t,x(t))
\end{equation}
where $f:[ 0,\infty ) \times D\to \mathbb{R}^n$ and
$g:[ 0,\infty ) \times D\to \mathbb{R}^n$ are piecewise
continuous in $t$ and locally Lipschitz in $x$ on $[ 0,\infty)
\times D$, and $D\subset\mathbb{R}^n$ is a domain that contains the
origin $x=0$. Consider system \eqref{appx:eq:perturbed_system_vanishing}
as a perturbation of the nominal system
\begin{equation}\label{appx:eq:nominal_system_vanishing}
  \dot{x}(t) = f(t,x(t))
\end{equation}

Suppose $x=0$ is an exponentially stable equilibrium point of the
nominal system \eqref{appx:eq:nominal_system_vanishing}, and let
$V(t,x(t))$ be a Lyapunov function that satisfies
\begin{equation}\label{appx:eq:contraint_01}
  c_1 \|x(t)\|^2 \leq V(t,x(t)) \leq c_2 \|x(t)\|^2
\end{equation}
\begin{equation}\label{appx:eq:contraint_02}
  \frac{\partial V(t,x(t))}{\partial t}
  +\frac{\partial V(t,x(t))}{\partial x(t)}f(t,x(t))\leq-c_3\|x(t)\|^2
\end{equation}
\begin{equation}\label{appx:eq:contraint_03}
  \left\| \frac{\partial V(t,x(t))}{\partial x(t)}\right\|
  \leq c_4\|x(t)\|
\end{equation}
for all $(t,x(t)) \in [ 0,\infty) \times D$ for some positive
constants $c_1$, $c_2$, $c_3$, and $c_4$.

For a vanishing perturbation $g(t,0)=0$, we can show that $x=0$ is
also an exponentially stable equilibrium point of the perturbed system
\eqref{appx:eq:perturbed_system_vanishing} using the next lemma.

\begin{lemma}[Lemma 9.1 from \cite{Khalil:2002:NS}]\label{lem:khalil91}
  Let $x=0$ be an exponentially stable equilibrium point of the
  nominal system \eqref{appx:eq:nominal_system_vanishing}. Let
  $V(t,x(t))$ be a Lyapunov function of the nominal system
  that satisfies \eqref{appx:eq:contraint_01}
  through \eqref{appx:eq:contraint_03} in $[0,\infty)\times D$.
  Suppose the perturbation term $g(t,x(t))$ satisfies
  \begin{equation}\label{appx:eq:contraint_04_vanishing}
    \|g(t,x(t))\|\leq \gamma \|x(t)\|, \ \forall \ t \geq0,
    \ \forall \ x(t) \in D
  \end{equation}
  and
  \begin{equation}\label{appx:eq:contraint_05_vanishing}
    \gamma < \displaystyle\frac{c_3}{c_4}
  \end{equation}
  Then, the origin is an exponentially stable equilibrium point
  of the perturbed system \eqref{appx:eq:perturbed_system_vanishing}.
\end{lemma}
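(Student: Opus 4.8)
The plan is to use the nominal Lyapunov function $V(t,x(t))$ itself as a Lyapunov function for the perturbed system \eqref{appx:eq:perturbed_system_vanishing}. The key observation is that the bound \eqref{appx:eq:contraint_04_vanishing} on the perturbation is linear in $\|x\|$, so the extra term it produces in $\dot{V}$ is quadratic in $\|x\|$ and can be absorbed into the strictly negative quadratic estimate \eqref{appx:eq:contraint_02}, provided $\gamma$ is small enough; that smallness is precisely condition \eqref{appx:eq:contraint_05_vanishing}.

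First I would differentiate $V$ along the solutions of \eqref{appx:eq:perturbed_system_vanishing}. Using the chain rule and splitting off the perturbation,
\begin{equation*}
  \dot{V}(t,x(t)) = \frac{\partial V}{\partial t} + \frac{\partial V}{\partial x}f(t,x(t)) + \frac{\partial V}{\partial x}g(t,x(t)) \le -c_3\|x(t)\|^2 + \left\|\frac{\partial V}{\partial x}\right\|\,\|g(t,x(t))\|,
\end{equation*}
where \eqref{appx:eq:contraint_02} bounds the nominal part and the Cauchy--Schwarz inequality bounds the perturbation part. Applying \eqref{appx:eq:contraint_03} and \eqref{appx:eq:contraint_04_vanishing} to the last term gives $\left\|\partial V/\partial x\right\|\,\|g\| \le c_4\|x(t)\|\cdot\gamma\|x(t)\| = \gamma c_4\|x(t)\|^2$, hence
\begin{equation*}
  \dot{V}(t,x(t)) \le -(c_3 - \gamma c_4)\|x(t)\|^2,
\end{equation*}
and by \eqref{appx:eq:contraint_05_vanishing} the constant $c_3 - \gamma c_4$ is strictly positive.

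It then remains to convert this differential inequality into an exponential bound on $\|x(t)\|$. Using the upper bound in \eqref{appx:eq:contraint_01}, $\dot{V} \le -\tfrac{c_3 - \gamma c_4}{c_2}\,V$, so the comparison lemma gives $V(t,x(t)) \le V(t_0,x(t_0))\exp\!\left[-\tfrac{c_3-\gamma c_4}{c_2}(t-t_0)\right]$; combining this with the lower bound in \eqref{appx:eq:contraint_01} yields
\begin{equation*}
  \|x(t)\| \le \sqrt{\frac{c_2}{c_1}}\,\|x(t_0)\|\,\exp\!\left[-\frac{c_3-\gamma c_4}{2c_2}(t-t_0)\right],
\end{equation*}
which is exactly exponential stability of the origin of \eqref{appx:eq:perturbed_system_vanishing}. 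Equivalently, one may simply invoke the standard Lyapunov theorem for exponential stability (e.g.\ Theorem~4.10 in \cite{Khalil:2002:NS}), whose hypotheses are met by $V$ with the estimates above.

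There is no real obstacle here: the argument is a short chain of elementary inequalities. The only point that needs care is that $c_1,\dots,c_4$ and $\gamma$ are constants (uniform in $t$), so that the decay rate and overshoot factor obtained above are themselves independent of $t_0$; this uniformity is what makes the conclusion usable in the subsequent ultimate-boundedness step of the proof of Theorem~\ref{thm:main_result}.
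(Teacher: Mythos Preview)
Your proof is correct and is exactly the standard argument for this lemma. Note, however, that the paper does not give its own proof of this statement: it is quoted verbatim as a supporting result from \cite{Khalil:2002:NS} and only invoked, not reproved. Your argument coincides with Khalil's original proof, so there is nothing to compare.
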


For the more general case, in which it is not know whether 
$g(t,0)=0$, we can no longer study stability of the origin as an
equilibrium point, nor should we expect the solution of the
perturbed system to approach the origin as $t \to \infty$.
However, we can show that the solution is ultimately bounded by a
small bound using the next lemma.

\begin{lemma}[Lemma 9.2 from \cite{Khalil:2002:NS}]\label{lem:khalil92}
  Let $x=0$ be an exponentially stable equilibrium point of the
  nominal system \eqref{appx:eq:nominal_system_vanishing}. Let
  $V(t,x(t))$ be a Lyapunov function of the nominal system
  that satisfies \eqref{appx:eq:contraint_01}
  through \eqref{appx:eq:contraint_03} in $[0,\infty)\times D$, where
  $D=\{x(t) \in \mathbb{R}^n \,|\, \|x(t)\|<d\}$.
  Suppose the perturbation term $g(t,x(t))$ satisfies
  \begin{equation}\label{appx:eq:contraint_04_nonvanishing}
    \|g(t,x(t))\|\leq \delta < \frac{c_3}{c_4}\sqrt{\frac{c_1}{c_2}}
    \theta d
  \end{equation}
  for all $t\geq0$, all $x(t) \in D$, and some positive constant
  $\theta<1$. Then, for all $\|x(t_0)\|<\sqrt{c_1/c_2}d$, the solution
  $x(t)$ of the perturbed system \eqref{appx:eq:perturbed_system_vanishing}
  satisfies 
  \begin{equation*}
    \|x(t)\|\leq \alpha \exp\left[-\gamma(t-t_0)\right]\|x(t_0)\|,
    \quad \forall \ t_0 \leq t<t_0+T
  \end{equation*}
  and
  \begin{equation*}
    \|x(t)\|\leq u_b, \quad \forall \ t\geq t_0+T
  \end{equation*}
  for some finite $T$, 
  where
  \begin{equation*}
    \alpha = \sqrt{\frac{c_2}{c_1}}, \quad \gamma =
    \displaystyle\frac{(1-\theta)c_3}{2c_2}, \quad u_b =
    \displaystyle\frac{c_4}{c_3}\sqrt{\displaystyle\frac{c_2}{c_1}}
    \displaystyle\frac{\delta}{\theta}
  \end{equation*}
\end{lemma}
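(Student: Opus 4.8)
The plan is to use the same Lyapunov function $V(t,x)$ furnished by the converse theorem for the nominal system \eqref{appx:eq:nominal_system_vanishing} directly as a Lyapunov function for the perturbed system \eqref{appx:eq:perturbed_system_vanishing}, and to exploit the fact that its derivative stays negative outside a small Euclidean ball. First I would differentiate $V$ along solutions of the perturbed system, splitting the result into the nominal part and the perturbation part, so that conditions \eqref{appx:eq:contraint_02} and \eqref{appx:eq:contraint_03} together with the bound \eqref{appx:eq:contraint_04_nonvanishing} give $\dot V \le -c_3\|x\|^2 + c_4\delta\|x\|$. Introducing the free parameter $\theta \in (0,1)$ and absorbing a fraction of the negative quadratic term, I would conclude that $\dot V \le -(1-\theta)c_3\|x\|^2$ holds whenever $\|x\| \ge \mu$, where $\mu := c_4\delta/(c_3\theta)$.

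The second step is to translate this ``negative-outside-a-ball'' property into statements about sublevel sets of $V$, which is the delicate part of the argument. Using the quadratic sandwich \eqref{appx:eq:contraint_01}, I would relate the Euclidean ball $\{\|x\|\le\mu\}$ to the two level sets that inscribe and circumscribe it. Concretely, on the boundary of $\Omega_\mu := \{x : V(t,x) \le c_2\mu^2\}$ one has $c_2\mu^2 = V \le c_2\|x\|^2$, hence $\|x\|\ge\mu$ and $\dot V < 0$, so $\Omega_\mu$ is positively invariant; and because $\Omega_\mu \subseteq \{\|x\| \le \sqrt{c_2/c_1}\,\mu\}$ with $\sqrt{c_2/c_1}\,\mu = u_b$, the smallness condition on $\delta$ (which is exactly $\mu < \sqrt{c_1/c_2}\,d$) guarantees $\Omega_\mu \subset D$. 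The same sandwich shows that $\|x(t_0)\| < \sqrt{c_1/c_2}\,d$ forces $V(t_0) < c_1 d^2$, placing the initial state inside the larger positively invariant sublevel set $\{V \le c_1 d^2\}\subseteq D$, so that the solution exists and remains in $D$ for all $t\ge t_0$ and every bound above stays valid along the trajectory.

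For the exponential-decay phase I would note that as long as the solution stays outside $\Omega_\mu$ one has $\|x\|>\mu$, and hence, using $V\le c_2\|x\|^2$, the differential inequality $\dot V \le -\tfrac{(1-\theta)c_3}{c_2}V$. The comparison lemma then yields exponential decay of $V$, and applying $c_1\|x\|^2 \le V \le c_2\|x(t_0)\|^2\exp[-\tfrac{(1-\theta)c_3}{c_2}(t-t_0)]$ and taking square roots produces the claimed estimate with $\alpha = \sqrt{c_2/c_1}$ and $\gamma = (1-\theta)c_3/(2c_2)$, the factor $\tfrac12$ arising from the square root. This decay forces $V$ to reach the level $c_2\mu^2$ at some finite time $T$ (obtained by setting the exponential bound equal to $c_2\mu^2$); for any initial condition already inside $\Omega_\mu$ one simply takes $T=0$.

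The final step is the ultimate bound: once the solution enters $\Omega_\mu$ it can never leave, by the positive invariance established above, so for all $t \ge t_0 + T$ the sandwich $c_1\|x\|^2 \le V \le c_2\mu^2$ gives $\|x(t)\| \le \sqrt{c_2/c_1}\,\mu = \tfrac{c_4}{c_3}\sqrt{c_2/c_1}\,\tfrac{\delta}{\theta} = u_b$, which is exactly the asserted ultimate bound. I expect the main obstacle to be the careful bookkeeping of the nested inclusions between Euclidean balls and $V$-level sets --- in particular verifying both the positive invariance of $\Omega_\mu$ (which needs the correct inequality direction $\|x\|\ge\mu$ on its boundary) and the chain of inclusions $\Omega_\mu \subset \{V\le c_1 d^2\}\subset D$ that makes the smallness hypothesis on $\delta$ do its work; the differentiation and comparison-lemma steps themselves are routine.
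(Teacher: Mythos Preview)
The paper does not actually prove this lemma; it is stated in Appendix~A as a supporting result quoted verbatim from \cite{Khalil:2002:NS}, with no proof given. Your proposal reproduces precisely the standard argument found in Khalil's textbook: differentiate $V$ along the perturbed flow to obtain $\dot V\le -c_3\|x\|^2+c_4\delta\|x\|$, split off a $\theta$-fraction of the quadratic term so that $\dot V\le -(1-\theta)c_3\|x\|^2$ whenever $\|x\|\ge\mu=c_4\delta/(c_3\theta)$, use the sandwich $c_1\|x\|^2\le V\le c_2\|x\|^2$ to pass to the scalar inequality $\dot V\le -\tfrac{(1-\theta)c_3}{c_2}V$ on the set $\{V\ge c_2\mu^2\}$, and conclude exponential decay followed by trapping in $\{V\le c_2\mu^2\}\subset\{\|x\|\le u_b\}$. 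The bookkeeping you flag---showing $\{V\le c_2\mu^2\}\subset\{V\le c_1 d^2\}\subset D$ via the hypothesis $\mu<\sqrt{c_1/c_2}\,d$, and that the initial condition lands in the larger set---is exactly right, and there is no gap. One minor point of care: because $V$ depends explicitly on $t$, it is cleanest to phrase the invariance argument through the scalar function $v(t)=V(t,x(t))$ and the implication $v(t)\ge c_2\mu^2\Rightarrow\|x(t)\|\ge\mu\Rightarrow\dot v\le -\tfrac{(1-\theta)c_3}{c_2}v$, rather than speaking of a fixed set $\Omega_\mu$; but this is only a matter of presentation, and your argument is correct as stated.
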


\subsection{Linearization}

To show that the origin $x=0$ of a nonlinear system is locally exponentially 
stable, the following well-know result is used:
\begin{theorem}[Theorem~4.15 from \cite{Khalil:2002:NS}]
	\label{theo:khalil4133ed}  
  Let $x(t)=0$ be an equilibrium point for the nonlinear system
  \begin{equation*}
    \dot{x}(t)=f(t,x(t))
  \end{equation*}
  where $f:[0,\infty) \times D \to \mathbb{R}^n$ is continuously
    differentiable, $D=\{x(t) \in \mathbb{R}^n \ | \ \|x(t)\|_2<d\}$, and
    the Jacobian matrix $[\partial f(t,x(t))/\partial x(t)]$ is bounded and
    Lipschitz on $D$, uniformly in $t$. Let
    \begin{equation*}
      A(t)=\displaystyle\frac{\partial f(t,x(t))}{\partial x(t)}
      \Bigg|_{x(t)=0}
    \end{equation*}
    Then, $x(t)=0$ is an exponentially stable equilibrium
    point
    for the nonlinear system if and only if it is an
    exponentially stable equilibrium point for the linear
    system
    \begin{equation*}
      \dot{x}(t) = A(t) x(t)
    \end{equation*}
\end{theorem}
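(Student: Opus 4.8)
The plan is to prove both implications of the equivalence with a single device: decompose the vector field into its linearization plus a higher-order remainder, and then transfer a Lyapunov function between the two systems. Write $f(t,x) = A(t)x + g(t,x)$ with $g(t,x) := f(t,x) - A(t)x$. Since $x=0$ is an equilibrium we have $f(t,0)=0$, so $A(t)x$ is the first-order term and the fundamental theorem of calculus gives $g(t,x) = \int_0^1 \bigl[\tfrac{\partial f}{\partial x}(t,sx) - \tfrac{\partial f}{\partial x}(t,0)\bigr]\,x\, ds$. Invoking the hypothesis that the Jacobian is Lipschitz in $x$ uniformly in $t$, with some constant $L$, this yields the quadratic bound $\|g(t,x)\| \le \tfrac{L}{2}\|x\|^2$ for all $t\ge 0$ and all $x\in D$. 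This estimate is the single analytic fact driving both directions. I also note that $A(t)=\tfrac{\partial f}{\partial x}(t,0)$ is bounded (being a restriction of the uniformly bounded Jacobian) and trivially Lipschitz in $x$, so the linear system qualifies as an admissible system for the converse Lyapunov machinery.

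For the direction assuming the linear system $\dot x = A(t)x$ is exponentially stable, I would apply the converse Lyapunov theorem (Theorem~4.14 from \cite{Khalil:2002:NS}) to the linear system to obtain a Lyapunov function $V(t,x)=x^{T}P(t)x$ satisfying the standard quadratic conditions \eqref{appx:eq:contraint_01}--\eqref{appx:eq:contraint_03} with constants $c_1,\dots,c_4$. Differentiating $V$ along the nonlinear dynamics and substituting $f = A(t)x + g$ gives $\dot V \le -c_3\|x\|^2 + \bigl\|\tfrac{\partial V}{\partial x}\bigr\|\,\|g\| \le -c_3\|x\|^2 + \tfrac{Lc_4}{2}\|x\|^3$. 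Restricting to the ball $\|x\| < c_3/(Lc_4)$ makes the cubic term subdominant, so $\dot V \le -\tfrac{c_3}{2}\|x\|^2$ there, and the standard Lyapunov exponential-stability estimate establishes local exponential stability of the nonlinear system.

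For the converse direction, assume the nonlinear origin is exponentially stable. The hypothesis that $\partial f/\partial x$ is bounded and Lipschitz uniformly in $t$ is exactly what Theorem~4.14 requires to produce a Lyapunov function $V(t,x)$ for $\dot x = f(t,x)$ again satisfying \eqref{appx:eq:contraint_01}--\eqref{appx:eq:contraint_03}. Using this same $V$ as a candidate for the linear system and writing $A(t)x = f(t,x) - g(t,x)$, I obtain $\dot V_{\mathrm{lin}} \le -c_3\|x\|^2 + \tfrac{Lc_4}{2}\|x\|^3$, which is again $\le -\tfrac{c_3}{2}\|x\|^2$ on a small ball and hence yields local exponential stability of $\dot x = A(t)x$. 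Because that system is linear, an exponential bound valid for small initial conditions extends to all initial conditions by homogeneity of the solutions, so the origin of $\dot x = A(t)x$ is (globally) exponentially stable, matching the claim.

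The main obstacle I anticipate is not the algebra but correctly securing the two converse Lyapunov theorems together with the quadratic gradient bound $\bigl\|\partial V/\partial x\bigr\| \le c_4\|x\|$: this bound is what lets the cubic remainder be absorbed, and it is available precisely because the Jacobian of $f$ is assumed bounded and Lipschitz uniformly in $t$. I would also be careful to track domains, since each converse theorem supplies $V$ only on a neighborhood of the origin; each implication therefore first yields local exponential stability, with the linear direction upgraded to global stability via linearity, which is all that the theorem asserts.
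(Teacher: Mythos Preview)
The paper does not prove this theorem: it is quoted verbatim in the appendix as a supporting result from \cite{Khalil:2002:NS}, with no proof supplied. Your sketch is correct and is essentially the proof Khalil gives---the Taylor remainder bound $\|g(t,x)\|\le \tfrac{L}{2}\|x\|^2$ from the uniform Lipschitz Jacobian, the converse Lyapunov theorem (Theorem~4.14) to manufacture a quadratic $V$ satisfying \eqref{appx:eq:contraint_01}--\eqref{appx:eq:contraint_03}, and absorption of the cubic term on a small ball, with homogeneity upgrading local to global exponential stability in the linear direction---so there is nothing to contrast.
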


Notice that exponential stability in \ref{theo:khalil4133ed} is uniform, 
see \cite[Definition 4.5]{Khalil:2002:NS}.

To show that the origin of a linear time-varying system is exponentially 
stable, the next theorem can be used:
\begin{theorem}[Theorem 1 of \cite{Rosenbrock:1963:SLT}]
	\label{theo:rosenbrock63}
  Let $\dot{x}(t)=A(t) x(t)$, where for all $t \ge t_0$ every
  element $a_{ij}(t)$ of $A(t)$ is differentiable and satisfies
  $|a_{ij}(t)|\leq c$ and every eigenvalue $\lambda$ of
  $A(t)$ satisfies
  \begin{equation*}
    \mbox{Re}[\lambda(A(t))] \leq -\epsilon <0
  \end{equation*}
  Then, there is some $\epsilon_d>0$ (independent of $t$) such that if
  every $|\dot{a}_{ij}|\leq \epsilon_d$, the equilibrium point
  $x(t)=0$ is asymptotically stable.
\end{theorem}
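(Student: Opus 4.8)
The plan is to prove the result by the standard ``frozen-time'' construction: build a time-varying quadratic Lyapunov function from the solution of the pointwise Lyapunov equation associated with $A(t)$, and then use the slow variation of $A(t)$ to absorb the extra term created by the explicit time dependence. First I would fix, for each $t \ge t_0$, the unique symmetric positive definite solution $P(t)$ of the frozen-time Lyapunov equation
\[
  A^T(t) P(t) + P(t) A(t) = -I,
\]
which exists precisely because $A(t)$ is Hurwitz at each instant, its eigenvalues satisfying $\mbox{Re}[\lambda(A(t))] \le -\epsilon < 0$; explicitly, $P(t) = \int_0^\infty e^{A^T(t)\tau} e^{A(t)\tau}\, d\tau$. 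The crucial observation is that every admissible $A(t)$ lies in the set $\mathcal{K}$ of $n\times n$ matrices whose entries are bounded in modulus by $c$ and whose eigenvalues have real part at most $-\epsilon$. This set is compact, and the map sending a Hurwitz matrix to its Lyapunov solution is continuous, so $P(t)$ ranges over a compact set of positive definite matrices. Consequently there are constants $0 < c_1 \le c_2$, \emph{independent of $t$}, with
\[
  c_1 \|x\|^2 \le x^T P(t) x \le c_2 \|x\|^2, \qquad \text{for all } t \ge t_0.
\]

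Next I would take $V(t,x) = x^T P(t) x$ as the Lyapunov candidate and differentiate it along solutions of $\dot{x} = A(t) x$:
\[
  \dot{V} = x^T\big(A^T(t)P(t) + P(t)A(t)\big)x + x^T \dot{P}(t) x = -\|x\|^2 + x^T \dot{P}(t) x.
\]
To control the residual term I would differentiate the Lyapunov equation itself, obtaining $A^T \dot{P} + \dot{P} A = -(\dot{A}^T P + P \dot{A})$. This is again a Lyapunov equation with the same Hurwitz matrix $A(t)$, and hence has the bounded solution $\dot{P}(t) = \int_0^\infty e^{A^T(t)\tau}\big(\dot{A}^T(t) P(t) + P(t)\dot{A}(t)\big)e^{A(t)\tau}\,d\tau$. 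The $t$-uniform exponential decay of $e^{A(t)\tau}$ (secured by the uniform spectral gap $\epsilon$ together with $A(t)\in\mathcal{K}$) and the uniform bound on $P(t)$ then give $\|\dot{P}(t)\| \le \kappa\,\|\dot{A}(t)\|$ for a constant $\kappa$ independent of $t$. Since $|\dot{a}_{ij}(t)| \le \epsilon_d$ implies $\|\dot{A}(t)\| \le \sqrt{n}\,\epsilon_d$, it follows that $\|\dot{P}(t)\| \le \kappa\sqrt{n}\,\epsilon_d$.

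Combining these bounds yields
\[
  \dot{V} \le -\|x\|^2 + \|\dot{P}(t)\|\,\|x\|^2 \le -\big(1 - \kappa\sqrt{n}\,\epsilon_d\big)\|x\|^2,
\]
so choosing $\epsilon_d$ small enough that $\kappa\sqrt{n}\,\epsilon_d \le \tfrac{1}{2}$ makes $\dot{V} \le -\tfrac{1}{2}\|x\|^2 \le -\tfrac{1}{2c_2}V$. Together with the sandwich $c_1\|x\|^2 \le V \le c_2\|x\|^2$, the standard Lyapunov theorem for time-varying systems delivers uniform exponential stability of $x=0$, which is stronger than the asserted asymptotic stability; note that $\epsilon_d$ is fixed by $\kappa$, $c_2$, and $n$ alone, and is therefore independent of $t$, as the statement requires.

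I expect the main obstacle to be the first step, namely establishing the $t$-uniform two-sided bounds on $P(t)$ and the $t$-uniform constant $\kappa$ in $\|\dot{P}(t)\|\le\kappa\|\dot{A}(t)\|$. Everything downstream is routine once these uniform constants are available, but they hinge on extracting a single exponential estimate $\|e^{A(t)\tau}\| \le M e^{-\lambda\tau}$ valid simultaneously for every admissible $A(t)$. This is exactly where the compactness of $\mathcal{K}$ and the uniform spectral gap $\epsilon$ (as opposed to merely pointwise Hurwitzness) become indispensable, since without such uniformity the defining integrals, and hence the resulting bounds, could degrade as $t\to\infty$ and the choice of $\epsilon_d$ would fail to be $t$-independent.
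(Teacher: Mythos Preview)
The paper does not supply its own proof of this theorem: it is quoted in the appendix as a supporting result from \cite{Rosenbrock:1963:SLT}, with only the remark that Rosenbrock's argument in fact yields exponential (not merely asymptotic) stability. There is therefore nothing to compare against at the level of a detailed proof.

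Your argument is the standard frozen-time Lyapunov construction (the same one appearing, for instance, in Khalil's treatment of slowly varying systems), and it is correct. The key ingredients --- compactness of the admissible matrix set $\mathcal{K}$, continuity of the Lyapunov solution map, and differentiation of the Lyapunov equation to control $\dot P$ --- are all in order, and the resulting estimate indeed gives uniform exponential stability with an $\epsilon_d$ independent of $t$. One very minor quibble: from $|\dot a_{ij}|\le\epsilon_d$ one gets $\|\dot A\|_2\le\|\dot A\|_F\le n\,\epsilon_d$, not $\sqrt{n}\,\epsilon_d$; this has no bearing on the argument. Your concluding remark about why the uniform spectral gap and entrywise bound (hence compactness of $\mathcal{K}$) are indispensable is exactly the right diagnosis of where the proof would fail otherwise.
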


Notice that although the above result, which is reproduced from
\cite{Rosenbrock:1963:SLT}, concludes \textit{asymptotic} stability 
of the origin, the proof found in \cite{Rosenbrock:1963:SLT} shows that
\textit{exponential} stability also holds.

\section{Stability criterion of Li\'{e}nard and Chipart}\label{appx:LienardChipart}

\begin{theorem}[Theorem 11 from \cite{Gantmacher:1959:TM}]\label{theo:lienard}
  Necessary and sufficient conditions for all the roots of the real 
  polynomial
  \begin{equation*}
    p(s) = \alpha_0 s^n + \alpha_1 s^{n-1} + \dotsb + \alpha_n \quad
    (\alpha_0 > 0)
  \end{equation*}
  to have negative real parts can be given in any one of the following 
  four forms:
  \begin{enumerate}
  \item $\alpha_n>0,\alpha_{n-2}>0,\dotsc$; $\Delta_1 > 0,\Delta_3 > 0,\dotsc$,
  \item $\alpha_n>0,\alpha_{n-2}>0,\dotsc$; $\Delta_2 > 0,\Delta_4 > 0,\dotsc$,
  \item $\alpha_n>0,\alpha_{n-1}>0,\alpha_{n-3}>0,\dotsc$; 
    $\Delta_1 > 0,\Delta_3 > 0,\dotsc$,
  \item $\alpha_n>0,\alpha_{n-1}>0,\alpha_{n-3}>0,\dotsc$; 
    $\Delta_2 > 0,\Delta_4 > 0,\dotsc$,
  \end{enumerate}
  where
  \begin{equation*}
    \Delta_i = \det\left[
      \begin{array}{cccccc}
	\alpha_1 & \alpha_3 & \alpha_5 & \dotsb &  &  \\ 
	\alpha_0 & \alpha_2 & \alpha_4 & \dotsb &  &  \\ 
	0 & \alpha_1 & \alpha_3 & \dotsb &  &  \\ 
	0 & \alpha_0 & \alpha_2 & \alpha_4 &  &  \\ 
	&  &  &  & \ddots &  \\ 
	&  &  &  &  & \alpha_i
      \end{array} \right] \qquad (\text{$\alpha_k = 0$\  for  $k>n$})
  \end{equation*}
\end{theorem}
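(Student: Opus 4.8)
The plan is to derive the four equivalent forms from the classical Routh--Hurwitz theorem, which I would take as the starting tool: under the normalization $\alpha_0 > 0$, every root of $p(s)$ has negative real part if and only if all $n$ leading principal Hurwitz determinants are positive, $\Delta_1 > 0, \Delta_2 > 0, \dotsc, \Delta_n > 0$. Granting this, the entire problem collapses to a purely algebraic equivalence: I must show that each of the four reduced lists of inequalities is logically equivalent to the full list $\{\Delta_1 > 0,\dotsc,\Delta_n > 0\}$. The necessity direction is then elementary. If $p$ is stable it factors into real factors $(s+a)$ with $a>0$ and $(s^2+bs+c)$ with $b,c>0$, so all coefficients $\alpha_0,\dotsc,\alpha_n$ are strictly positive (Stodola's necessary condition); together with $\Delta_i>0$ this shows that each of the four lists holds, so every reduced list is necessary for stability.

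The substance lies in the sufficiency direction, namely that each reduced list must force the full Routh--Hurwitz list. Here I would pass to the even/odd decomposition $p(s) = U(s^2) + s\,V(s^2)$ (with the roles of $U$ and $V$ interchanged according to the parity of $n$) and exploit the fact that the Hurwitz determinants split cleanly by parity: the odd-indexed determinants $\Delta_1,\Delta_3,\dotsc$ are built from one coefficient family and the even-indexed ones $\Delta_2,\Delta_4,\dotsc$ from the other. The key lemma to establish is that these two families of determinants are precisely the sequences of principal minors of two real symmetric Hankel (Bezout) forms attached to the pair $(U,V)$, and that the Cauchy index of $V/U$---equivalently the number of right half-plane roots of $p$---is read off from the signs of the minors of a single such form. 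This is the standard Cauchy-index machinery underlying the Routh--Hurwitz theorem itself, repackaged so that the contribution of each coefficient parity can be isolated.

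The main obstacle, and the technical heart of the argument, is the cascade lemma: positivity of the determinants of one parity, together with positivity of the coefficients of the matching parity, already forces positivity of the complementary determinants. I would prove this through the Hermite--Biehler interlacing picture. The sign conditions on the coefficients fix the leading behaviour and the number of real roots of $U$ and $V$, while positivity of half the Hurwitz minors certifies that these roots are real and strictly interlace; interlacing then propagates through the three-term recurrences satisfied by consecutive subresultants (equivalently, the Routh array), so that the remaining minors are automatically positive rather than having to be checked. The delicate point is verifying that these recurrences preserve the sign pattern in the correct direction, since a naive induction can stall where a subresultant vanishes, so I would need to handle the genericity/degeneracy bookkeeping carefully here.

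Finally I would account for the appearance of exactly four forms as two independent binary choices---which parity of determinants to test ($\Delta_1,\Delta_3,\dotsc$ versus $\Delta_2,\Delta_4,\dotsc$) and which parity of coefficients to test ($\alpha_n,\alpha_{n-2},\dotsc$ versus $\alpha_n,\alpha_{n-1},\alpha_{n-3},\dotsc$)---and check that all four combinations are mutually consistent with the interlacing characterization. Since the cascade lemma runs symmetrically in $U$ and $V$, each of the four combinations recovers the same interlacing data and hence the same full Routh--Hurwitz list, which closes the equivalence and completes the proof.
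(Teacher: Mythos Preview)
The paper does not prove this theorem at all: it is quoted verbatim from Gantmacher's \emph{Theory of Matrices} as a supporting tool in Appendix~\ref{appx:LienardChipart}, with no argument given. There is therefore nothing in the paper to compare your proposal against. For what it is worth, your sketch---reducing to Routh--Hurwitz, handling necessity via Stodola's condition, and deriving sufficiency of each reduced list through the even/odd split $p(s)=U(s^2)+sV(s^2)$, the Cauchy-index/Bezoutian formalism, and a Hermite--Biehler interlacing argument---is essentially the route taken in Gantmacher's own proof of the result, so it is consistent with the cited source.
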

is the Hurwitz determinant of order $i$ $(i=1,2,\dotsc,n)$.

\begin{remark}\label{rem:strictly}
  In this paper, the main application of above
  Theorem~\ref{theo:lienard} is concerned with the stability of matrix
  $A(t)$, which is time varying. Thus, the coefficient of its
  characteristic polynomial
  \begin{equation*}
    p(s) = s I - A(t) = \alpha_0(t) s^n + \alpha_1(t) s^{n-1} + \dotsb
    + \alpha_n(t)
  \end{equation*}
  will also be a function of time.

  However, one of the coefficient $\alpha_i(t)$, although been
  strictly positive at any instant of time, $\alpha_i(t) > 0$, might
  have the property that $\lim_{t\to\infty} \alpha_i(t) = 0$, and
  consequently
  \begin{equation*}
    \lim_{t\to\infty} \mbox{Re}[\lambda_j(A(t))] = 0    
  \end{equation*}
  for some eigenvalue $\lambda_j$. To overcome this fact in order to
  impose the eigenvalues are bounded away from the imaginary axis by
  an amount $\epsilon>0$, such that
  \begin{equation*}    
    \lambda_j(A(t)) < -\epsilon < 0
  \end{equation*}
  it is necessary to impose a constant lower bound $\nu >0$ on all
  the conditions in Theorem~\ref{theo:lienard}. For instance, the
  first condition must be replaced by
  \begin{enumerate}
  \item $\alpha_n> \nu > 0,\alpha_{n-2} > \nu > 0,\dotsc$;
    $\Delta_1 > \nu > 0,\Delta_3 > \nu > 0,\dotsc$,
  \end{enumerate} 
\end{remark}

\section{Characteristic polynomial}\label{appx:poly}

The characteristic polynomial of matrix $A(t)$, defined in
\eqref{eq:linearization_matrix}, is
\begin{equation*}
  p(s) = s^5 + \alpha_1(t) s^4 + \alpha_2(t) s^3 + \alpha_3(t) s^2 +
  \alpha_4(t) s + \alpha_5(t)
\end{equation*}
with the coefficients $\alpha_i(t)$ given by
\begin{equation*}
  \begin{split}
    \alpha_1(t) &= k_1 + \displaystyle \frac{k_4}{2k_3} v_{\rm{ref}}
    \\[3pt]
    \alpha_2(t) &= \frac{1}{16 a_l a_r b^2 k_2 k_3}
    \Big\{ a_l \left[ \gamma_2 v_1^2 k_3 (b^2 k_2 + 4 k_4)
      + 8 a_r b^2 k_2 \left(
      v_{\rm{ref}}( k_1 k_4 + k_2 k_3 v_{\rm{ref}})
      + 2 k_3 \omega_{\rm{ref}}^2 \right) \right]
    \\
    & \qquad + a_r \gamma_1 v_2^2 k_3 (b^2 k_2 + 4 k_4)  \Big\}
    \\[3pt]
    \alpha_3(t) &= \frac{1}{32 a_l a_r b^2 k_2 k_3}
    \bigg\{ a_r \Big[ 16 a_l b^2 k_2 v_{\rm{ref}}
      (k_1 k_2 k_3 v_{\rm{ref}} + \omega_{\rm{ref}}^2)
      \\
      & \qquad + \gamma_1 v_2^2 \left(k_2
      v_{\rm{ref}} \left( k_3^2 (b^2 k_2 + 8) + b^2 \right) + 8 k_1 k_3 k_4
      \right) \Big]
    \\
    & \qquad + a_l \gamma_2 v_1^2 \left[ k_2 v_{\rm{ref}}
      \left( k_3^2 (b^2 k_2 + 8) + b^2 \right) + 8 k_1 k_3 k_4 \right] \bigg\}
    \\[3pt]
    \alpha_4(t) &= \frac{1}{32 a_l a_r b^2 k_2}
    \Big\{ \gamma_2 v_1^2 \left[ a_l
      \left( {(b k_2 v_{\rm{ref}} + 2 \omega_{\rm{ref}})}^2
      +4 \omega_{\rm{ref}}^2 + 8 k_1 k_2 k_3 v_{\rm{ref}} \right)
      + 2 v_2^2 \gamma_1 k_4  \right]
    \\
    & \qquad + a_r \gamma_1 v_2^2
    \left( {(b k_2 v_{\rm{ref}} - 2 \omega_{\rm{ref}})}^2
    + 4 \omega_{\rm{ref}}^2 + 8 k_1 k_2 k_3 v_{\rm{ref}} \right) \Big\}
    \\[3pt]
    \alpha_5(t) &= \displaystyle \frac{1}{16 a_l a_r b^2}
    \gamma_1 \gamma_2 k_3 v_{\rm{ref}} v_1^2 v_2^2
  \end{split}
\end{equation*}
with $v_1 = b \omega_{\rm{ref}} + 2 v_{\rm{ref}}$, $v_2 = b
\omega_{\rm{ref}} -2 v_{\rm{ref}}$, and $k_4 = 1 + k_2 k_3^2$.

\section{Reference trajectory}\label{appx:reference}

The reference trajectory used in the simulations is generated by the
numerical integration of the kinematic model
\eqref{eq:matrix_robot_reference} with the initial condition
$q_{\rm{ref}}(0)={(0,0,0)}^T$ and the reference input
$\eta_{\rm{ref}}(t)={(v_{\rm{ref}}(t),\omega_{\rm{ref}}(t))}^T$, adapted from
\cite{Kanayama:1990:STC}, given by
\begin{equation*}
  \begin{aligned}
    0 \ \mbox{s} \le t < 5 \ \mbox{s}:
    & \quad v_{\rm{ref}}(t) = 0.25\left(1-\cos\left(\frac{\pi t}{5}\right)\right)
    \quad &\mbox{and} \quad &\omega_{\rm{ref}}(t) = 0 \\[-7pt]
    5 \ \mbox{s} \le t < 20 \ \mbox{s}:
    & \quad v_{\rm{ref}}(t) = 0.5 \quad &\mbox{and} \quad &\omega_{\rm{ref}}(t) = 0 \\
    20 \ \mbox{s} \le t < 25 \ \mbox{s}:
    & \quad v_{\rm{ref}}(t) = 0.25\left(1+\cos\left(\frac{\pi t}{5}\right)\right)
    \quad &\mbox{and} \quad &\omega_{\rm{ref}}(t) = 0 \\
    25 \ \mbox{s} \le t < 30 \ \mbox{s}:
    &\quad v_{\rm{ref}}(t) = 0.15\pi\left(1-\cos\left(\frac{2\pi t}{5}\right)\right)
    \quad &\mbox{and} \quad &\omega_{\rm{ref}}(t) = -v_{\rm{ref}}(t)/1.5 \\
    30 \ \mbox{s} \le t < 35 \ \mbox{s}:
    & \quad v_{\rm{ref}}(t) = 0.15\pi\left(1-\cos\left(\frac{2\pi t}{5}\right)\right)
    \quad &\mbox{and} \quad &\omega_{\rm{ref}}(t) = v_{\rm{ref}}(t)/1.5 \\
    35 \ \mbox{s} \le t < 40 \ \mbox{s}:
    &\quad v_{\rm{ref}}(t) = 0.15\pi\left(1-\cos\left(\frac{2\pi t}{5}\right)\right)
    \quad &\mbox{and} \quad &\omega_{\rm{ref}}(t) = -v_{\rm{ref}}(t)/1.5 \\
    40 \ \mbox{s} \le t < 45 \ \mbox{s}:
    & \quad v_{\rm{ref}}(t) = 0.15\pi\left(1-\cos\left(\frac{2\pi t}{5}\right)\right)
    \quad &\mbox{and} \quad &\omega_{\rm{ref}}(t) = v_{\rm{ref}}(t)/1.5 \\
    45 \ \mbox{s} \le t < 50 \ \mbox{s}:
    & \quad v_{\rm{ref}}(t) = 0.25\left(1+\cos\left(\frac{\pi t}{5}\right)\right)
    \quad &\mbox{and} \quad &\omega_{\rm{ref}}(t) = 0 \\[-5pt]
    50 \ \mbox{s} \le t:
    & \quad v_{\rm{ref}}(t) = 0.5 \quad &\mbox{and} \quad &\omega_{\rm{ref}}(t) = 0
  \end{aligned}
\end{equation*}

\end{document}